\begin{document}

%

%

\twocolumn[

\aistatstitle{}

\aistatsauthor{ Author 1 \And Author 2 \And  Author 3 }

\aistatsaddress{ Institution 1 \And  Institution 2 \And Institution 3 } ]

\begin{abstract}

\end{abstract}


\section{INTRODUCTION}\label{sec:intro}


Consider an idealized content reviewing task in a large social media firm, where the objective is to identify  harmful content that violates the platforms' community standards. Given the large volume of content generated on a daily basis, it may not be possible to ask human reviewers to provide a thorough assessment of each piece of content. For this reason, the platform may automatically assign a badness score for each piece of content depending on their estimated level of severity. For example, a hate speech related post may be assigned a higher badness score in comparison to a click bait post. The content with higher badness score may then be prioritized for human review, which eventually leads to what we can consider as a ``ground-truth'' evaluation of the severity of the content. The more accurate the badness score is in predicting the actual severity, the higher the chance that harmful content is passed for human review and properly identified. In practice, the badness score may be obtained by aggregating predictions returned by different automatic systems (e.g., rule-based, ML-based systems). For instance, the platform could rely on NLP-based classifiers for hostile speech detection, or CV-based classifiers for graphic images. As such, it is crucial to properly calibrate the predictions returned by each of these classifiers to ensure that the scores can be compared meaningfully and then return an aggregate and reliable badness score that correctly prioritizes the most harmful content for human review.
The problem sketched before\footnote{Similar problems are patient prioritization in hospitals~\citep{Dry2019PatientPT}, credit scoring~\citep{provenzano2020machine}, and resume review~\citep{li2020competencelevel}.} can be seen as an instance of the multi-armed bandit (MAB) framework, where each piece of content is an arm and the objective of the bandit algorithm is to select arms/content with the higher reward (e.g., severity). The algorithm can rely on the estimations returned by a set of \emph{evaluators} (e.g., a set of classifiers) to decide which arm to pull at each step (e.g., content to pass to human review). This setting can be formalized using a number of existing frameworks, such as MAB with expert advice, contextual bandit, bandit with side observation, and contextual bandit with noisy context. We postpone a thorough discussion about these models to Sect.~\ref{sec:related.work}.

In this paper, we consider the case where evaluators are noisy and possibly biased functions of the true reward of each arm. In particular, we consider two alternative settings, where evaluations are generated according to: \textbf{1)} a noisy generalized linear function of the true reward; \textbf{2)} a noisy linear function of the true reward. In both cases, we first define an ``oracle'' strategy that have prior knowledge of the evaluation function, including the noise distribution, and it is designed to maximize the rewards of the arms chosen at each round given the evaluations provided as input. We then devise the most suitable MAB strategies to approach the oracle's performance over time. In the first case, we show that one has to rely on an $\epsilon$-greedy strategy to avoid dependencies between the evaluations observed over time and the decisions taken by the algorithm. This eventually leads to a regret w.r.t.\ the oracle of order $\widetilde{O}(T^{2/3})$ over $T$ rounds. On the other hand, if the evaluation functions are linear and the variance of the additive noise is known, we show that a simple greedy strategy leveraging the specific structure of the problem is able to recover a $\widetilde{O}(\sqrt{T})$ regret. We then validate these results in a number of experiments. We first consider synthetic problems where we carefully design the MAB instances to support our theoretical findings and to compare to alternative approaches. Then we move to problems based on real data, where our assumptions may not be verified, to provide a more thorough evaluation of performance and robustness of our approach. Notably, we study a problem related to content review prioritization for integrity in social media platforms. 


\section{PRELIMINARIES}\label{sec:preliminaries}

We consider a multi-armed bandit problem where, at each round $t$, the learner is provided with a set of $K_t > K \geq 1$ arms (e.g., the content to be reviewed at time $t$%
) characterized by a reward $r_{i,t}\in\R$ for each $i=1,\ldots,K_t$ (e.g., the badness score%
). While the true reward is unknown to the learner, $J$ evaluators return noisy, possibly biased, evaluations $f_{i,t,j}$ for each arm (e.g., different rule-based and/or ML-classifiers
). At the beginning of each round, the learner receives the evaluations $\{f_{i,t,j}\}$, it returns a set $\gA_t\subseteq \{1,\ldots,K_t\}$ of $K$ arms (i.e., $|\gA_t|=K$), and it observes their associated rewards $r_{i,t}$ for $i\in\gA_t$.\footnote{For the sake of simplicity, we consider that the learner receives the exact reward $r_{i,t}$, but all our results can be adapted to the case of noisy feedback.} The learner's objective is to accumulate as much reward as possible over $T$ rounds by selecting the $K$ arms with larger rewards.

Without any further assumption, this problem is not tractable since the rewards may change arbitrarily over time and the evaluations may not be predictive of the true rewards, thus making it impossible for any learner to achieve a satisfactory performance. Throughout the paper, we make a series of assumptions to make the problem solvable. We start from the  rewards.

\begin{assumption}\label{asm:rewards}
	The rewards $r_{i,t}$ of each arm $i=1,\ldots,K_t$ at round $t=1,\ldots,T$ are drawn i.i.d.\ from a common distribution $\nu$ supported on $[0,C]$ with $C$ a positive constant. The number of arms $K_t$ at each round $t$ is arbitrary and $K < K_t \leq K_{\max} < \infty$.
\end{assumption}

While this assumption simplifies the treatment of the problem, it does not affect the objective of the learner, which is to select the top-K arms \emph{at each round}, i.e., for the specific realizations $\{r_{i,t}\}$. 
For instance, for $K=1$, the objective is to return the arm $i_t^\star = \arg\max_{i=1,\ldots,K_t} r_{i,t}$. We do not assume that the learner has any prior knowledge of the distribution $\nu$. 

In general, the evaluators may rely on some contextual information $x_{i,t}$ associated to each arm $i$ (e.g., texts, images, meta-data related to the piece of content) to return their evaluation $f_{i,t,j}$ and their accuracy in predicting the true reward $r_{i,t}$ may vary depending on the evaluator and the specific context $x_{i,t}$. Nonetheless, we assume that the learner has no access to  the context or the actual mechanism that generates the evaluations (e.g., the evaluators may be external services) and we rather rely on the following model to describe how the evaluations are generated
\begin{align}\label{eq:gen.model}
	f_{i,t,j} = f_j(r_{i,t}) + \epsilon_{i,t,j}, \enspace j = 1,\ldots,J,
\end{align}
where $f_j:\R\rightarrow\R$ is the evaluation function, and $\epsilon_{i,t,j}$ is a stochastic error.
This general formulation can be seen as the inverse of a calibration function, as it describes the intrinsic bias of each evaluator $j$ and the noise associated to the evaluations of the true reward. See Fig.~\ref{foot:evaluation.function} for a qualitative illustration of this model. We assume the noise in the evaluations satisfy a rather mild assumption.\footnote{A similar assumption is used in~\citep{yun2017contextual}, where the covariance matrix of the distribution generating the noisy features is assumed to be known to the learning.}

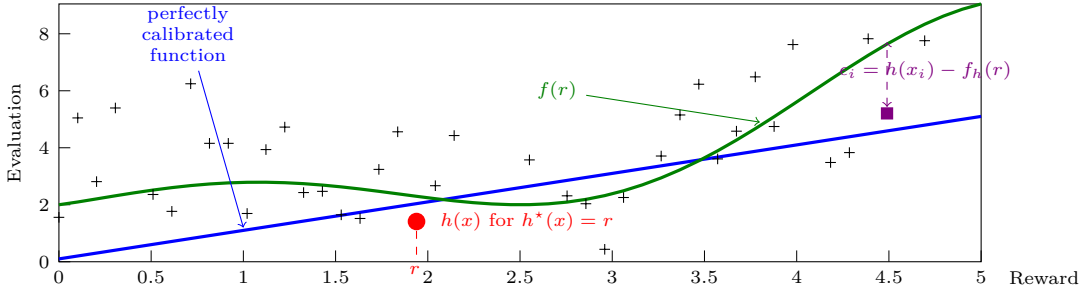
\begin{figure}[t]
\centering
\begin{tikzpicture}

\definecolor{color0}{rgb}{0.501960784313725,0,0.501960784313725}

\begin{axis}[
tick pos=left,
x grid style={white!69.0196078431373!black},
xlabel={Reward},
xmin=0, xmax=5,
xtick style={color=black},
y grid style={white!69.0196078431373!black},
ylabel={Evaluation},
ymin=0, ymax=9.04771365852811,
ytick style={color=black},
width=.9\columnwidth,
height=5cm,
clip=false,
font=\scriptsize,
x label style={at={(axis cs:5.1,0)},anchor=north west},
]
\addplot [draw=black, fill=black, mark=+, only marks]
table{%
x  y
0 1.55825554601899
0.102040816326531 5.04715262305068
0.204081632653061 2.81159258198463
0.306122448979592 5.39612407437904
0.510204081632653 2.35985039131402
0.612244897959184 1.76808878167125
0.714285714285714 6.24595074387002
0.816326530612245 4.15541999527149
0.918367346938776 4.15498269514821
1.02040816326531 1.693217476795
1.12244897959184 3.93649437289228
1.22448979591837 4.72496751500709
1.3265306122449 2.42800615204831
1.42857142857143 2.4701782457537
1.53061224489796 1.64626008966942
1.63265306122449 1.51791037445779
1.73469387755102 3.24095212674181
1.83673469387755 4.56016033896967
1.93877551020408 1.41204217773394
2.04081632653061 2.6681365450304
2.14285714285714 4.42847021608788
2.55102040816327 3.57402741963394
2.75510204081633 2.31375580262556
2.85714285714286 2.03558685179281
2.95918367346939 0.436430994512748
3.06122448979592 2.25402486850015
3.26530612244898 3.7101445105084
3.36734693877551 5.15107733320291
3.46938775510204 6.2292105803502
3.57142857142857 3.60823416643001
3.6734693877551 4.58153026052869
3.77551020408163 6.48681122758252
3.87755102040816 4.74547308379698
3.97959183673469 7.62157454361269
4.18367346938776 3.48338714205722
4.28571428571429 3.83037033380342
4.38775510204082 7.82655271125471
4.48979591836735 5.2055655115197
4.69387755102041 7.75707165070733
};
\addplot [very thick, blue]
table {%
0 0.1
0.102040816326531 0.202040816326531
0.204081632653061 0.304081632653061
0.306122448979592 0.406122448979592
0.408163265306122 0.508163265306122
0.510204081632653 0.610204081632653
0.612244897959184 0.712244897959184
0.714285714285714 0.814285714285714
0.816326530612245 0.916326530612245
0.918367346938776 1.01836734693878
1.02040816326531 1.12040816326531
1.12244897959184 1.22244897959184
1.22448979591837 1.32448979591837
1.3265306122449 1.4265306122449
1.42857142857143 1.52857142857143
1.53061224489796 1.63061224489796
1.63265306122449 1.73265306122449
1.73469387755102 1.83469387755102
1.83673469387755 1.93673469387755
1.93877551020408 2.03877551020408
2.04081632653061 2.14081632653061
2.14285714285714 2.24285714285714
2.24489795918367 2.34489795918367
2.3469387755102 2.4469387755102
2.44897959183673 2.54897959183673
2.55102040816327 2.65102040816327
2.6530612244898 2.7530612244898
2.75510204081633 2.85510204081633
2.85714285714286 2.95714285714286
2.95918367346939 3.05918367346939
3.06122448979592 3.16122448979592
3.16326530612245 3.26326530612245
3.26530612244898 3.36530612244898
3.36734693877551 3.46734693877551
3.46938775510204 3.56938775510204
3.57142857142857 3.67142857142857
3.6734693877551 3.7734693877551
3.77551020408163 3.87551020408163
3.87755102040816 3.97755102040816
3.97959183673469 4.07959183673469
4.08163265306122 4.18163265306122
4.18367346938776 4.28367346938775
4.28571428571429 4.38571428571429
4.38775510204082 4.48775510204082
4.48979591836735 4.58979591836735
4.59183673469388 4.69183673469388
4.69387755102041 4.79387755102041
4.79591836734694 4.89591836734694
4.89795918367347 4.99795918367347
5 5.1
};
\addplot [very thick, green!50.1960784313725!black]
table {%
0 2
0.102040816326531 2.09706130312208
0.204081632653061 2.20549677958852
0.306122448979592 2.31441184631055
0.408163265306122 2.41900221232157
0.510204081632653 2.51556451028029
0.612244897959184 2.60100821118818
0.714285714285714 2.67275971208367
0.816326530612245 2.72876183761616
0.918367346938776 2.76750051675188
1.02040816326531 2.78803589996756
1.12244897959184 2.79002837188594
1.22448979591837 2.77375477081605
1.3265306122449 2.74011230198898
1.42857142857143 2.69060880423858
1.53061224489796 2.62733878299982
1.63265306122449 2.55294518091751
1.73469387755102 2.47056731429788
1.83673469387755 2.38377579875992
1.93877551020408 2.29649563701776
2.04081632653061 2.21291895127065
2.14285714285714 2.13740911267431
2.24489795918367 2.07439824902275
2.3469387755102 2.02828029642931
2.44897959183673 2.00330189872978
2.55102040816327 2.00345354717546
2.6530612244898 2.03236339102111
2.75510204081633 2.09319613592801
2.85714285714286 2.18855938169155
2.95918367346939 2.32041963462133
3.06122448979592 2.49003006488289
3.16326530612245 2.69787186815195
3.26530612244898 2.9436108378601
3.36734693877551 3.22607046381709
3.46938775510204 3.54322255055763
3.57142857142857 3.89219600054832
3.6734693877551 4.26930404014491
3.77551020408163 4.67008978710533
3.87755102040816 5.08938967504634
3.97959183673469 5.52141387015781
4.08163265306122 5.95984244645767
4.18367346938776 6.39793573545157
4.28571428571429 6.82865694154828
4.38775510204082 7.2448048228431
4.48979591836735 7.63915398422725
4.59183673469388 8.00460012182929
4.69387755102041 8.33430739935798
4.79591836734694 8.62185503189766
4.89795918367347 8.8613801040163
5 9.04771365852811
};
\addplot [semithick, red, mark=*, mark size=3, mark options={solid}]
table {%
1.93877551020408 1.41204217773394
};
\addplot [semithick, color0, mark=square*, mark size=2, mark options={solid}]
table {%
4.48979591836735 5.2055655115197
};
\node[green!50.1960784313725!black] (fr) at (axis cs:2.7,6) {$f(r)$};
\draw[->,green!50.1960784313725!black] (fr) -- (axis cs:3.8,4.9);

\node[blue, text width=2cm, align=center] (cf) at (axis cs:0.7,8) {perfectly calibrated function};
\draw[->, blue] (cf) -- (axis cs:1,1.2);

\node (rp) at (axis cs:1.9387755102040818, 1.412042177733937) {};
\node[anchor=west,red] at (axis cs:1.9387755102040818, 1.412042177733937) {~~$h(x)$ for $h^\star(x) = r$};
\draw[red, dashed] (rp) -- (axis cs:1.9387755102040818,0);
\node[red] at (axis cs: 1.93, -0.3) {$r$};

\node (di) at (axis cs:4.48979591836735, 5.1055655115197) {};
\draw[<->, dashed, color0] (di) -- (axis cs: 4.48979591836735, 7.7);
\node[color0, anchor=west] at (axis cs: 4.18, 6.7) {$\epsilon_{i} = h(x_i)-f_h(r)$};
\end{axis}

\end{tikzpicture}
\caption{\small As an illustrative example, consider the case where at each round $t$ each arm is associated to a context $x_{i,t}$ drawn from a context distribution $\rho$ and there exists a function generating the true rewards as $r_{i,t} = h^\star(x_{i,t})$. The distribution $\nu$ is then defined by the distribution on rewards $r = h^\star(x)$ induced by $x\sim\rho$. We also denote by $\rho|h^\star(x)=r$ the conditional distribution over contexts associated with reward $r$. Consider then a neural network $h$, trained on past context-reward pairs, that returns an evaluation for arm $i$ characterized by a context $x_{i,t}$ as $f_{i,t} = h(x_{i,t})$. The black crosses in the plot are the pairs $(h^\star(x_{i,t}), h(x_{i,t}))$. The evaluation function associated to the neural network $h$ is then defined as $f_h(r) = \mathbb{E}_{x\sim\rho|h^\star(x)=r} \big[ h(x)\big]$ (green line) and the noise $\epsilon$ is the deviation from $h(x)$ and $f(r)$ depending on the specific realization of $x$, i.e., $\epsilon_{i,t} = h(x_{i,t}) - f_h(r)$ for $r = h^\star(x_{i,t})$. The blue line illustrates the perfectly calibrated case, where $h^\star$ itself is used for prediction, in this case $f_{h^\star}(r) = r$. \label{foot:evaluation.function}}
\end{figure}

\begin{assumption}\label{asm:noise}
Each error $\epsilon_{i,t,j}$ is generated i.i.d.\ from a sub-Gaussian distribution with zero mean and parameter $\sigma_j$, assumed to be known to the learner.
\end{assumption}


As far as the evaluation function is concerned, we distinguish two settings.

\begin{assumption}[Generalized linear  setting]\label{asm:general.setting}
	We assume that each evaluation function is a generalized linear model w.r.t.\ the true reward, i.e., $f_{j}(r) = g(\alpha_{j}\cdot r)$, for all $j\leq J$, where $\alpha \in \mathbb{R}$ and
	$g$ is a strictly increasing function and twice- differentiable with $\| g'\|_{\infty}\leq L_{g}$ and $\| g''\|_{\infty} \leq M_{g}$, and
	$c_{g} := \inf_{x, \theta} g'(x\cdot \theta) > 0$. The function $g$ is known to the learner, while the evaluator-specific parameters $\alpha_j$ are unknown.
\end{assumption}

\begin{assumption}[Linear setting]\label{asm:affine.setting}
	We assume that each evaluation function is linear w.r.t.\ the true reward, i.e., $f_j(r) = \alpha_j r$, for all $j\leq J$. While the shape of the function is known to the learner, the evaluator-specific parameters $\alpha_j$ are unknown.
\end{assumption}

In the following we use $\alpha = (\alpha_1, \ldots, \alpha_J)\in\R^J$ and $\sigma = (\sigma_1, \ldots, \sigma_J)\in\R^J$. 
We use the standard notation $\|\cdot\|$ and $\|\cdot\|_\infty$ for the $\ell_2$ and the maximum norm respectively, while for any two vectors $x,y\in\R^J$, $x \cdot y \in \R^J$ denotes the component-wise product.

We consider a setting where the bandit algorithm has only access to the predictors’ evaluations of the true reward of an arm. This is a very generic scenario that encompasses the case where evaluations are
a function of a context characterizing an arm. The generality of our framework allows us to deal with problems where
the context is not directly observable (e.g., because it is kept private) or where it differs across evaluators. For example,
similarly to bandits with expert advice, evaluators may use very different context sources (e.g., visual information,
text, meta data) to build their predictions, but these are unknown to the bandit algorithm (e.g., because evaluators
are external services). The resulting model in Eq.~\ref{eq:gen.model} is then a calibration function which, in the case evaluations are function of a context, can be understood as explaining the connection between the true reward and the evaluations after averaging over the stochasticity in the (non-observable) context information (Fig.~\ref{foot:evaluation.function}). Notice that if the context was available, the evaluations could be disregarded as the bandit could directly rely on the context to predict the rewards in the first place, as in standard contextual bandit.

We conclude by noticing, despite these additional assumptions, no learner can retrieve the best choice of top-K arms at each round (i.e., $\max_{i_{1}, \dots, i_{K}} \sum_{l=1}^{K} r_{i_{l},t}$), since the only information available to the learner is from biased and noisy evaluators (see Lemma~\ref{lem:lower.bound} in App.~\ref{app:lower.bound}). As a result, instead of targeting the top-K arms, in the following we introduce \emph{oracle} strategies that leverage the full knowledge of the problem (i.e., the evaluation function and the noise distribution) and use their performance as reference for the learner.

\vspace{-0.1in}
\section{RELATED WORK}\label{sec:related.work}
\vspace{-0.1in}

Before diving into how to solve the problem introduced in the previous section, we review alternative models that are related to our setting.
Let consider the case with $K=1$ (i.e., the learner returns one arm at each round). The most direct way to model our setting is MAB with expert advice~\citep{auer2003nonstochastic}, where the evaluators are \emph{experts} and evaluations $\{f_{i,t,j}\}$ are the experts feedback. In this case, it is possible to derive algorithms with sublinear regret w.r.t.\ the best expert in hindsight~\citep{beygelzimer2011contextual}. While this is a very general model, where no assumption is imposed either on the rewards or on the experts feedback (they could even be generated adversarially), algorithms designed for this setting tend to be over conservative in practice, as they have to be robust to any sort of data process. Furthermore, none of the evaluators may be very accurate (e.g., they all have very large variance) and targeting the performance of the best among them may not correspond to a satisfactory performance. 

Alternatively, we can frame our problem as a contextual MAB problem~\citep{agrawal2014thompson, agarwal2014taming}. We could aggregate all $J$ evaluations for arm $i$ into a context  representation
\begin{align}\label{eq:features}
    \phi_{i,t} = (f_{i,t,1}, \ldots, f_{i,t,j}, \ldots, f_{i,t,J}) \in \R^{J}.
\end{align}
Unfortunately, there are two major issues using this model: \textbf{1)} in general, the reward $r_{i,t}$ may not be a simple function (e.g., linear) of $\phi_{i,t}$; \textbf{2)} the contextual features may be noisy realizations of some ``true'' features (e.g., due to the noise factor $\epsilon_{i,t,j}$ in Eq.~\ref{eq:gen.model}). In order to deal with the first issue, we could rely on Asm.~\ref{asm:affine.setting}, which would lead to a linear contextual problem. The second issue could be dealt by using the approach proposed by~\citet{yun2017contextual} for linear contextual bandit with noisy features. While the setting in~\citep{yun2017contextual} bears some similarities (e.g., the noise distribution is assumed to be known, they consider a similar notion of relative regret and study a greedy algorithm), there remain some crucial differences: \textbf{1)} they consider unbiased features, which corresponds to a very specific instance of Asm.~\ref{asm:affine.setting} with $\alpha_j=1$; \textbf{2)} they provide guarantees only for Gaussian noise, while the algorithm designed to handle the general case has no regret guarantee.

Finally, alternative models of contextual bandit with non-deterministic features considered the case where the full distribution of the features is known~\citep{yang2020multi-feedback} or part of the features are corrupted \citep{gajane2016corrupt,bouneffouf2021corrupted}. These settings do not match the use cases studied in this paper.


\section{GENERALIZED LINEAR CASE}\label{sec:general.case}

We consider the case where the evaluator functions satisfy the generalized linear model in Asm.~\ref{asm:general.setting}.

\subsection{The Oracle Strategy}

We first define an \emph{oracle} strategy that, beside $\sigma_j$ and $g$, has prior knowledge about the parameters $\alpha_j$. At each round, the oracle receives as input the evaluations $\{f_{i,t,j}\}$ and has to select $K$ arms. We focus on oracle strategies $\mathfrak{O}$ of the following form
\begin{enumerate}[noitemsep,topsep=0pt,parsep=0pt,partopsep=0pt,leftmargin=.4cm]
    \item The oracle $\mathfrak{O}$ first aggregates the evaluations into a reward estimation $\widehat{r}^{\mathfrak{O}}_{i,t}$ for each arm $i$ using a weighted average scheme. Let $\phi_{i,t}\in\R^J$ the vector collecting all evaluations as in Eq.~\ref{eq:features} and $w\in\R^J$ a weight vector, then we define\footnote{In the linear case for $\alpha_j=1$ (i.e., the link function $g$ is the identity function) and Gaussian noise, \citep{yun2017contextual} showed the exact posterior over $r_{i,t}$ given the evaluations $\{f_{i,t,j}\}$ takes a weighted average form as in Eq.~\ref{eq:reward.estimate}.}
    \begin{equation}\label{eq:reward.estimate}
	\widehat{r}^{\mathfrak{O}}_{i,t} = \langle w , g^{-1}(\phi_{i,t})\rangle, 
    \end{equation}
    where $g^{-1}$ is the inverse of the link function applied component-wise to $\phi_{i,t}$. The choice of the weights is fixed and independent from the actual evaluations, but it may depend on the evaluation function and the noise distribution.
    \item The oracle $\mathfrak{O}$ then returns the top-K arms according to the estimates $\widehat{r}^{\mathfrak{O}}_{i,t}$, i.e.,
    \begin{equation}
	\begin{aligned}
		\gA_{t}^{\mathfrak{O}} := \arg\max_{i}^{K} \langle w, g^{-1}(\phi_{i,t}) \rangle
	\end{aligned}
\end{equation}
\end{enumerate}

The crucial aspect is then to find the weighting scheme $w$ that guarantees the best performance for the oracle. Let $i_1^\star, \ldots, i_K^\star$ be the true top-K arms and $\gA_t^{\mathfrak{O}} = \{i_1^\mathfrak{O}, \ldots, i_K^\mathfrak{O}\}$ be the \emph{estimated top-K arms} according to the estimated rewards $\widehat{r}_{i,t}^{\mathfrak{O}}$. Ideally, at each round $t$, we would like to find the oracle weights that minimize the suboptimality gap
\begin{equation}\label{eq:error_oracle}
	\Delta_t^{\mathfrak{O}} = \begin{aligned}
		\sum_{l=1}^{K} r_{i_{l}^\star,t} - \sum_{l=1}^{K} r_{i_{l}^\mathfrak{O},t}.
	\end{aligned}
\end{equation}
%
Since the rewards $r_{i,t}$ as well as the evaluations $\{f_{i,t,j}\}$ are random, it is not possible to minimize the previous expression for any possible realization using a fixed set of weights. Thus, we rather focus on minimizing a high-probability upper-bound of Eq.~\ref{eq:error_oracle}.

\begin{lemma}\label{lem:error_oracle}
Under Asm.~\ref{asm:noise} and~\ref{asm:general.setting}, with $\alpha_j$ being the parameter of the generalized linear model for each evaluator $j=1,\ldots,J$ and $\sigma_j$ being the sub-Gaussian parameter of the noise $\epsilon_{i,t,j}$, let $\delta\in(0,1)$ be a desired confidence level, then the oracle strategy designed to minimize a $(1-\delta)$-upper bound of Eq.~\ref{eq:error_oracle} is characterized by the weights solving the optimization problem
\begin{equation}\label{eq:optimal.weights.problem.glm}
    \begin{aligned}
    &\min_{w \in \R^{J}} 2\sqrt{K^3 \sum_{j=1}^{J} (w_{j}\sigma_{j})^{2}\ell_\delta} + K\sqrt{J\sum_{j=1}^{J} (w_{j} \sigma_{j})^{2}}\\
    &\textup{s.t.} \enspace\sum_{j=1}^J w_j\alpha_j = 1
    \end{aligned},
\end{equation}
where $\ell_\delta = \ln\left(\frac{K_{\text{max}}}{\delta}\right)$.
The previous problem has a closed-form solution $w^{+}\in\R^J$ such that
\begin{align}\label{eq:optimal.weights.glm}
  w_{j}^{+} = \frac{\alpha_{j}}{\sigma_{j}^{2}\|\alpha\cdot\sigma^{-1}\|}.
\end{align}
%
The resulting oracle has suboptimality gap w.p. $1-\delta$
\begin{equation}\label{eq:suboptimality.gap.glm}
    \Delta_t^+ \leq \frac{2K\sqrt{\ln\left( \frac{K_{\text{max}}e}{\delta}\right)} + K\sqrt{J}}{\|\alpha\cdot\sigma^{-1}\|}.
\end{equation}
\end{lemma}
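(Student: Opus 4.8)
The plan is to bound $\Delta_t^{\mathfrak{O}}$ by a quantity depending on the weights only through $S(w):=\sum_{j=1}^J (w_j\sigma_j)^2$, and then to minimize that quantity over the constraint set. I start from the fact that $\gA_t^{\mathfrak{O}}$ maximizes $\sum_l \widehat r^{\mathfrak{O}}_{i_l,t}$, so $\sum_l \widehat r^{\mathfrak{O}}_{i_l^{\mathfrak{O}},t}\ge\sum_l \widehat r^{\mathfrak{O}}_{i_l^\star,t}$. Adding and subtracting the estimated rewards gives
\[
\Delta_t^{\mathfrak{O}} \le \sum_{l=1}^K\big(r_{i_l^\star,t}-\widehat r^{\mathfrak{O}}_{i_l^\star,t}\big) + \sum_{l=1}^K\big(\widehat r^{\mathfrak{O}}_{i_l^{\mathfrak{O}},t}-r_{i_l^{\mathfrak{O}},t}\big)\le 2K\max_i\big|\widehat r^{\mathfrak{O}}_{i,t}-r_{i,t}\big|,
\]
so it suffices to control the per-arm estimation error uniformly over the at most $K_{\max}$ arms.

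\textbf{Step 2 (error decomposition and the objective).}
Using $\sum_j w_j\alpha_j=1$ together with the model $f_{i,t,j}=g(\alpha_j r_{i,t})+\epsilon_{i,t,j}$, I write $\widehat r^{\mathfrak{O}}_{i,t}-r_{i,t}=\sum_j w_j Z_{i,t,j}$ with $Z_{i,t,j}:=g^{-1}\big(g(\alpha_j r_{i,t})+\epsilon_{i,t,j}\big)-\alpha_j r_{i,t}$, and split $Z_{i,t,j}=(Z_{i,t,j}-b_j)+b_j$ into a centered part and its mean $b_j:=\mathbb{E}[Z_{i,t,j}]$. Since $g'\ge c_g>0$, the map $g^{-1}$ is $1/c_g$-Lipschitz, so $Z_{i,t,j}-b_j$ is sub-Gaussian with a parameter proportional to $\sigma_j$, and $\sum_j w_j(Z_{i,t,j}-b_j)$ is sub-Gaussian with parameter proportional to $\sqrt{S(w)}$. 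A sub-Gaussian tail bound with a union bound over the $\le K_{\max}$ arms then yields, with probability $1-\delta$, a uniform bound of order $\sqrt{S(w)\,\ell_\delta}$ on the centered part, which (after the $2K$ factor from Step 1) is the source of the first summand of Eq.~\ref{eq:optimal.weights.problem.glm}. For the deterministic bias I bound $|b_j|$ by the link constants times $\sigma_j$ (Lipschitzness of $g^{-1}$, optionally sharpened via the curvature bound $M_g$) and apply Cauchy–Schwarz, $\sum_j |w_j|\,\sigma_j\le\sqrt{J}\,\sqrt{S(w)}$, producing the second summand $K\sqrt{J\,S(w)}$. Collecting the two contributions gives the objective of Eq.~\ref{eq:optimal.weights.problem.glm} as a high-probability upper bound on $\Delta_t^{\mathfrak{O}}$.

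\textbf{Step 3 (optimization).}
Both summands of the objective are strictly increasing in $S(w)=\sum_j(w_j\sigma_j)^2$, so minimizing the bound is equivalent to minimizing the quadratic $S(w)$ subject to the single linear constraint $\sum_j w_j\alpha_j=1$. This is a weighted least-squares problem; a Lagrange-multiplier argument gives $w_j^+\propto \alpha_j/\sigma_j^2$, and normalizing to meet the constraint fixes the scaling (using $\sum_k \alpha_k^2/\sigma_k^2=\|\alpha\cdot\sigma^{-1}\|^2$) to recover $w^+$. Substituting back yields $S(w^+)=1/\|\alpha\cdot\sigma^{-1}\|^2$, and plugging this into the two terms produces Eq.~\ref{eq:suboptimality.gap.glm}.

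\textbf{Main obstacle.}
The delicate part is the nonlinearity of $g^{-1}$: applying the inverse link to additive noise both rescales the effective noise (handled through the Lipschitz constant $1/c_g$) and introduces a systematic Jensen-type bias $b_j$ (handled through $c_g$, and $M_g$ if a sharper second-order estimate is used). The goal is to route both effects into a bound that depends on $w$ \emph{only} through $S(w)$, so that the minimization decouples cleanly and the link-dependent constants are absorbed into the constants of the stated bound; I expect the $\ell_\delta$ concentration term and the $\sqrt{J}$ bias term to emerge precisely from treating the centered and mean parts of $Z_{i,t,j}$ separately, with the union bound over the (data-dependent) selected set being what forces the $\ln(K_{\max}/\delta)$ dependence.
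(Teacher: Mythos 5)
Your proposal is correct and follows essentially the same route as the paper's proof: reduce the gap via the oracle's optimality to twice the (uniform) per-arm estimation error, kill the reward term with the constraint $\sum_j w_j\alpha_j=1$, use the $1/c_g$-Lipschitzness of $g^{-1}$ to pass to the noise, split into a centered sub-Gaussian fluctuation (concentration plus union bound, giving the $\ell_\delta$ term) and a mean part bounded by $\sigma_j$ and Cauchy--Schwarz (giving the $K\sqrt{J\,S(w)}$ term), and finally minimize $S(w)=\sum_j (w_j\sigma_j)^2$ under the linear constraint. The only deviations are cosmetic: you center the transformed noise $Z_{i,t,j}$ around its mean where the paper first applies Lipschitzness and then centers $|\epsilon_{i,t,j}|$ around $\mathbb{E}|\epsilon_{i,t,j}|\leq\sigma_j$, and you take the union bound over the at most $K_{\max}$ individual arms whereas the paper unions over all $K$-subsets (giving $\binom{K_t}{K}$ events and the $K^3$ inside the square root); your variant yields $K^2$ there instead, which is harmless --- indeed slightly tighter and closer to the final stated gap --- since both objectives depend on $w$ only through $S(w)$ and therefore have the same minimizer $w^+$.
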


We first remark that the previous lemma does not use Asm.~\ref{asm:rewards} and it holds for any realization of the rewards, where the probability $(1-\delta)$ is w.r.t.\ to noise in the evaluations.
We notice that the optimal weight $w_j^+$ is proportional to the ratio $\alpha_{j}/\sigma_{j}^2$, which describes the amount of ``signal'' with respect to the noise for evaluator $j$. Indeed, the oracle gives less weight to evaluators that are noisy (large $\sigma_j$), while relying more on evaluators with strong ``signal'' (large $\alpha_j$). Indeed, as $\alpha_j$ increases w.r.t.\ $\sigma_j$, the evaluations tend to be near deterministic and thus more reliable. Interestingly, the suboptimal gap in Eq.~\ref{eq:suboptimality.gap.glm} shows that the oracle improves as the number of evaluators increases (the term $\|\alpha\cdot\sigma^{-1}\|$ grows as $\sqrt{J}$) but $\Delta_{t}^{\mathfrak{O}}$ does not tend to zero even when $J\rightarrow \infty$. While this might be counterintuitive (the learner is provided with an infinite number of independent evaluations), the residual gap is due to the nonlinear nature of the generalized linear model, where the zero-mean noise added to the evaluations may be amplified or decreased through $g^{-1}$ while reconstructing the unknown parameters $\alpha_j$.

Based on the previous lemma, the oracle strategy for the generalized linear case is defined by the weights $w^+$ and we denote by $\wh{r}_{i,t}^+$ and $\gA_{t}^{+}$
the associated reward estimates and top-K arm selection rule.

\subsection{The \textsc{GLM-$\varepsilon$-greedy} Algorithm}\label{ssec:general.algorithm}

\begin{algorithm}[t]
    \small
    \DontPrintSemicolon
    \caption{\textsc{GLM-$\varepsilon$-greedy} algorithm}
    \label{alg:eps.greedy.glm}
    \KwIn{Noise parameters $\{\sigma_j\}_{j\leq J}$, confidence level $\delta$}
    \textbf{Parameters:} exploration level $\varepsilon$; number of arms to pull $K$; regularization $\lambda$\;
    Set $\mathcal{H}_{0} = \emptyset$, $\wh{\alpha} = 0$ and $w_{0} = 0$\;
    \For{$t = 1, \dots, T$}{
        Sample $Z_{t} \sim \text{Ber}\left( \varepsilon\right)$\;
        Observe evaluations for each arm $(\phi_{i,t})_{i\leq K_{t}}$\;
        \If{$Z_{t} = 1$}
        {
            Pull arms in $\gA_t$ obtained by sampling $K$ arms uniformly in $\{1, \ldots, K_{t} \}$\;
            Observe rewards $r_{i,t}$ for all $i\in \gA_{t}$\;
            Add sample to dataset $\mathcal{H}_{t} = \mathcal{H}_{t-1} \cup \big(\cup_{i\in \gA_{t}}\{ (\phi_{i,t}, r_{i,t})\}\big)$\;
            Update estimators $\widehat{\alpha}_{j,t}$ by solving
            \begin{equation}
             	\sum_{\phi,r \in \mathcal{H}_{t}} r(g(\wh{\alpha}_{t,j} \cdot r) - \phi_{j}) - \lambda \wh{\alpha}_{t,j} = 0
            \end{equation}
            Update weights $w_{t,j} = \wh{\alpha}_{t,j}/\|\wh{\alpha}_t\cdot\sigma^{-1}\|$
        }
        \Else{
            Select $\gA_{t} = \arg\max_{i}^{K}\langle w_{t}, \phi_{i,t}\rangle$\;
        }
    }
\end{algorithm}

Building on the oracle strategy defined in the previous
section, we now consider the learning problem when
the link function $g$ and the noise distribution are
known, but the learner has no knowledge of the
parameters $\alpha_j$. As customary in MAB problems,
at each round $t$, the learner observes only the
rewards of the selected  arms in $\gA_t$. The main
challenge in this setting is that a learner leveraging the evaluations $\{f_{i,t,j}\}$ is directly affected in its choices (i.e., the set $\gA_t$) by the noise $\epsilon_{i,t,j}$ generated at the beginning of round $t$. This is radically different from the standard MAB setting, where the noise (in the reward) \emph{follows} the arms played by the learner, which are then independent from any noise conditionally on the past (see App.~\ref{app:noise.correlation}). A way to circumvent this dependency is to rely on an $\varepsilon$-greedy
   strategy, where only the samples obtained in
    exploratory steps are actually used to build
	 an estimator of the unknown parameters $\alpha_j$.\footnote{While we study an $\varepsilon$-greedy type of algorithm, any type of exploration method decorrelating the estimation procedure and the exploration, like an \textit{Explore-Then-Commit} strategy, would achieve a similar regret.}
	 The resulting algorithm is detailed in
	 Alg.~\ref{alg:eps.greedy.glm}.
The core of Alg.~\ref{alg:eps.greedy.glm} is a \emph{maximum likelihood estimation} step where the algorithm learn the
parameter $\alpha$. Coherently with the evaluation model in Eq.~\ref{eq:gen.model}, the true rewards $(r_{i,t})_{i,t}$ serves as the input for the function $f_j$ (in this case the GLM model), while the evaluations $(\phi_{i,t})_{i,t}$ work as the values we need to fit.


We now compare the performance of Alg.~\ref{alg:eps.greedy.glm} to the oracle strategy and define the notion of \emph{relative} regret
\begin{align}\label{eq:regret.definition}
R_{T} = \sum_{t=1}^{T} \Big( \sum_{i\in \gA_{t}^{+}} \widehat{r}^+_{i,t} - \sum_{i\in \gA_{t}} \widehat{r}^+_{i,t} \Big),
\end{align}
where $w^+\in\R^J$ is the oracle weight vector, $\widehat{r}^+_{i,t}$ are the associated reward estimates, and $\gA_t^+$ is the oracle set of top-K arms. This notion of regret, first introduced by~\citet{yun2017contextual} in noisy contextual bandit, is comparing the quality of the arms returned by the algorithm and the oracle according to the \emph{estimated rewards}, for which the oracle is optimal (see App.~\ref{app:regret} for further discussion). The following theorem shows that if  $\varepsilon$ is properly tuned, $\varepsilon$-greedy has sublinear regret. 

\begin{theorem}\label{thm:eps.greedy.glm}
	For any $\delta\in(0,1)$, $T\geq 8$, $\lambda = J^{-1}$ and set $\varepsilon = T^{-1/3}$, let
	$\eta_{\nu, j}^{2} =  \mathbb{E}_{r\sim \nu}(g(\alpha_{j}r)^{2})$ and $\eta_{\nu,\min} = \min_{j} \eta_{\nu, j}$. Then under Asm.~\ref{asm:rewards},~\ref{asm:noise}, and~\ref{asm:general.setting}
	we have that with probability at least $1 - \delta$ the regret of Alg.~\ref{alg:eps.greedy.glm}
	is bounded as
	{\small\begin{equation*}
		\begin{aligned}
			R_{T} \leq  \wt{\mathcal{O}}\Bigg(T^{2/3}\Bigg(\frac{(1+\sqrt{J^{-1}}\|\alpha\|)\Phi S\|\sigma\|_{\infty}}{\sqrt{K}\eta_{\nu,\min}} + \frac{\|g\|_{\infty}^{3}}{K^{2}\eta_{\nu, \min}^{2}}\Bigg)\Bigg) 
		\end{aligned}
	\end{equation*}}
	where $\|g\|_{\infty} = \max_{j, x\in [0, C]} g(\alpha_{j} x)$ and
	{\small\begin{equation*}
		\begin{aligned}
		\Phi &=  2K\|\sigma\|_{\infty}\left( 2\sqrt{J} + \sqrt{K\ln\left(\frac{eK_{\max}}{K\delta}\right)}\right) + K\|g \|_{\infty} \\
		S &= \left( \| \alpha\cdot \sigma^{-2}\|_{2} + \|\sigma^{-2}\|_{2}\right)\frac{\|\sigma^{-2}\cdot\alpha\|_{2}}{\| \sigma^{-1}\cdot\alpha\|_{2}^{4}} + \left(\frac{\| \sigma^{-1}\|_{4}}{\| \sigma^{-1}\cdot\alpha\|_{2}}\right)^{2}.
	\end{aligned}
	\end{equation*}}
\end{theorem}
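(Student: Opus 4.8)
The plan is to follow the standard $\varepsilon$-greedy template, splitting the horizon according to the Bernoulli flag $Z_t$ into exploration rounds ($Z_t=1$) and exploitation rounds ($Z_t=0$) and controlling the relative regret $R_T$ on each separately. On an exploration round the algorithm pulls $K$ arms uniformly at random, so its per-round contribution to $R_T$ is at most $2K$ times the spread of the oracle estimates $\wh{r}^+_{i,t}=\langle w^+,g^{-1}(\phi_{i,t})\rangle$; since a Chernoff bound on $\sum_t Z_t$ gives at most $\wt{\gO}(\varepsilon T)$ such rounds with high probability, their total cost is $\wt{\gO}(\varepsilon T)$ up to the spread factor. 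On an exploitation round the regret is driven entirely by the discrepancy between the learner's weight vector $w_t$ (built from the current estimate $\wh{\alpha}_t$) and the oracle weight $w^+$, and this discrepancy shrinks as the estimate improves. Balancing an $\wt{\gO}(\varepsilon T)$ exploration cost against an exploitation cost that decays like $1/\sqrt{\varepsilon t}$ per round (hence $\sqrt{T/\varepsilon}$ summed) is exactly what forces $\varepsilon=T^{-1/3}$ and produces the $T^{2/3}$ rate.

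For the exploitation rounds I would first isolate the sensitivity of a top-$K$ selection to a perturbation of the scoring rule. The elementary fact is that if $\gA_t^+$ and $\gA_t$ are the top-$K$ sets under the oracle estimate $\wh{r}^+_{i,t}$ and under the learner's score respectively, then adding and subtracting the learner's score and using that $\gA_t$ maximizes it yields $\sum_{i\in\gA_t^+}\wh{r}^+_{i,t}-\sum_{i\in\gA_t}\wh{r}^+_{i,t}\le 2K\max_i\lvert\wh{r}^+_{i,t}-(\text{learner score})_i\rvert$. Because both scores are linear functionals (with weights $w^+$ and $w_t$) of evaluations of controlled magnitude, this maximal discrepancy is $\le \Phi\,\|w_t-w^+\|$, where $\Phi$ is a high-probability bound on the (transformed) evaluation vectors and plays the same role as the range/noise-scale term in Lemma~\ref{lem:error_oracle}. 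It then remains to convert a parameter error into a weight error: since the map $\alpha\mapsto w^+(\alpha)$ with $w^+_j=\alpha_j/(\sigma_j^2\|\alpha\cdot\sigma^{-1}\|)$ is smooth, a first-order argument gives $\|w_t-w^+\|\le S\,\|\wh{\alpha}_t-\alpha\|$, where $S$ is precisely a bound on the Jacobian of this map and accounts for the somewhat intricate combination of $\|\sigma^{-2}\cdot\alpha\|$, $\|\sigma^{-2}\|$, $\|\sigma^{-1}\|_4$ and $\|\sigma^{-1}\cdot\alpha\|$ norms appearing in the statement.

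The heart of the argument, and the step I expect to be the main obstacle, is the high-probability bound on the estimation error $\|\wh{\alpha}_t-\alpha\|$ from the exploration samples alone. The decisive structural point — and the reason an $\varepsilon$-greedy (rather than purely greedy) scheme is needed — is that on exploration rounds arms are drawn independently of the evaluations, so the pairs $(r_{i,t},\epsilon_{i,t,j})$ entering the estimating equation are i.i.d.\ and uncorrelated with the selection; this decorrelation (discussed in App.~\ref{app:noise.correlation}) is what makes a clean concentration possible. I would analyze the equation $\sum_{(\phi,r)}r\big(g(\wh{\alpha}_{t,j}r)-\phi_j\big)=\lambda\wh{\alpha}_{t,j}$ coordinatewise: evaluated at the truth its left-hand side reduces to the centered noise sum $-\sum r\,\epsilon_{i,t,j}-\lambda\alpha_j$, which is sub-Gaussian and concentrates at scale $\sigma_j\sqrt{n_t\,\ell_\delta}$ with $n_t$ the number of exploration samples; meanwhile its derivative in $\wh{\alpha}_{t,j}$ equals $\sum r^2 g'(\wh{\alpha}_{t,j}r)+\lambda\ge c_g\sum r^2>0$, so the map is strongly monotone and invertible. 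Combining the noise bound with this curvature, and lower-bounding the relevant empirical design quantity by a constant fraction of $n_t$ times the population information $\eta_{\nu,\min}^2$ on a good event (this is where Asm.~\ref{asm:rewards} guarantees the design does not degenerate), yields $\|\wh{\alpha}_t-\alpha\|\le\wt{\gO}\big(\|\sigma\|_\infty\sqrt{J}/(\eta_{\nu,\min}\sqrt{n_t})\big)$. The self-concordance controlled by $M_g=\|g''\|_\infty$ enters when passing from the linearized equation to the nonlinear one and is, I expect, the source of the lower-order $\|g\|_\infty^3/(K^2\eta_{\nu,\min}^2)$ term.

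Finally I would assemble the pieces: with $n_t$ concentrated around $\varepsilon t K$, summing the per-round exploitation bound $\Phi\,S\,\|\wh{\alpha}_t-\alpha\|$ over $t$ gives $\wt{\gO}\big(\Phi S\|\sigma\|_\infty\,\eta_{\nu,\min}^{-1}\sqrt{T/(\varepsilon K)}\big)$, and adding the $\wt{\gO}(\varepsilon T)$ exploration cost then setting $\varepsilon=T^{-1/3}$ balances the two contributions at $T^{2/3}$ and reproduces the stated bound; the factor $1+\sqrt{J^{-1}}\|\alpha\|$ arises from the two pieces of $\|\wh{\alpha}_t-\alpha\|$, namely the noise term and the regularization bias $\lambda\alpha$ with $\lambda=J^{-1}$. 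A minor technical point to handle carefully is the burn-in, since for the earliest rounds $n_t$ may be too small for the design lower bound to hold; this is controlled by the $T\ge 8$ assumption and by absorbing the first few rounds into the constants. The two genuinely delicate steps are the decorrelated GLM concentration of the third paragraph and the verification that $S$ faithfully captures the Jacobian of the weight map; everything else is bookkeeping over the union bounds and the final optimization in $\varepsilon$.
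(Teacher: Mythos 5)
Your proposal follows essentially the same route as the paper's proof: the same split of $R_T$ over exploration rounds ($Z_t=1$) and exploitation rounds ($Z_t=0$), the same $\wt{\mathcal{O}}(\varepsilon T)$ accounting for exploration, the same greedy-cancellation plus feature-norm bound $\Phi$ reducing exploitation regret to $\Phi\sum_t\|w_t-w^+\|$, the same perturbation bound $\|w_t-w^+\|\le S\,\wt{\mathcal{O}}(\|\wh{\alpha}_t-\alpha\|)$ for the normalized weight map (the paper does this algebraically, under the non-degeneracy condition $\|\alpha\|_\infty\ge 8\beta_t$, rather than via a Jacobian bound, but it is the same estimate), the same i.i.d.-exploration-sample MLE concentration, and the same balancing $\varepsilon=T^{-1/3}$.

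One concrete correction to your accounting. The burn-in is \emph{not} absorbable into constants: the design lower bound $\sum_{l\le t,Z_l=1}\sum_{i\in\gA_l}r_{i,l}^2\gtrsim \varepsilon t K\eta_\nu^2$ requires the mean to dominate the Hoeffding deviation $C^2\sqrt{t\ln(T/\delta)}$, which only happens after
\begin{equation*}
t \gtrsim \frac{C^4\ln(T/\delta)}{\left(\varepsilon K\eta_\nu^2\right)^2} = \wt{\Theta}\!\left(T^{2/3}\right)
\end{equation*}
rounds when $\varepsilon=T^{-1/3}$. In the paper's proof it is precisely this burn-in term $\tau$ (not any self-concordance/$M_g$ remainder in the GLM concentration, which plays no quantitative role there) that produces the second term $T^{2/3}\|g\|_\infty^{3}/(K^2\eta_{\nu,\min}^2)$ of the statement. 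Since this burn-in has the same order as the main term, your final rate is unaffected; but as written, your assembly (exploration $+$ post-burn-in exploitation only, burn-in $=O(1)$) would fail to generate that second term and misexplains its origin.
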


As expected, the regret of GLM-$\varepsilon$-greedy increases as $\widetilde{O}(T^{2/3})$. While this shows that the algorithm is able to approach the performance of the oracle, it also illustrates the difficulty of this setting, where the strict decoupling between explorative and exploitative steps used to guarantee a consistent estimation process translates into a higher regret. Interestingly, this result matches the regret of~\citet{yun2017contextual} for contextual bandit with noisy features.

In order to investigate the main terms appearing in the previous bound, we consider the case where all evaluators share the same parameters $\alpha_{j} = \alpha_{0}$ and $\sigma_{j} =\sigma_{0}$ for some $(\alpha_{0}, \sigma_{0})\in \mathbb{R}_{+}^{2}$ (with $\alpha_{0} \geq 1$). In this case, the regret bound of Thm.~\ref{thm:eps.greedy.glm} reduces to
\begin{equation*}
	\begin{aligned}
	R_T \leq \wt{O}\Bigg( T^{2/3} \sqrt{K}\left( 1 + \sqrt{\frac{K}{J}}\right)\frac{\sigma_{0}^{2}}{\alpha_{0}}\Bigg)
	\end{aligned}
\end{equation*}
We first notice that the bound scales as $\max\{\sqrt{K}, K\sqrt{J^{-1}}\}$. Similar to the suboptimality gap for the oracle (see Lemma~\ref{lem:error_oracle}), when $J\rightarrow \infty$, the bound improves but does not decrease down to $0$ and rather converges to $\sqrt{K}$. The dependency on $K$ may be surprising, since for $K=K_{\max}$ the regret is trivially 0 at each round (i.e., the algorithm returns all arms and it cannot make any error in the ranking). However, this dependency comes from the fact that in the regret analysis we bound the norm of the evaluations for the selected jobs, which scales linearly with $K$. We believe a more refined analysis could alleviate this dependency. Last, the regret depends inversely on the "signal-to-noise" ratio $\frac{\alpha_0}{\sigma_{0}^{2}}$.



\section{LINEAR CASE}\label{sec:affine}

We consider the special case of linear evaluations and show how this relatively minor change to the problem has a major impact on how to approach the learning problem and the regret.

\subsection{The Linear Oracle Strategy}
Similar to the GLM case, we define the oracle strategy that defines an estimated reward $\wh{r}_{i,t}^{\mathfrak{O}} = \langle w, \phi_{i,t} \rangle$. Then the oracle ranks arms according to $\wh{r}_{i,t}^{\mathfrak{O}}$ and selects the set of top-K $\gA_t^\mathfrak{O}$ accordingly. We then optimize weights to minimize a high-probability upper bound to the suboptimality gap $\Delta_{t}^{\mathfrak{O}}$.


\begin{lemma}\label{lem:error_oracle_affine}
	Under Asm.~\ref{asm:noise} and~\ref{asm:affine.setting}, with $\alpha_j$, being the parameter for each evaluator $j=1,\ldots,J$, $\sigma_j$ being the sub-Gaussian parameter of the noise $\epsilon_{i,t,j}$, let $\delta\in(0,1)$ be a desired confidence level, then the oracle strategy designed to minimize a $(1-\delta)$-upper bound of Eq.~\ref{eq:error_oracle} is characterized by the weights obtained as the solution of the optimization problem
	{\small\begin{equation}\label{eq:optimal.weights.problem.glm}
		\begin{aligned}
		&\min_{w \in \R^{J}} 2\sqrt{K^3 \sum_{j=1}^{J} (w_{j}\sigma_{j})^{2}\ell_\delta}\textup{ s.t.} \enspace\sum_{j=1}^J w_j\alpha_j = 1
		\end{aligned},
	\end{equation}}
	where $\ell_\delta = \ln\left(\frac{K_{\text{max}}}{\delta}\right)$.
	The previous problem has a closed-form solution $w^{+}\in\R^J$ such that
	\begin{align}\label{eq:optimal.weights.glm}
	  w_{j}^{+} = \frac{\alpha_{j}}{\sigma_{j}^{2}\|\alpha\cdot\sigma^{-1}\|^{2}}.
	\end{align}
The resulting oracle has suboptimality gap w.p. $1-\delta$
	\begin{equation}
		\Delta_t^+ \leq \frac{2K\sqrt{\ln\left( \frac{K_{\text{max}}e}{\delta}\right)}}{\|\alpha\cdot\sigma^{-1}\|}
	\end{equation}
\end{lemma}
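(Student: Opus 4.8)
The plan is to mirror the structure of the proof of Lemma~\ref{lem:error_oracle}, but to exploit that with an identity link the reconstruction is \emph{exact up to noise}, which is precisely what removes the bias term present in the generalized linear case.

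\textbf{Step 1 (reduction to a weighted noise).} Under Asm.~\ref{asm:affine.setting} each evaluation is $f_{i,t,j}=\alpha_j r_{i,t}+\epsilon_{i,t,j}$, so the oracle estimate is
\begin{equation*}
\wh{r}^{\mathfrak{O}}_{i,t}=\langle w,\phi_{i,t}\rangle=\Big(\sum_{j=1}^J w_j\alpha_j\Big)\,r_{i,t}+\sum_{j=1}^J w_j\epsilon_{i,t,j}.
\end{equation*}
Since the top-$K$ selection rule is invariant to a common positive rescaling of the estimates, I would fix the normalization $\sum_{j}w_j\alpha_j=1$ (exactly the feasibility constraint in Eq.~\ref{eq:optimal.weights.problem.glm}). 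Then $\wh{r}^{\mathfrak{O}}_{i,t}=r_{i,t}+\xi_{i,t}$ with $\xi_{i,t}=\sum_j w_j\epsilon_{i,t,j}$, which by Asm.~\ref{asm:noise} and independence across $j$ is a zero-mean sub-Gaussian variable with parameter $s(w):=\big(\sum_{j}(w_j\sigma_j)^2\big)^{1/2}$.

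\textbf{Step 2 (gap $\Rightarrow$ estimation error).} Let $\gA_t^\star$ be the true top-$K$ set and $\gA_t^{\mathfrak{O}}$ the oracle set. Because $\gA_t^{\mathfrak{O}}$ maximizes $\sum_{i}\wh{r}^{\mathfrak{O}}_{i,t}$ over all size-$K$ subsets, we have $\sum_{i\in\gA_t^\star}\wh{r}^{\mathfrak{O}}_{i,t}\le\sum_{i\in\gA_t^{\mathfrak{O}}}\wh{r}^{\mathfrak{O}}_{i,t}$. Substituting $\wh{r}^{\mathfrak{O}}=r+\xi$ and rearranging the definition of $\Delta_t^{\mathfrak{O}}$ in Eq.~\ref{eq:error_oracle} gives
\begin{equation*}
\Delta_t^{\mathfrak{O}}\;\le\;\sum_{i\in\gA_t^{\mathfrak{O}}\triangle\gA_t^\star}|\xi_{i,t}|\;\le\;2K\,\max_{i\le K_t}|\xi_{i,t}|.
\end{equation*}
The key subtlety is that $\gA_t^{\mathfrak{O}}$ is itself a function of the noise, so the errors of the displaced arms cannot be treated as a fixed collection; this is why I would control the right-hand side by the \emph{uniform} maximum over all arms rather than by the errors of a pre-specified set.

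\textbf{Step 3 (concentration and optimization).} A sub-Gaussian tail bound together with a union bound over the $K_t\le K_{\max}$ arms yields, with probability at least $1-\delta$, $\max_{i\le K_t}|\xi_{i,t}|\le s(w)\sqrt{2\ln(K_{\max}/\delta)}$ (absorbing numerical constants into the stated $\ln(K_{\max}e/\delta)$). This produces a $(1-\delta)$-upper bound on $\Delta_t^{\mathfrak{O}}$ whose only dependence on $w$ is through $s(w)$, so minimizing it is equivalent to the program in Eq.~\ref{eq:optimal.weights.problem.glm}, namely $\min_w\sum_j(w_j\sigma_j)^2$ subject to $\sum_j w_j\alpha_j=1$. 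This is a constrained quadratic problem; a Lagrange-multiplier computation gives $w_j\propto\alpha_j/\sigma_j^2$, and enforcing the constraint fixes the normalization to $w_j^+=\alpha_j/(\sigma_j^2\|\alpha\cdot\sigma^{-1}\|^2)$, for which a direct computation gives $s(w^+)=1/\|\alpha\cdot\sigma^{-1}\|$. Substituting $s(w^+)$ back into the high-probability bound yields the claimed suboptimality gap.

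\textbf{Main obstacle.} I expect the delicate point to be Step~2 combined with the data-dependent selection: establishing the gap-to-error inequality and arguing that the uniform bound over arms is what legitimately handles the correlation between the noise and the oracle's choice. Everything downstream (sub-Gaussian concentration, the Lagrangian solve, and the final substitution) is routine. Relative to Lemma~\ref{lem:error_oracle}, linearity makes the reconstruction exact once the normalization is imposed, eliminating the $g^{-1}$-induced residual bias; this is precisely why the additive $K\sqrt{J\sum_j(w_j\sigma_j)^2}$ term of the generalized linear objective (and the corresponding $K\sqrt{J}$ term in the gap) is absent here.
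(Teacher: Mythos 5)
Your proof is correct and takes essentially the same route as the paper's: bound the gap via the oracle's optimality, impose $\sum_j w_j\alpha_j=1$ to eliminate the reward-dependent term, concentrate the sub-Gaussian weighted noise, and solve the constrained quadratic program by Lagrange multipliers, recovering $w^+_j=\alpha_j/(\sigma_j^2\|\alpha\cdot\sigma^{-1}\|^2)$, $s(w^+)=1/\|\alpha\cdot\sigma^{-1}\|$, and the stated gap. The only minor divergence is the concentration step: the paper, mirroring its GLM argument, takes a union bound over all size-$K$ subsets of arms (which is where the $K^3$ inside the square root of its optimization objective comes from), whereas you union over the $K_{\max}$ individual arms and control the symmetric difference by $2K\max_i|\xi_{i,t}|$ --- an equally valid and slightly cleaner variant that yields the same minimizer and, if anything, matches the lemma's stated gap (with no extra $\sqrt{K}$) more directly.
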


The weights of the oracle have the same expression in the GLM case in Lemma~\ref{lem:error_oracle}. Nonetheless, the suboptimality gap is smaller than in Lemma~\ref{lem:error_oracle} as the linear structure allows to concentrate the noise of the evaluations, unlike in the GLM setting where the potential non linearity of $g$ forces us to study the worst-case scenario. Notably, in the linear case, we see that as $J$ tends to infinity the suboptimality gap tends to zero.


\subsection{Evaluation-Structure-Aware Greedy}

Similarly to Sec.~\ref{ssec:general.algorithm}, the learner has no knowledge of the parameters $\alpha_j$ but only knows the noise distribution and the linear structure of the evaluations. The main estimation difficulty of the general case still applies to this setting, i.e., using samples obtained by selecting arms based on the evaluations may introduce a bias in the estimation process.
%

Instead of introducing explicit exploration steps to gather ``unbiased'' samples, as done in the GLM case, we exploit a more subtle property for this case. We notice for any evaluator $j$, the expected evaluation is
\begin{align}\label{eq:expected.evaluation}
    \mathbb{E}\big[f_{i,t,j}\big] = \mathbb{E}\big[\alpha_j r_{i,t} + \epsilon_{i,t,j}\big] = \alpha_j \wb{r},
\end{align}
where $\wb{r} = \mathbb{E}[r_{i,t}]$ is the expectation of the reward distribution $\nu$ in Asm.~\ref{asm:rewards}. Consider an oracle strategy that is fed with the parameters $\alpha_j \wb{r}$, then the optimal weights in Eq.~\ref{eq:optimal.weights.linear} become $\wt{w}^+ = w^+/\wb{r}$.
While this leads to estimates $\wt{r}^+_{i,t}$ that are biased w.r.t.\ the oracle estimates $\wh{r}^+_{i,t}$, the factor $1/\wb{r}$ is evaluator- and arm-independent and it does not impact the ranking returned by this \emph{biased} oracle, i.e., $\wt{\gA}_t = \gA^+_t$.
This in striking contrast with the GLM where it is not possible to easily evaluate a vector proportional to $\alpha$ because of the potential non-linear behavior of $g$.

Building on this evidence, we define an algorithm, the \emph{Evaluation-Structure-Aware Greedy} (\textsc{ESAG}) in Alg.~\ref{alg:greedy.affine}, that avoids the use of the observed rewards altogether, thus removing the statistical dependency between noise and decisions, and rather tries to estimate the expected evaluations in Eq.~\ref{eq:expected.evaluation} (see Eq.~\ref{eq:estimate.alpha.affine} in Alg.~\ref{alg:greedy.affine}) and use them to build estimates, as in the biased oracle. Since \textsc{ESAG} only relies on the evaluations available at round $t$, it does not need any \emph{explicit} exploration strategy to collect useful information, and it executes greedy actions according to the current weights $w_{t}$ at each round. We can derive the following regret guarantees.\footnote{While we derive Thm.~\ref{thm:greedy.linear} for a greedy algorithm, similar results hold for an optimistic exploration strategy.}

\begin{algorithm}[t]
    \small
    \DontPrintSemicolon
    \caption{The \emph{Evaluation-Structure-Aware Greedy} (\textsc{ESAG}) algorithm for the linear case.}
    \label{alg:greedy.affine}
    \KwIn{Noise parameters $\{\sigma_j\}_{j\leq J}$, confidence level $\delta$}
    \textbf{Parameters:} number of arms to pull $K$\;
    Set $\wh{\alpha} = 0$, $w_{0} = 0$ and $N_{t} = 0$\;
    \For{$t = 1, \dots, T$}{
        Observe evaluations for each arm $(\phi_{i,t})_{i\leq K_{t}}$\;
		Select $\gA_{t}= \arg\max_{i}^{K} \langle w_{t}, \phi_{i,t}\rangle$\;
        Observe rewards $r_{i,t}$ for all $i\in \gA_{t}$\;
		Update $\wh{\alpha}_{t}$ as:
		\begin{equation}\label{eq:estimate.alpha.affine}
			\wh{\alpha}_{t+1} = \wh{\alpha}_{t}\frac{N_{t} + K_{t} - N_{t}K_{t}}{(N_{t} + K_{t})N_{t}} + \frac{\sum_{i=1}^{K_{t}} \phi_{i,t}}{K_{t}}
		\end{equation}
		Update $N_{t+1} = N_{t} + K_{t}$ and $w_{t}$ as
		\begin{align}\label{eq:optimal.weights.linear}
		w_{t+1, j} = \frac{\wh{\alpha}_{t+1, j}}{\sigma_{j}^{2}\left\| \wh{\alpha}_{t+1,j}\cdot\sigma^{-1}\right\|^{2}}
		\end{align}

    }
\end{algorithm}

\begin{theorem}\label{thm:greedy.linear}
	Under Asm.~\ref{asm:rewards},~\ref{asm:noise}, and~\ref{asm:affine.setting} for any $\delta\in(0,1)$, $T\geq 1$ with probability at least $1 - \delta$ the regret of Alg.~\ref{alg:greedy.affine}
	is bounded by:
	{\small\begin{equation*}
		\begin{aligned}
			R_{T}\leq \mathbb{E}_{r\sim\nu}(r)\Phi' K\Bigg[ \left( \frac{\Phi'}{\mathbb{E}_{r\sim\nu}(r)\|\alpha\|_{\infty}\sqrt{K}}\right)^{2} + \frac{8\sqrt{T}\Phi' S}{\sqrt{\bar{K}_{T}}}\Bigg]
		\end{aligned}
	\end{equation*}}
where $\bar{K}_T = \frac{T}{\sum_{t=1}^{T} \frac{t-1}{\sum_{l=1}^{t-1} K_{l}}} \geq K$ is the harmonic average of the number of arms over $T$ steps, $S$ is the same as in Thm.~\ref{thm:eps.greedy.glm} and
{\small
\begin{equation*}
	\begin{aligned}
	\Phi' = 2\|\alpha\|C\ln\left(\frac{4}{\delta}\right) + 2\|\sigma\|_{\infty}\left( 2\sqrt{J} + \sqrt{K\ln\left(\frac{K_{\text{max}}}{K\delta}\right)}\right)
	\end{aligned}
\end{equation*}}
\end{theorem}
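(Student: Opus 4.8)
The plan is to reduce the relative regret to the distance between the greedy weights $w_t$ produced by \textsc{ESAG} and a rescaled oracle weight vector, and then to show this distance shrinks because $\wh\alpha_t$ is a clean, decision-independent average of the evaluations. Write the per-round term as $\mathrm{reg}_t = \sum_{i\in\gA_t^+}\langle w^+,\phi_{i,t}\rangle - \sum_{i\in\gA_t}\langle w^+,\phi_{i,t}\rangle$. The first key observation (already highlighted around Eq.~\ref{eq:expected.evaluation}) is that, since the weight map $w(\cdot)$ is positively homogeneous of degree $-1$, feeding it the vector $\alpha\wb r=\mathbb{E}[\phi_{i,t}]$ instead of $\alpha$ produces $\wt w^+=w^+/\wb r$, which induces \emph{exactly} the oracle top-$K$ ranking $\gA_t^+$, because scaling by the positive constant $1/\wb r$ does not change the selection. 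Using $w^+=\wb r\,\wt w^+$ together with the greedy optimality of $\gA_t$ for $w_t$, a standard top-$K$ swapping argument gives
\[
\mathrm{reg}_t \;\leq\; 2\wb r\,K\,\|w_t-\wt w^+\|\;\max_{i\leq K_t}\|\phi_{i,t}\|,
\]
so everything reduces to controlling $\|w_t-\wt w^+\|$ and $\max_i\|\phi_{i,t}\|$.

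Next I would establish two concentration facts on a single high-probability event. First, since $\phi_{i,t,j}=\alpha_j r_{i,t}+\epsilon_{i,t,j}$ with $r_{i,t}\in[0,C]$ and $\epsilon_{i,t,j}$ sub-Gaussian, a sub-Gaussian tail bound plus a union bound over the (at most $K_{\max}$) arms and $T$ rounds yields $\max_{t,i}\|\phi_{i,t}\|\lesssim\Phi'$; this is exactly the role of the ``signal$+$noise'' constant $\Phi'$, whose $\|\alpha\|C$ term bounds the deterministic part and whose $\|\sigma\|_\infty(\sqrt J+\sqrt{K\ln(\cdot)})$ term bounds the noise. Second, and this is the conceptual heart, I would show $\wh\alpha_t$ concentrates around $\alpha\wb r$: the update (Eq.~\ref{eq:estimate.alpha.affine}) keeps $\wh\alpha_t$ as an average of the evaluations collected over the first $N_t$ arm-slots, and crucially it uses \emph{all} observed evaluations, never the rewards and not only the selected arms. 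Hence, conditionally on the past, each round contributes fresh i.i.d.\ draws of mean $\alpha\wb r$ that are independent of the greedy choice $\gA_t$. This decoupling (the whole reason for discarding the observed rewards, and for not needing explicit exploration) makes $\wh\alpha_t-\alpha\wb r$ a zero-mean average of independent sub-Gaussian vectors, giving $\|\wh\alpha_t-\alpha\wb r\|\lesssim \Phi'/\sqrt{N_t}$ with high probability.

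The third step converts the estimator error into a weight error. Because $w_t=w(\wh\alpha_t)$ and $\wt w^+=w(\alpha\wb r)$ use the same map $w(\beta)_j=\beta_j/(\sigma_j^2\|\beta\cdot\sigma^{-1}\|^2)$ analyzed in Lemma~\ref{lem:error_oracle} and Thm.~\ref{thm:eps.greedy.glm}, I would bound $\|w_t-\wt w^+\|$ by the local Lipschitz constant of $w$ — precisely where the quantity $S$ reappears — times $\|\wh\alpha_t-\alpha\wb r\|$. Inserting the two concentration bounds into the per-round inequality yields a bound of order $1/\sqrt{N_t}$ up to the appropriate constants. I would then split the horizon into (i) an initial burn-in, during which $\wh\alpha_t$ is not yet guaranteed close enough to $\alpha\wb r$ for the smoothness bound to apply and $\mathrm{reg}_t$ is controlled crudely (producing the $T$-independent first term), and (ii) the remaining rounds, where $\sum_t N_t^{-1/2}\lesssim \sqrt T/\sqrt{\bar{K}_T}$ — using that $N_t=\sum_{l<t}K_l$ is increasing and the harmonic-average definition of $\bar{K}_T$ — produces the $\widetilde{O}(\sqrt T)$ term.

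The main obstacle I anticipate is twofold. Analytically, $w$ is only \emph{locally} Lipschitz: it is singular where $\|\beta\cdot\sigma^{-1}\|\to 0$, so the clean first-order bound $\|w_t-\wt w^+\|\lesssim S\,\|\wh\alpha_t-\alpha\wb r\|$ is valid only once $\wh\alpha_t$ is bounded away from the origin, which forces the burn-in split and a high-probability lower bound on $\|\wh\alpha_t\cdot\sigma^{-1}\|$ (this is what the first term, with its $1/(\wb r\|\alpha\|_\infty\sqrt K)$ scale, accounts for). Probabilistically, because $K_t$ may be chosen adaptively, the summands defining $\wh\alpha_t$ arrive in blocks of data-dependent size, so the concentration of $\wh\alpha_t$ should be argued through a martingale or self-normalized bound rather than a plain Hoeffding inequality, with the same care taken to keep every event on a single $1-\delta$ event via a union bound. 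Once these are in place, the remainder is the bookkeeping that assembles $\Phi'$, $S$, $K$, $\wb r$, and the summation estimate into the stated constants.
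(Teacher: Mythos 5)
Your proposal follows essentially the same route as the paper's proof: the scale-invariance of the top-$K$ selection under the rescaled oracle weights $w^+/\mathbb{E}_{r\sim\nu}(r)$, the add-and-subtract decomposition with the greedy term $\sum_{i\in\gA_t^+}\langle w_t,\phi_{i,t}\rangle-\sum_{i\in\gA_t}\langle w_t,\phi_{i,t}\rangle\leq 0$, concentration of the decision-independent average $\wh\alpha_t$ around $\mathbb{E}_{r\sim\nu}(r)\alpha$ at rate $1/\sqrt{\sum_{l<t}K_l}$, the locally Lipschitz weight map contributing the constant $S$ with a burn-in phase handling its singularity near the origin, and the harmonic-average summation yielding the $\wt{O}(\sqrt{T})$ term. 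Your remark that adaptively chosen $K_t$ would call for a martingale rather than a plain Hoeffding argument is a fair refinement the paper glosses over, but it does not change the structure of the argument.
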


The most interesting aspect of the previous theorem is that the regret is of order $\wt{O}(\sqrt{T})$, since \textsc{ESAG} does not pay for the sharp separation between exploration and exploitation steps as for \textsc{GLM-$\varepsilon$-greedy}.


\section{EMPIRICAL VALIDATION}\label{sec:experiments}

In order to study different aspects of the settings and algorithms introduced in the previous sections, we focus on both synthetic and real data experiments. We report further results in the supplementary material.

\subsection{Synthetic Data}

We first validate our algorithms on synthetic data. We consider $K_t = K_{\max} = 20$ arms where for each arm we get $J=10$ evaluations. The reward distribution $\nu$ is a Gaussian distribution centered at $0$ and truncated between $[0,20]$. We consider both the logistic case with $g(x) = (1 + \exp(-x))^{-1}$ and the linear case where for all evaluators $f_{j}(x) = \alpha_{j}x$.
At the beginning of each experiment, we draw the coefficients $\alpha_j$ and the parameters $\sigma_j$ from uniform distributions in $[\alpha_0/2; 3\alpha_0/2]$ and $[\sigma_0/2; 3\sigma_0/2]$ respectively. As discussed in Sect.~\ref{sec:general.case}, a critical term characterizing the problem structure is the ratio $\alpha_j / \sigma_j$. We then set $\alpha_0=1$ and consider three different values for $\sigma_0$ such that $\alpha_0/\sigma_0 \in \{0.1, 1, 10\}$. Finally, the noise in the evaluations are generated as $\eps_{i,t,j} \sim\mathcal{N}(0, \sigma_j^2)$ in the linear case whereas in the GLM case $\eps_{i,t,j}$ is drawn from a truncated centered gaussian distribution with variance $2\sigma_{j}^{2}$. We average all results over $80$ runs and we report $95\%$ confidence intervals. Additional details are reported in the supplement.



\textbf{Oracle performance.} Before investigating the performance of the learning algorithms, we compare the performance of the oracle strategy to a simple average strategy that defines $\wh{r}_{i,t}^{\text{avg}} = 1/J \sum_{j=1}^J f_{i,t,j}$ and ranks arms accordingly. We also study how the suboptimality gap of the oracle changes as $J$ increases for both the GLM and linear case. As shown in Fig.~\ref{fig:single_plot}a, in both settings that oracle strategies outperform the simple average. Furthermore, as predicted by Lemma~\ref{lem:error_oracle} and~\ref{lem:error_oracle_affine}, the oracle suboptimality gap decreases as $J$ increases, but it plateaus to a fixed value for GLM, while it tends to zero for the linear case.

\subsubsection{The GLM Case}

We consider different types of algorithms:
\textsc{GLM-$\varepsilon$-greedy} (Alg.~\ref{alg:eps.greedy.glm}); \textsc{GLM-$\varepsilon$-greedy-all}, it has same structure as Alg.~\ref{alg:eps.greedy.glm} but uses samples from both explorative and exploitative steps to build the estimator $\wh\alpha$; \textsc{GLM-EvalBasedUCB}, it uses all samples to build an estimator $\wh\alpha$ and leverages a high-probability confidence interval on $\wh\alpha$ to derive optimistic weights $w$ and rank and select arms accordingly;
\textsc{GLM-LinUCB}, the algorithm of~\citet{abbasi2011improved} using $g^{-1}(\phi_{i,t})$ as features;
\textsc{GLM-ESAG}, Alg.~\ref{alg:greedy.affine} adapted for the GLM  case;
\textsc{Rand}, the fully random strategy;
\textsc{GLM-Greedy}, the greedy strategy using the MLE estimator $\wh\alpha$;
\textsc{Exp4.P}, the bandit with expert advice algorithm in~\cite{beygelzimer2011contextual}.


\textbf{Estimation Bias.} As discussed in Sect.~\ref{ssec:general.algorithm}, one of the critical aspects that motivated the use of an $\varepsilon$-greedy approach and led to the $\widetilde{O}(T^{2/3})$ is the fact that whenever the set of arms is chosen according to the noisy evaluations $\{f_{i,t,j}\}$, the dependency between $\epsilon_{i,t,j}$ and $\gA_t$ may create a bias when estimating the parameters $\alpha_j$ using the samples $r_{i,t}$ observed after selecting $\gA_t$. We illustrate this effect in Fig.~\ref{fig:single_plot}b, where we report the error of the estimates $\wh{\alpha}_t$ computed by \textsc{GLM-$\varepsilon$-greedy} and \textsc{GLM-$\varepsilon$-greedy-all} w.r.t.\ the true parameters $\alpha$. While the error of the estimator computed by  \textsc{GLM-$\varepsilon$-greedy} decreases over time, the error for \textsc{GLM-$\varepsilon$-greedy-all} has a residual bias due to the estimation procedure. Similar results can be shown for all the algorithms (e.g., \textsc{GLM-ESAG}) that rely on samples generated by selecting $\gA_t$ based on the evaluations $\{f_{i,t,j}\}$ and they can be reproduced in the linear setting as well. 


\textbf{Regret Performance.} We compare the performance of different learning algorithms in terms of relative regret w.r.t.\ the oracle defined in Sec.~\ref{sec:general.case}.
%
%
We remove from Fig.~\ref{fig:single_plot}c all algorithms (i.e., \textsc{RAND}, \textsc{Greedy}, \textsc{EXP4.P}) that suffer large linear regret to avoid loosing resolution on the other algorithms. While \textsc{GLM-ESAG} and \textsc{GLM-LinUCB} have better regret, they both have a linear regret that keeps increasing over time, while \textsc{GLM-$\varepsilon$-greedy} has a sublinear regret.\footnote{Notice that we have not actively tried to find problem parameters that would make the linear regret of \textsc{GLM-ESAG} and \textsc{GLM-LinUCB} larger than \textsc{GLM-$\varepsilon$-greedy} in a shorter time. The linear regret of \textsc{GLM-ESAG} is due to the residual estimation bias illustrated in Fig.~\ref{fig:single_plot}b.}





\begin{figure*}
    \centering
    \includegraphics[width=\textwidth]{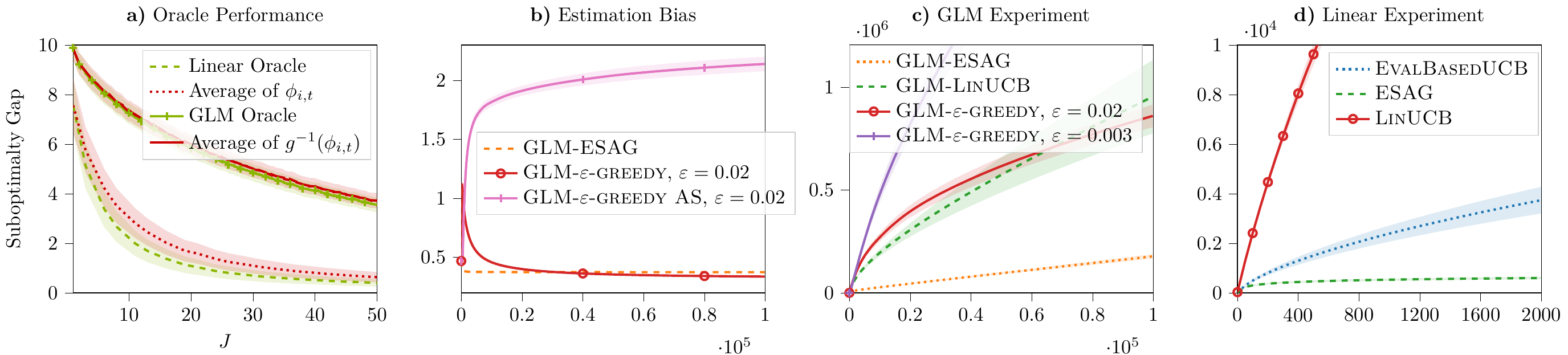}
    \caption{\small 
    \textbf{a)} Average suboptimality gap $\Delta_t^\mathfrak{O}$ for the oracle strategy in the GLM and linear cases as a function of the number of evaluators $J$.
    \textbf{b)} Estimation error $\| \wh{\alpha}_{t} - \alpha\|$.
    \textbf{c)} Regret w.r.t.\ to the oracle as defined in Sect.~\ref{sec:general.case} for the GLM case.
    \textbf{d)} Regret w.r.t.\ to the oracle as defined in Sect.~\ref{sec:general.case} for the linear case in the high noise regime (i.e., $\alpha_{j} = 0.1\sigma_{j}$).
    }
    \label{fig:single_plot}
\end{figure*}

\subsubsection{The Linear Case}


We also study the linear case and compare the linear versions of the algorithms described above. As illustrated in Fig.~\ref{fig:single_plot}d, the crucial difference w.r.t.\ the GLM case is that now \textsc{ESAG} has sublinear regret, whereas other algorithms have linear regret. Interestingly, \textsc{EvalBasedUCB}, despite the bias introduced by using samples obtained by selecting actions based on the noisy observation, is able to learn a good strategy and it outperforms \textsc{LinUCB}, but still suffers linear regret.

\subsection{Content Review Prioritization}

We now move to a real-world problem to investigate the performance of our algorithms when their assumptions are no longer verified.

\textbf{Data description.} We consider a small dataset of content shared on a large social media firm that has been reported for violating the platforms' community standards. In order to ensure that the most harmful content is prioritized for reviewing, the platform assigns badness score for each piece of content which increases with the severity of the content. We consider four different classifiers that provide badness estimates for the sampled content. Each of these classifiers are trained on the real data and follow different modeling architectures to predict the badness score. Our objective is to leverage scores from these four classifiers to identify the most harmful subset of content and flag them for the platform to review and action them.

\begin{table}\label{fig:cumul.badness}
\small
    \caption{Cumulative badness over $T = 2000$ steps\vspace{-0.1in}}
    \label{fig:cumul.badness}
    \begin{center}
        \begin{tabular}{ |c|c| }
        \hline
        Alg. & Badness 
        \\
        \hline
        \hline
        \textsc{Rand} & $30076.18$ 
        \\
        \textsc{GLM-$\varepsilon$-greedy} & $53403.6$ 
        \\
        \textsc{GLM-$\varepsilon$-greedy-all} & $53419$ 
        \\
        \textsc{GLM-EvalBasedUCB} & $53328.3$
        \\
        \textsc{GLM-LinUCB} & $30910.5$ 
        \\
        \textsc{GLM-ESAG} & $53393.9$
        \\
        \textsc{GLM-Greedy} & $53371.9$  
        \\
        \textsc{Exp4.P} & $47766.7$ 
        \\
        \textsc{EvalBasedUCB} & $\mathbf{90768.2}$ 
        \\
        \textsc{LinUCB} & $\mathbf{90332.2}$ 
        \\
        \textsc{ESAG} &  $\mathbf{90790.5}$ 
        \\
        \textsc{Greedy} & $69956.3$ 
        \\
        \hline
        \end{tabular}
        \vspace{-0.2in}
    \end{center}
\end{table}

\textbf{Performance.} We evaluate the cumulative badness of the content selected by the algorithms. The higher the score the better. The results are reported in Tab.~\ref{fig:cumul.badness}.
We first notice that all learning algorithms perform significantly better than the random strategy, thus indicating that the GLM and linear assumptions are accurate enough to return meaningful rankings. Nonetheless, we notice that GLM-based algorithms do not perform as well as linear ones, probably due to the choice of the logistic function, which, in this case, does not fit data accurately. On the other hand, \textsc{ESAG} is the algorithm that performs best, followed by \textsc{EvalBasedUCB} and \textsc{LinUCB}. Notice that in the case of real data, we may even expect \textsc{LinUCB} to perform better, since it relies on somewhat less restrictive assumptions (no assumption is made on how the features are generated) and it relies on the true rewards to estimate parameters (this is also the case for \textsc{EvalBasedUCB}). This is in contrast with \textsc{ESAG} that exclusively builds on the evaluations, which clearly do not respect an exact linear model, to estimate the unknown parameters. This shows that, even in problems where the assumptions do not hold, \textsc{ESAG} is robust enough and it is competitive w.r.t.\ a large variety of algorithms.


\section{DISCUSSION}\label{sec:discussion}

We studied a MAB problem where the learner is provided with noisy and biased evaluations of the true reward for each arm. We showed that under specific assumptions it is possible to design learning algorithms that are able to compete to oracle strategies both in theory and in practice. The empirical validation on real data also shows that this model and the associated algorithms are promising for solving challenging real-world problems.

\textbf{Extensions.} There is a number of directions that could be pursued to extend our current results.

\begin{itemize}[noitemsep,topsep=0pt,parsep=0pt,partopsep=0pt,leftmargin=.4cm]
    \item \textit{Evaluation functions.} The GLM and linear assumptions are relatively strong. A natural venue of improvement is to generalize our results to richer function spaces, such as Gaussian processes.
    \item \textit{Persistent arms.} While we assume that the set of arms is ``refreshed'' at each round, we can easily extend our setting to the case where all the arms that are not selected in $\gA_t$ remain in the pool of arms available at the next round (e.g., content that has not been reviewed). All our results naturally extend to this case, except for \textsc{ESAG}, which would not be able to use $K_t$ samples at each round, but would rather get $K$ new samples corresponding to the arms replaced at each round.
    \item \textit{Heteroschedastic noise.} In the model illustrated in Fig.~\ref{foot:evaluation.function}, the noise $\epsilon_{i,t,j}$ is heteroschedastic, where the variance may depend on the reward value $r$. While our model leverages an upper-bound $\sigma_j$ on the actual variance, better adapting to the reward-dependent variance may improve the performance.
\end{itemize}

\newpage{}

\bibliographystyle{plainnat}

\appendix
\onecolumn

\section{LOWER BOUND}\label{app:lower.bound}

    In this appendix, we prove why no algorithm can compute the top $K$ arms at every time step. More, precisely we prove the following Lemma.

\begin{lemma}\label{lem:lower.bound}
Let consider the linear case in Asm.~\ref{asm:affine.setting} and parameters $K_t=K_{\max}$, $K=1$, $\alpha_j=1$ for all $j=1,\ldots,J$ and a noise distributed as $\mathcal{N}(0,\sigma_j)$ with $\sigma_j=\sigma_0$ for all $j=1,\ldots,J$. The learner $\mathfrak{A}$ receives as input the evaluations $f_{i,t,j} = r_{i,t} + \epsilon_{i,t,j}$ and we denote by $\mathfrak{A}(\{f_{i,t,j}\})$ the arm returned by $\mathfrak{A}$. {\color{black}For all arms $i\leq K_{t}$ the reward $r_{i,t}$ is sampled from a Bernoulli distribution $\text{Ber}(1/2)$. At every step, let's define $I_{t}^{\star} = \{ i\leq K_{t} \mid r_{i,t} = 1\}$ the set of optimal arms.} Then any learner $\mathfrak{A}$ has a fixed non-zero probability to return the wrong ranking at each step, i.e.,
\begin{align}
\forall t, \forall \mathfrak{A}, \exists \delta > 0, \mathbb{P}_{\text{Ber},\{\epsilon_{i,t,j}\}} \Big[ \mathfrak{A}(\{f_{i,t,j}\}) {\color{black}\not\in I_{t}^{\star}}\Big] \geq \delta.
\end{align}
\end{lemma}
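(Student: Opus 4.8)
The plan is to exhibit two reward configurations that are provably hard to tell apart from the evaluations, show that each occurs with positive prior probability, and conclude via a two-point (Le Cam) argument that on at least one of them every learner errs with constant probability. Throughout I fix $K_t = K_{\max}$ as in the statement. Since the noise is i.i.d.\ Gaussian and $\alpha_j=1$, for each arm the per-arm empirical mean $\tfrac1J\sum_j f_{i,t,j}$ is a sufficient statistic distributed as a Gaussian centered at $r_{i,t}$; this is not strictly needed but clarifies that the only information about $r_{i,t}$ is carried by these noisy aggregates.

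\textbf{Two indistinguishable instances.} I would consider the reward vectors $a^{(1)}=(1,0,\dots,0)$ and $a^{(2)}=(0,1,0,\dots,0)$, where arm $1$ (resp.\ arm $2$) is the \emph{unique} arm with reward $1$. Under the $\mathrm{Ber}(1/2)$ prior each arises with probability exactly $2^{-K_{\max}}$; call the events $A_1,A_2$, and note $I_t^\star=\{1\}$ on $A_1$ and $I_t^\star=\{2\}$ on $A_2$. Conditioned on $A_m$, the table $\{f_{i,t,j}\}$ has law $P_m$, a product of Gaussians on $\mathbb{R}^{K_t\times J}$; the two laws differ only in which of arms $1,2$ carries the mean-one block. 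For any (possibly randomized) learner $\mathfrak{A}$, define the binary test $\psi=1$ if $\mathfrak{A}=1$ and $\psi=2$ otherwise; then $\mathbb{P}_{P_2}[\mathfrak{A}\neq 2]\geq \mathbb{P}_{P_2}[\mathfrak{A}=1]=\mathbb{P}_{P_2}[\psi=1]$, and the standard two-point inequality gives
\[
\mathbb{P}_{P_1}[\mathfrak{A}\neq 1] + \mathbb{P}_{P_2}[\mathfrak{A}\neq 2] \;\geq\; \mathbb{P}_{P_1}[\psi=2]+\mathbb{P}_{P_2}[\psi=1] \;\geq\; 1 - d_{\mathrm{TV}}(P_1,P_2).
\]
Because $A_1,A_2$ are disjoint, each of probability $2^{-K_{\max}}$, and on $A_m$ the output is wrong exactly when $\mathfrak{A}\neq m$, the law of total probability yields
\[
\mathbb{P}\big[\mathfrak{A}(\{f_{i,t,j}\})\notin I_t^\star\big] \;\geq\; 2^{-K_{\max}}\big(1 - d_{\mathrm{TV}}(P_1,P_2)\big).
\]

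\textbf{Bounding the total variation.} It remains to show $d_{\mathrm{TV}}(P_1,P_2)<1$ strictly. Since $P_1,P_2$ are Gaussian product measures with everywhere-positive densities, they are mutually absolutely continuous, so their overlap $\int\min(p_1,p_2)$ is strictly positive, and $d_{\mathrm{TV}}(P_1,P_2)=1-\int\min(p_1,p_2)<1$. Concretely, only the blocks of arms $1$ and $2$ differ, so the overlap factorizes and is lower-bounded by the (fixed, positive) overlap of $\mathcal{N}(1,\cdot)^{\otimes J}$ and $\mathcal{N}(0,\cdot)^{\otimes J}$, depending only on $J$ and $\sigma_0$. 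Setting $\delta := 2^{-K_{\max}}(1 - d_{\mathrm{TV}}(P_1,P_2))>0$ proves the claim; as the construction is independent of $t$ and of $\mathfrak{A}$, the bound holds for all $t$ and all learners.

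\textbf{Main obstacle.} The only delicate point is the uniform positivity of $\delta$: Pinsker's inequality is useless here, since the KL divergence between the two tables grows linearly in $J$ (it is $\Theta(J/\sigma_0^2)$), so $d_{\mathrm{TV}}\to 1$ and $\delta$ necessarily shrinks as $J\to\infty$. The resolution is that I only need $\delta$ \emph{strictly positive for every finite $J$}, which follows from mutual absolute continuity (positivity of the Gaussian overlap integral) rather than any quantitative bound; one may optionally record the explicit two-arm overlap as a concrete constant. I would also note in passing a cruder shortcut giving existence of some $\delta>0$: the event that all rewards equal $0$ has probability $2^{-K_{\max}}$ and forces $I_t^\star=\emptyset$, so any returned arm is wrong — but this does not reflect the role of the evaluation noise, which is the phenomenon the lemma is meant to capture.
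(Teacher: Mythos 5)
Your proof is correct, and it is in fact tighter than the paper's own argument while building on the same core construction. Both proofs hinge on the same idea: two reward configurations that swap which of two arms is the unique optimal one, so that the evaluation tables are statistically indistinguishable. The difference is in execution. The paper argues by contradiction: it assumes near-perfect identification, introduces the two swapped tables $X^{0,1}$ and $X^{1,0}$, and claims $\mathbb{P}\left(X^{0,1}=X^{1,0}\right)>0$, computing for each coordinate the integral $\frac{1}{2\pi\sigma_0^2}\int\exp\bigl(-\bigl[(x-a)^2+(x-b)^2\bigr]/2\sigma_0^2\bigr)\,dx$. As literally stated this is flawed: two independent continuous (Gaussian) random variables are equal with probability zero, and that integral is the density of their difference at zero, not a probability of equality. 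Your total-variation formulation is precisely the rigorous repair of this step --- mutual absolute continuity of the two Gaussian product measures gives $\int\min(p_1,p_2)>0$, hence $d_{\mathrm{TV}}(P_1,P_2)<1$, which is the correct way to express that no test separates the two laws perfectly. Your Le Cam two-point bound then yields an explicit constant $\delta=2^{-K_{\max}}\bigl(1-d_{\mathrm{TV}}(P_1,P_2)\bigr)$ that is uniform over $t$ and over all (even randomized) learners, whereas the paper restricts attention to deterministic algorithms and produces no explicit constant. Your closing observations are also apt: the remark that Pinsker is useless here (KL of order $J/\sigma_0^2$) correctly identifies why only the overlap argument works, and the ``cruder shortcut'' via the event $I_t^\star=\emptyset$ does, as you say, technically prove the lemma as stated --- a looseness in the statement itself that the paper sidesteps by conditioning on both the optimal and suboptimal sets being non-empty, and that your main argument deliberately avoids exploiting.
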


\begin{proof}
    We reason by contradiction and assume that there exists a deterministic algorithm $\mathfrak{A}$ and a time step $t$ such that
    \begin{align}\label{eq:false_hyp}
        \forall \delta > 0, \qquad \mathbb{P}_{\text{Ber},\{\epsilon_{i,t,j}\}} \Big[ \mathfrak{A}(\{f_{i,t,j}\}) {\color{black}\not\in I_{t}^{\star}} \Big] \leq \delta.
    \end{align}
    {\color{black}Now given the rewards $(r_{i,t})_{i\leq K_{t}}$} the distribution of the evaluations $(f_{i,t,j})_{j}$ is distributed as $\mathcal{N}\left(({\color{black}r_{i,t}})_{j\leq J}, \sigma_{0}I_{J}\right)$ for each $i$. 
    {\color{black} Now, let's consider the set of sub optimal arms, $I_{t}^{-} := \{ i\leq K_{t} \mid r_{i,t} = 0\}$ because the rewards are independent then with probability at least $\frac{1}{4}$ the set $I_{t}^{+}$ and $I_{t}^{-}$ are both non-empty.}
    {\color{black}For an index $i_{0}\leq K_{t}$, let $X^{0,1} \in \mathbb{R}^{K_{t} \times J}$ and  $X^{1, 0}\in   \mathbb{R}^{K_{t} \times J}$} be two random variables distributed sampled
    from the same distributions as the {\color{black}$(f_{i,t,j})_{i,j}\mid I_{t}^{\star} = \{i_{0}\}, I_{t}^{-} = \{ i_{-0}\}$ and $(f_{i,t,j})_{i,j}\mid I_{t}^{-} = \{i_{0}\}, I_{t}^{\star} = \{ i_{-0}\}$ respectively}. We then have that:
    \begin{align}
        \mathbb{P}\left( {\color{black}X^{0,1} = X^{1,0}}\right) > 0
    \end{align}
    This is a direct consequence of the independence of the noise and {\color{black} rewards}. Indeed,
    \begin{align}
        \mathbb{P}\left( X^{0,1} = X^{1,0}\right) &= \mathbb{P}\left( \forall (i,j)\in \{1, \cdots, K_{t}\}\times \{1, \dots, J\} \text{ } X^{0,1}_{i,j} = X^{1,0}_{i,j}\right)\\
        &= \prod_{i,j} \mathbb{P}( X^{0,1}_{i,j} = X^{1,0}_{i,j})
    \end{align}
    But for a given couple {\color{black}$(i,j)\in \{1, \dots, K_{t}\} \times \{1, \dots, J\}$, $X^{0,1}_{i,j} \sim \mathcal{N}(\mathds{1}_{\{i = i_{0}\}} ,\sigma_{0}^{2})$, $X^{1, 0}_{i,j} \sim \mathcal{N}(\mathds{1}_{\{i \neq i_{0}\}}, \sigma_{0}^{2})$} and are independent. Therefore, the probability of those random variables being equal is:
    \begin{align}
        {\color{black}\mathbb{P}\left( X^{0,1}_{i,j}= X^{1,0}_{i,j}\right) = \frac{1}{2\pi\sigma_{0}^{2}}\int_{-\infty}^{+\infty}  \exp\left(- \frac{\left[\left(x - \mathds{1}_{\{i = i_{0}\}} \right)^{2} + \left(x - \mathds{1}_{\{i \neq i_{0}\}} \right)^{2}\right]}{2\sigma_{0}^{2}}\right)dx > 0}
    \end{align}
    Let's note $\gamma> 0$ such that $\gamma \leq {\color{black}\mathbb{P}\left( X^{0,1}_{i,j} = X^{1,0}_{i,j}\right)}$ for all $i$ then we have that:
    $\mathbb{P}\left( X^{0,1} = X^{1, 0}\right) \geq \gamma^{K_{t}J} >0$. This implies that there exists a $\delta_{0}$ such that {\color{black}$\mathbb{P}_{\text{Ber},\{\epsilon_{i,t,j}\}} \Big[ \mathfrak{A}(X^{0,1}) \in I_{t}^{\star}, X^{0,1} = X^{1,0}, I_{t}^{\star} = \{ i_{0}\}, I_{t}^{-} = i_{-0}\Big] \geq \delta_0 >0$, thanks to Eq.~(\ref{eq:false_hyp})} but then on this event we have that: 
    {\color{black}\begin{align}
        \delta_{0} \leq \mathbb{P}_{\text{Ber},\{\epsilon_{i,t,j}\}} \Big[ \mathfrak{A}(X^{0,1}) \in I_{t}^{\star}, X^{0,1} = X^{1,0}, I_{t}^{\star} = \{ i_{0}\}, I_{t}^{-} = i_{-0}\Big] &= \mathbb{P}_{\text{Ber},\{\epsilon_{i,t,j}\}} \Big[ \mathfrak{A}(X^{1,0}) = i_{0}, I_{t}^{\star} = \{ i_{0}\}, I_{t}^{-} = i_{-0}\Big]\\
        &\leq \mathbb{P}_{\text{Ber},\{\epsilon_{i,t,j}\}} \Big[ \mathfrak{A}(X^{1, 0}) =  i_{0}\Big] \\
        & \leq \mathbb{P}_{\text{Ber},\{\epsilon_{i,t,j}\}} \Big[ \mathfrak{A}(X^{1, 0}) \not\in  I_{t}^{\star}\Big] \leq \frac{\delta_{0}}{2}
    \end{align}}
    thanks to Eq.~\ref{eq:false_hyp}. This is not possible therefore by contradiction we have the result.
\end{proof}

Lem.~\ref{lem:lower.bound} shows that there exists a instance of the problem studied in this paper where it is not possible to retrieve systematically the $K$ actions maximiziing the reward $(r_{i,t})_{i\leq K_{t}})$. Therefore defining the regret with
respect to the true top $K$ actions.

\section{NOISE CORRELATION ISSUE}\label{app:noise.correlation}

The standard result on GLM are based theorems with the same structure as the following proposition \citep{filippi2010parametric}:

\begin{proposition}\label{prop:glm}
    Let $(\mathcal{F}_{k})_{k\geq 0}$ be a filtration, $(m_{k})_{k\geq 0}$ be an $\mathbb{R}^{d}$-valued stochastic process adapted to $(\mathcal{F}_{k})_{k}$,
    $(\eta_{k})_{k}$ be a real-valued martingale difference process adapted to $(\mathcal{F}_{k})_{k}$. Assume that $\eta_{k}$ is conditionnally sub-Gaussian in the sense that
    there exists some $R>0$ such that for any $\gamma \geq 0$, $k\geq 1$,
    \begin{align}
        \mathbb{E}(\exp(\gamma \eta_{k})\mid \mathcal{F}_{k-1}) \leq \exp\left( \frac{\gamma^{2}R^{2}}{2}\right) \text{ a.s }
    \end{align}
    Consider the martingale $\xi_{t} = \sum_{k=1}^{t} m_{k-1} \eta_{k}$ and the process $M_{t} = \sum_{k=1}^{t} m_{k-1}m_{k-1}^{\intercal}$. Assume that with probability
    one the smallest of $M_{d}$ is lower bounded by some positive constant $\lambda_{0}$ and that $\| m_{k}\|_{2} \leq c_{m}$ holds a.s. for any $k\geq 0$.
    The following hold true:
    \hspace{1cm} Let $$\kappa = \sqrt{3+2\log(1 + 2c_{m}^{2}/\lambda_{0})}$$
    For any $x\in \mathbb{R}^{d}$, $0< \delta \leq e^{-1}$, $t\geq \max(d,2)$, with probability at least $1 - \delta$,
    \begin{equation}
        |\langle x, \xi_{t}\rangle \leq \kappa R \sqrt{2\log(t)\log(1/\delta)}\| x\|_{M_{t}}
    \end{equation}
    Further, for any $0<\delta< \min\{1, d/e\}$, $t\geq \max\{ d, 2\}$, with probability at least $1 - \delta$,
    \begin{equation}
        \| \xi_{t} \|_{M_{t}^{-1}} \leq \kappa R \sqrt{2d\log(t)\log(d/\delta)}
    \end{equation}

\end{proposition}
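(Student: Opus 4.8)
The plan is to treat this as a standard self-normalized tail bound for a vector-valued martingale and to prove it by the method of mixtures (pseudo-maximization). First I would fix $\lambda\in\mathbb{R}^{d}$ and introduce the exponential process
\[
W_{t}^{\lambda}=\exp\!\Big(\langle\lambda,\xi_{t}\rangle-\tfrac{R^{2}}{2}\,\lambda^{\top}M_{t}\lambda\Big).
\]
Because $m_{k-1}$ is $\mathcal{F}_{k-1}$-measurable and $\eta_{k}$ is conditionally sub-Gaussian with parameter $R$, one checks the one-step inequality $\mathbb{E}[W_{t}^{\lambda}\mid\mathcal{F}_{t-1}]\le W_{t-1}^{\lambda}$, so $(W_{t}^{\lambda})_{t}$ is a nonnegative supermartingale started at $1$. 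This is the only place the sub-Gaussian hypothesis is used, and it is routine.

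Next I would integrate $W_{t}^{\lambda}$ against a Gaussian prior $\lambda\sim\mathcal{N}(0,V^{-1})$, producing the mixture $\overline{W}_{t}=\int W_{t}^{\lambda}\,h(\lambda)\,d\lambda$, which is again a nonnegative supermartingale with $\mathbb{E}[\overline{W}_{0}]=1$. The Gaussian integral is explicit and gives
\[
\overline{W}_{t}=\Big(\tfrac{\det V}{\det(V+R^{2}M_{t})}\Big)^{1/2}\exp\!\Big(\tfrac{1}{2}\,\xi_{t}^{\top}(V+R^{2}M_{t})^{-1}\xi_{t}\Big).
\]
Applying Ville's maximal inequality, $\mathbb{P}[\sup_{t}\overline{W}_{t}\ge 1/\delta]\le\delta$, and rearranging yields, uniformly in $t$ and with probability $1-\delta$, the regularized statement $\|\xi_{t}\|^{2}_{(V+R^{2}M_{t})^{-1}}\le 2\log\!\big(\det(V+R^{2}M_{t})^{1/2}\det(V)^{-1/2}/\delta\big)$.

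From here the two claims branch. For the fixed-$x$ inequality I would work in the scalar direction: $\langle x,\xi_{t}\rangle=\sum_{k}\langle x,m_{k-1}\rangle\eta_{k}$ is a real martingale whose conditional variance proxy is $R^{2}\|x\|^{2}_{M_{t}}$, a nondecreasing process pinched, for $t\ge d$, between $R^{2}\lambda_{0}\|x\|^{2}$ and $R^{2}tc_{m}^{2}\|x\|^{2}$ by the eigenvalue floor $\lambda_{\min}(M_{d})\ge\lambda_{0}$ and $\|m_{k}\|\le c_{m}$. A peeling argument over geometric layers of this variance range converts the fixed-variance sub-Gaussian tail into the self-normalized form; the layer count is $O(\log(tc_{m}^{2}/\lambda_{0}))$, and assembling the per-layer tail together with $\log(1/\delta)$ is what produces the factor $\kappa=\sqrt{3+2\log(1+2c_{m}^{2}/\lambda_{0})}$ together with the $\sqrt{2\log t\,\log(1/\delta)}$ dependence. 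For the full vector inequality I would instead take the mixture bound with $V$ proportional to $\lambda_{0}I$, use $M_{t}\succeq M_{d}\succeq\lambda_{0}I$ for $t\ge d$ to replace $(V+R^{2}M_{t})^{-1}$ by a constant multiple of $M_{t}^{-1}$, and control the log-determinant via $\det(R^{2}M_{t}+V)\le(\mathrm{tr}(R^{2}M_{t}+V)/d)^{d}\le(\cdots+R^{2}c_{m}^{2}t/d)^{d}$, which supplies the dimension factor $d$ and the $\log t$ inside the square root.

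The delicate part, and the main obstacle, is not the supermartingale construction but the passage from the naturally regularized quantity $\|\xi_{t}\|_{(V+R^{2}M_{t})^{-1}}$ to the unregularized $\|\xi_{t}\|_{M_{t}^{-1}}$ demanded by the statement: one must exploit the eigenvalue floor $\lambda_{0}$ to bound the distortion from dropping the regularizer while simultaneously keeping the trace/determinant estimate tight enough that the dimension and time dependence collapse to exactly $d\log t$ in the vector case and $\log t$ in the scalar case. Tracking these constants so that they assemble precisely into $\kappa$ is where essentially all the bookkeeping lies, and the hypotheses $t\ge\max(d,2)$ together with the imposed ranges on $\delta$ are exactly what keep the peeling bound and the $\det(V)$ normalization valid.
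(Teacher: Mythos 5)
First, a framing point: the paper never actually proves this proposition. It is quoted (typos included) from \citet{filippi2010parametric} in App.~\ref{app:noise.correlation}, purely as background to exhibit the structure of standard self-normalized GLM bounds and then explain why they \emph{cannot} be applied in the paper's setting, where the selected arms depend on the evaluation noise. So there is no in-paper proof to compare yours against; your proposal can only be judged as a reconstruction of the cited result.

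As such a reconstruction, your outline is sound and follows the two standard routes. The supermartingale $W_t^{\lambda}$, your closed form for the Gaussian mixture $\overline{W}_t$, and Ville's inequality are all correct, and combining $V=R^{2}\lambda_{0}I$ with $M_t\succeq M_d\succeq\lambda_{0}I$ (valid only for $t\ge d$, which is exactly why that hypothesis appears) and the determinant--trace bound gives $\|\xi_t\|^{2}_{M_t^{-1}}\le 4R^{2}\big(\tfrac{d}{2}\log(1+tc_m^{2}/(d\lambda_{0}))+\log(1/\delta)\big)$, an \emph{additive}-log bound that is generically sharper than the stated multiplicative one. Your peeling plan for the directional claim is essentially the argument of Rusmevichientong--Tsitsiklis that \citet{filippi2010parametric} adapt, and it is indeed where $\kappa$ originates. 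Two genuine caveats remain. First, all of the quantitative content is deferred: passing from your additive-log bounds to the stated product form $\log t\cdot\log(1/\delta)$ with the exact $\kappa$ relies on $\log(1/\delta)\ge 1$, $\log(d/\delta)\ge 1$, $\log t\ge\log 2$ (this is the real role of the conditions $\delta\le e^{-1}$, $\delta<d/e$, $t\ge\max(d,2)$), and the comparison is nearly tight in corner cases --- for $d=1$, $t=2$, $\delta\to e^{-1}$, $c_m^{2}/\lambda_{0}\to 0$ your bound gives $2R$ against a target of about $2.04R$ --- so it must be checked, not asserted. Second, the hypothesis as stated gives sub-Gaussianity only for $\gamma\ge 0$, while your supermartingale requires controlling $\exp(\gamma\eta_k)$ for $\gamma=\langle\lambda,m_{k-1}\rangle$ of either sign (and the first claim bounds an absolute value); you should state that you use the two-sided condition, which is what the original reference assumes and what the paper's transcription garbles.
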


In our setting, we can not use the type of results presented above because  estimators like ridge regression or maximum likelihood estimation rely on the assumption that the noise associated to the data is zero-mean. However,
in our setting the learner\footnote{Notice that the following discussion holds also for any non-learning algorithm, e.g., an oracle algorithm.} takes a decision $\gA_t$ based on the noisy evaluations $(\phi_{i,t})_{i}$ and this introduces a statistical dependence that biases the distribution of the noise affecting the evaluations. For simplicity, assume $K = 1$ then the action $a_{t}$ is a function of the noise in the observations $(\phi_{i,t})$ and, in general, we have that  $\mathbb{E}\left(\epsilon_{a_{t},t,j} \right) \neq 0$ for all evaluators $j\leq J$ (where $(\epsilon_{i,t,j})$ is the noise in the evaluation $f_{i,t,j}$) although for any fixed non random action $a\leq K_{t}$, $\mathbb{E}(\epsilon_{a,t,j}) = 0$.

More formally, let consider the GLM setting of Sec.~\ref{sec:general.case} assuming $K = 1$ and a MLE estimator using all the data $\mathcal{H}_{t} := \{ (f_{a_{t},t,j})_{j}, r_{a_{t},t} \}$ to estimate the parameter $\alpha$.
Let also assume that the action $a_{t}$ is chosen as $a_{t} = \arg\max_{i} \langle w_{t}, \phi_{i,t}\rangle$
where for any $t$, $w_{t}$ is a vector adapted to the $\sigma$-algebra $\sigma(\mathcal{H}_{t-1})$. For each evaluator $j$ we compute the MLE, $\wh{\alpha}_{j}$ as the solution to
\begin{align}
\sum_{l=1}^{t-1} (f_{a_{t}, t,j} - g(\wh{\alpha}_j r_{i_t,t}))r_{i_t,t} = 0
\end{align}
with $a_t$ is the arm selected by the algorithm and $r_{a_t,t}$ is the associated reward observed at the end of the round. In order to evaluate the accuracy of this estimator $\wh{\alpha}_{j}$, following the proof of~\citet{filippi2010parametric} or \citep{li2017provably}, we eventually need to control the term
\begin{align}
    \frac{\left|\sum_{l=1}^{t} r_{a_l,l}\epsilon_{a_l,l,j}\right|}{\sqrt{\sum_{l=1}^{t} r_{a_l,l}^{2}}},
\end{align}
where $\epsilon_{a_l,l,j}$ is the noise associated with the evaluation $j$ for arm $a_l$ at round $l$. One may be tempted to apply Prop.~\ref{prop:glm} with $m_{t-1} = r_{a_t,t}$ and $\eta_{t} = \epsilon_{a_t,t,j}$. The issue is that $a_{t} = \arg\max_{i} \langle w_{t-1}, g^{-1}(\phi_{i,t})\rangle $ where $w_{t}$ is a measurable function for of the past (in addition, note that this action selection process is used in most optimistic algorithms for GLM).
As a result,
\begin{align}\label{eq:biased.noise}
    \mathbb{E}\left( \epsilon_{a_{t},t,j}\mid \mathcal{H}_{t-1}, (r_{i,t})_{i\leq K_{t}}\right) \neq 0,
\end{align}
\Eqref{eq:biased.noise} can be further simplified when $g$ is the identity function, $J = 1$, $\alpha_j = 0$, $\sigma_j=\sigma$ and $w_{t} \neq 0$ then \Eqref{eq:biased.noise} reduces to
\begin{align}
    \mathbb{E}\left( \epsilon_{a_t, t, j}\mid \mathcal{H}_{t-1}, (r_{i,t})_{i\leq K_{t}}\right) = \mathbb{E}\left( \epsilon_{i,t}\mathds{1}_{\{i = \arg\max w_{t}\epsilon_{i,t}\}}\mid \mathcal{H}_{t-1}\right) \approx \sigma,
\end{align}
thus showing that $\epsilon_{a_t,t}$ is no longer a zero-mean variable. A way to address this issue would be to take a union bound over all the arms chosen over time $a_1,\ldots,a_t,\ldots,a_T$. Unfortunately, this would lead to take a union bound over $K^T$ elements, which would eventually lead to an additional $\sqrt{T}$ factor in Prop~\ref{prop:glm} and ultimately a linear regret bound. We further confirm this effect in several empirical validations, where algorithms relying on data generated based on the evaluations introduce a bias in the estimation of the parameters $\alpha$.

\section{GENERALIZED LINEAR MODEL}\label{app:glm_proof}

In this section, we present how to defined the oracle and the analysis of the regret in the case of a GLM model.

\subsection{Oracle in the GLM Model (Proof of Lemma~\ref{lem:error_oracle})}\label{app:glm.oracle}

We consider an oracle $\mathfrak{O}$ with access to the link function $g$, the parameters $(\alpha_{j})_{j\leq J}$ of the evaluation function, and the parameters $(\sigma_{j})_{j\leq J}$ of the noise distribution. Given the vector $\phi_{i,t}\in\R^J$ obtained by aggregating all the evaluations  $(f_{i,t,.})_{j}$ for each arm $i\leq K_{t}$, we recall that an oracle defines a set of weights $(w_{j})_{j\leq J}$ and predicts the reward of arm $i$ as
 \begin{equation}
     r_{i,t}^{\mathfrak{O}} = \langle w , g^{-1}(\phi_{i,t})\rangle
 \end{equation}



We aim at minimizing the gap between reward of the true top-K arms $i_1^\star, \ldots, i_K^\star$ and the reward of the \emph{estimated top-K arms} $i_1^\mathfrak{O}, \ldots, i_K^\mathfrak{O}$ according to the estimated rewards $r_{i,t}^{\mathfrak{O}}$
\begin{equation}
\begin{aligned}
(a) = \sum_{l=1}^{K} r_{i_{l}^\star,t} - \sum_{l=1}^{K} r_{i_{l}^\mathfrak{O},t}.
\end{aligned}
\end{equation}
By leveraging the definition of estimated top-K arms, we can rewrite the previous expression as
\begin{align*}
(a) &= \sum_{l=1}^{K} r_{i_{l}^\star,t} - \sum_{l=1}^{K} \widehat{r}_{i_{l}^\star,t}^{\mathfrak{O}} + \sum_{l=1}^{K} \widehat{r}_{i_{l}^\star,t}^{\mathfrak{O}} -
\sum_{l=1}^{K} \widehat{r}_{i_{l}^\mathfrak{O},t}^{\mathfrak{O}} + \sum_{l=1}^{K} \widehat{r}_{i_{l}^\mathfrak{O},t}^{\mathfrak{O}} -
\sum_{l=1}^{K} r_{i_{l}^\mathfrak{O},t} \\
&\leq \sum_{l=1}^{K} r_{i_{l}^\star,t} - \sum_{l=1}^{K} \widehat{r}_{i_{l}^\star,t}^{\mathfrak{O}} + \sum_{l=1}^{K} \widehat{r}_{i_{l}^\mathfrak{O},t}^{\mathfrak{O}} -
\sum_{l=1}^{K} r_{i_{l}^\mathfrak{O},t} \\
&= \sum_{l=1}^{K} \Big( r_{i_{l}^\star,t} - \widehat{r}_{i_{l}^\star,t}^{\mathfrak{O}}\Big) + \sum_{l=1}^{K} \Big( \widehat{r}_{i_{l}^\mathfrak{O},t}^{\mathfrak{O}} -
r_{i_{l}^\mathfrak{O},t} \Big) \\
&\leq 2\max_{i_1,\ldots,i_K}\sum_{l=1}^{K} \underbrace{\Big| \widehat{r}_{i_{l},t}^{\mathfrak{O}} - r_{i_{l},t}\Big|}_{(b)}.
\end{align*}
Using the GLM model of Asm.~\ref{asm:general.setting}, we have that for any $i=1,\ldots,K_t$, the estimated reward can be written as
\begin{equation}
     \widehat{r}_{i,t}^{\mathfrak{O}}= \sum_{j=1}^{J} w_{j}g^{-1}\left(g(\alpha_{j}\cdot r_{i,t}) + \epsilon_{i,t,j}\right).
\end{equation}
We can then measure the deviation between true reward and estimated reward as
\begin{align}
        (b) &= \sum_{j=1}^J w_{j}\Big[g^{-1}\left(g(\alpha_{j}\cdot r_{i,t}) + \epsilon_{i,t,j}\right) - g^{-1}(g(\alpha_{j}r_{i,t}))\Big] + \Big(\sum_{j=1}^{J} w_{j}\alpha_{j} - 1\Big)r_{i,t}\\
        &\leq c_g^{-1} \sum_{j=1}^J |w_{j}| \cdot |\epsilon_{i,t,j}| + \Big(\sum_{j=1}^{J} w_{j}\alpha_{j} - 1\Big)r_{i,t},
\end{align}
where in the first equality we introduce the term $\sum_{j=1}^{K} w_{j}\alpha_{j} =\sum_{j=1}^{K} w_{j} g^{-1}(g(\alpha_{j}r_{i,t}))$ and in the second step we leverage the fact that $g^{-1}$ is $c_{g}^{-1}$-Lipschitz. The last expression above contains two random realizations, i.e., $\epsilon_{i,t,j}$ and $r_{i,t}$, thus preventing from computing a fixed set of weights to minimize the performance gap $(a)$. Furthermore, while we assume the oracle to have prior knowledge about the parameters, we would like to avoid leveraging any knowledge on the true rewards $(r_{i,t})$. In order to remove any dependency on the rewards, we impose a constrain on the weights, such that $\sum_{j=1}^{J} w_{j}\alpha_{j} = 1$, thus removing the last term, and thus the rewards, in the previous expression. We can then focus on minimizing a high-probability upper bound of $(a)$
%
\begin{equation}
    \begin{aligned}
        (a) \leq 2c_g^{-1}\max_{i_1,\ldots,i_K}\sum_{l=1}^{K}\sum_{j=1}^J |w_{j}| \cdot |\epsilon_{i,t,j}| 
    \end{aligned}
\end{equation}
where we leverage the fact that the noise $\epsilon_{i,t,j}$ are sub-Gaussian then $|\epsilon_{i,t,j}| - \mathbb{E}(|\epsilon_{i,t,j}|)$ is also sub-Gaussian with the same parameter and by Jansen inequality its mean is $\mathbb{E}(|\epsilon_{i,t,j}|) \leq \sqrt{\mathbb{E}(\epsilon_{i,t,j}^{2})} = \sigma_{j}$.
Therefore for any time $t$ and any set $U\subset \{1, \cdots, K_{t}\}$ of size $K$ , we have by Chernoff inequality, for any $x>0$
\begin{equation}
    \mathbb{P}\left( \left|\sum_{u\in U}\sum_{j=1}^{J} |w_{j}\epsilon_{u,t,j}| - \mathbb{E}(|w_{j}\epsilon_{u,t,j}|)\right| \geq x\right) \leq 2\exp\left(- \frac{x^{2}}{2K\sum_{j=1}^{J} (w_{j}\sigma_{j})^{2}} \right)
\end{equation}
Therefore we have taking a union over all the set of size $K$,
\begin{align}
    \mathbb{P}\left(\max_{i_{1}, \cdots, i_{K}} \sum_{l=1}^{K}\sum_{j=1}^{J} |w_{j}\epsilon_{i_{l},t,j}| - \mathbb{E}(|w_{j}\epsilon_{i_{l},t,j}|) \geq x \right) &\leq \sum_{U, |U| = K} \mathbb{P}\left(\sum_{u\in U}\sum_{j=1}^{J} |w_{j}\epsilon_{u,t,j}| - \mathbb{E}(|w_{j}\epsilon_{u,t,j}|) \geq x \right)\\
    &\leq \sum_{U, |U| = K} 2\exp\left(- \frac{x^{2}}{2\sum_{j=1}^{J} (w_{j}\sigma_{j})^{2}} \right) \\
    &= 2\binom{K_{t}}{K}\exp\left(- \frac{x^{2}}{2K^{2}\sum_{j=1}^{J} (w_{j}\sigma_{j})^{2}} \right) 
\end{align}
Therefore for any $\delta\in(0,1)$ we have that with probability at least $1 - \delta$:
{\small\begin{align}
    \max_{i_1,\ldots,i_K}\sum_{l=1}^{K}\sum_{j=1}^J |w_{j}| \cdot |\epsilon_{i,t,j}| - \mathbb{E}(|w_{j}| \cdot |\epsilon_{i,t,j}|) \leq K\sqrt{2\sum_{j=1}^{J} (w_{j}\sigma_{j})^{2}\ln\left(\frac{2\binom{K_{t}}{K}}{\delta} \right)} \leq K\sqrt{2K\sum_{j=1}^{J} (w_{j}\sigma_{j})^{2}\ln\left(\frac{2eK_{\text{max}}}{K\delta} \right)}
\end{align}}
using the standard inequality $\binom{K_{t}}{K} \leq \left(\frac{eK_{t}}{K} \right)^{K}$ and $K_{t} \leq K_{\text{max}}$.
Therefore, 
\begin{align}
    (a) &\leq \frac{2K}{c_{g}}\sqrt{K\sum_{j=1}^{J} w_{j}^{2}\sigma_{j}^{2}\ln\left(\frac{K_{t}e}{\delta}\right)} + \frac{K}{c_{g}}\sum_{j=1}^{J} |w_{j}| \sigma_{j}\leq \frac{2K}{c_{g}}\sqrt{K\sum_{j=1}^{J} w_{j}^{2}\sigma_{j}^{2}\ln\left(\frac{K_{\text{max}}e}{\delta}\right)} + \frac{K}{c_{g}}\sqrt{J\sum_{j=1}^{J} (w_{j} \sigma_{j})^{2}},
\end{align}
Finally, this leads to the optimization problem in Eq.~\ref{eq:optimal.weights.glm}. By plugging the optimal solution back into the optimization problem, we also obtain the stated upper bound on the suboptimality gap.

\subsection{Regret of $\epsilon$-greedy Algorithm for GLM Model (Proof of Thm.~\ref{thm:eps.greedy.glm})}

We now move on proving the regret upper bound of Thm.~\ref{thm:eps.greedy.glm}. The first step is to bound the norm of the noisy evaluations at every step. That is the object of the following lemma.

\begin{lemma}\label{lem:features.bound}
Let $\phi_{i,t} = (f_{i,t,1}, \ldots, f_{i,t,j}, \ldots, f_{i,t,J}) \in \R^{J}$ the vector summarizing all the evaluations observed at round $t$. Then for any $\delta\in(0,1)$ with probability $1-\delta$ for any $t\leq T$, for any set $\gA_t$ of $K$ arms chosen in $\{1,\ldots,K_{\max}\}$ possibly adaptive to the evaluations, it holds that
\begin{align*}
    \big\|\sum_{i\in\gA_t} \phi_{i,t}\big\|_{2} \leq \Phi :=  2K\|\sigma\|_{\infty}\left( 2\sqrt{J} + \sqrt{K\ln\left(\frac{2eK_{\text{max}}T}{K\delta}\right)}\right) + 2KJ\|g \|_{\infty}
\end{align*}
where $\|\sigma\|_{\infty} = \max_{j\leq J} \sigma_{j}$ and $\|g\|_{\infty} = \max_{j\leq J, x\in [0, C]} |g(\alpha_{j} x)|$
\end{lemma}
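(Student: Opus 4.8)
The plan is to separate each evaluation vector $\phi_{i,t}$ into a deterministic mean and a stochastic noise component and bound the two pieces independently. Writing $\phi_{i,t}=\mu_{i,t}+\xi_{i,t}$ with $\mu_{i,t}=(g(\alpha_j r_{i,t}))_{j\le J}$ and $\xi_{i,t}=(\epsilon_{i,t,j})_{j\le J}$, the triangle inequality gives $\big\|\sum_{i\in\gA_t}\phi_{i,t}\big\|_2\le\big\|\sum_{i\in\gA_t}\mu_{i,t}\big\|_2+\big\|\sum_{i\in\gA_t}\xi_{i,t}\big\|_2$. The mean part is deterministic and immediate: by Asm.~\ref{asm:rewards} we have $r_{i,t}\in[0,C]$, so every entry obeys $|g(\alpha_j r_{i,t})|\le\|g\|_\infty$, whence $\|\mu_{i,t}\|_2\le\|\mu_{i,t}\|_1\le J\|g\|_\infty$ and, summing $K$ such vectors, $\big\|\sum_{i\in\gA_t}\mu_{i,t}\big\|_2\le KJ\|g\|_\infty$. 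This already yields the deterministic term of $\Phi$ (up to the harmless factor $2$).

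The substance of the lemma is the noise term, and the obstacle is that $\gA_t$ is \emph{adaptive}: it is chosen as a function of the evaluations $\{f_{i,t,j}\}$, hence of the very noise $\xi_{i,t}$ we must control, so $\sum_{i\in\gA_t}\xi_{i,t}$ is \emph{not} a sum over a fixed index set and a sub-Gaussian tail bound cannot be applied to it directly (this is the same correlation phenomenon discussed in App.~\ref{app:noise.correlation}). I would remove the adaptivity by making the bound uniform over all admissible sets: at round $t$ there are at most $\binom{K_t}{K}\le\binom{K_{\max}}{K}$ subsets of size $K$, so it suffices to control $\big\|\sum_{i\in U}\xi_{i,t}\big\|_2$ for every \emph{fixed} subset $U$ of size $K$ and every $t\le T$, and then take a union bound over these at most $T\binom{K_{\max}}{K}$ events.

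For a fixed $U$ and $t$, the $j$-th coordinate $\sum_{i\in U}\epsilon_{i,t,j}$ is a sum of $K$ independent, zero-mean, $\sigma_j$-sub-Gaussian variables (Asm.~\ref{asm:noise}), hence $\sqrt{K}\sigma_j$-sub-Gaussian with $\sqrt{K}\sigma_j\le\sqrt{K}\|\sigma\|_\infty$. Thus $\sum_{i\in U}\xi_{i,t}\in\R^J$ is a random vector with independent sub-Gaussian coordinates of common proxy $s:=\sqrt{K}\|\sigma\|_\infty$, and a standard concentration bound for the Euclidean norm of such a vector gives, with probability $1-\delta'$, $\big\|\sum_{i\in U}\xi_{i,t}\big\|_2\lesssim s\big(\sqrt{J}+\sqrt{\ln(1/\delta')}\big)$, i.e.\ a ``Gaussian-width'' term of order $\sqrt{J}$ plus a tail term. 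Taking $\delta'=\delta/\big(T\binom{K_{\max}}{K}\big)$ and using $\binom{K_{\max}}{K}\le(eK_{\max}/K)^K$ turns $\ln(1/\delta')$ into at most $K\ln\!\big(\tfrac{2eK_{\max}T}{K\delta}\big)$, so the tail term becomes $\sqrt{K}\|\sigma\|_\infty\sqrt{K\ln(2eK_{\max}T/(K\delta))}$. Collecting the width and tail contributions yields a noise bound of order $\sqrt{K}\|\sigma\|_\infty\big(\sqrt{J}+\sqrt{K\ln(2eK_{\max}T/(K\delta))}\big)$, which is dominated term by term by — and hence implies — the stated quantity $2K\|\sigma\|_\infty\big(2\sqrt{J}+\sqrt{K\ln(2eK_{\max}T/(K\delta))}\big)$; together with the mean term this reproduces $\Phi$.

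The only genuine difficulty is the adaptivity of $\gA_t$; the rest is routine sub-Gaussian bookkeeping. Handling it through a union bound over all $\binom{K_{\max}}{K}$ size-$K$ subsets is what injects the factor $K$ inside the logarithm, and — if one instead crudely bounds $\big\|\sum_{i\in U}\xi_{i,t}\big\|_2\le\sum_{i\in U}\|\xi_{i,t}\|_2$ before concentrating — the overall linear-in-$K$ scaling of $\Phi$. This slack is precisely the source of the non-tight $K$-dependence that propagates into the regret of Thm.~\ref{thm:eps.greedy.glm} and that a sharper, non-union argument could in principle improve.
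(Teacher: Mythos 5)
Your proof is correct and follows essentially the same route as the paper's: decompose each evaluation into its bounded signal part and its sub-Gaussian noise part, bound the signal deterministically by $KJ\|g\|_\infty$, control the noise norm for a \emph{fixed} subset via standard sub-Gaussian norm concentration, and handle the adaptivity of $\gA_t$ by a union bound over all $T\binom{K_{\max}}{K}$ time/subset pairs, which is exactly what produces the $K\ln(2eK_{\max}T/(K\delta))$ term. The only (harmless) difference is that you exploit independence within a fixed subset to get a $\sqrt{K}\|\sigma\|_\infty$ sub-Gaussian proxy, which is slightly tighter than the paper's $K\|\sigma\|_\infty$, so your bound implies the stated $\Phi$ with room to spare.
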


\begin{proof}
    For a time $t\leq T$ and set of size $K$ $\gA_{t}\subset \{1,\cdots, K_{t}\}$ and an evaluator $j\leq J$ the sum of features can be decomposed as,
    \begin{align}
        \left(\sum_{i\in \gA_{t}} \phi_{i,t,j}\right)^{2} = \left(\sum_{i\in \gA_{t}} g(\alpha_{j} r_{i,t}) + \sum_{i\in \gA_{t}}\epsilon_{i,t,j}\right)^{2} &\leq 2\left( \sum_{i\in \gA_{t}} g(\alpha_{j} r_{i,t})\right)^{2} + 2\left(\sum_{i\in \gA_{t}}\epsilon_{i,t,j}\right)^{2}\\
        &\leq 2K^{2}\|g\|_{\infty}^{2} + 2\left(\sum_{i\in \gA_{t}}\epsilon_{i,t,j}\right)^{2}
    \end{align}
Therefore, we just need to bound $ \big\|\sum_{i\in\gA_t} \epsilon_{i,t}\big\|_{2}$ to finish the proof. But thanks to the subGaussian
assumption on the noise $\epsilon$, we have that:
\begin{align}
    \left\|\sum_{i\in \gA_{t}}\epsilon_{i,t} \right\|_{2} \leq 4K\sqrt{J\max_{j\leq J} \sigma_{j}^{2}} + 2K\max_{j\leq J} \sigma_{j} \sqrt{\log\left(\frac{1}{\delta} \right)}
\end{align}
because the vector $\sum_{i\in \gA_{t}} \epsilon_{i,t}$ is $K\max_{j}\sigma_{j}$ sub-Gaussian. The last step is to take a union bound over steps $t\leq T$ and subset of size $K$.
\end{proof}

The first step to analyze the regret of Alg.~\ref{alg:eps.greedy.glm} is to decompose the regret according to the steps where the algorithm selected a totally random set of arms or when it played greedy.
Noting, as in Alg.~\ref{alg:eps.greedy.glm}, $Z_t$ the Bernoulli random variable used by the algorithm to distinguish between exploratory and exploitative steps, the regret is
\begin{equation}
    R_{T} = \underbrace{\sum_{t=1}^{T} Z_{t}\langle w^{+}, \sum_{i\in \gA_{t}^{+}} \phi_{t,i} - \sum_{i\in\gA_{t}} \phi_{i,t}\rangle}_{ :=R_{T,1}} + \underbrace{\sum_{t=1}^{T} (1-Z_{t})\langle w^{+}, \sum_{i\in\gA_{t}^{+}} \phi_{i,t} - \sum_{i\in\gA_{t}} \phi_{i,t}\rangle}_{ := R_{T,2}}
\end{equation}
In the following, we bound each term $R_{T, 1}$ and $R_{T,2}$.

\paragraph{Bounding $R_{T,1}$.} The term we analyze is the regret due to the random steps in the algorithm. Centering the variable $Z_{t}$ we get that $R_{T,1}$ scales as:
\begin{equation}
    \begin{aligned}
        R_{T, 1} &= \sum_{t=1}^{T} (Z_{t} - \epsilon)\langle w^{+}, \sum_{i\in \gA_{t}^{+}} \phi_{t,i} - \sum_{i\in\gA_{t}} \phi_{i,t}\rangle + \epsilon\sum_{t=1}^{T}\langle w^{+}, \sum_{i\in \gA_{t}^{+}} \phi_{t,i} - \sum_{i\in\gA_{t}} \phi_{i,t}\rangle\\
        &\leq  \sum_{t=1}^{T} (Z_{t} - \epsilon)\langle w^{+}, \sum_{i\in \gA_{t}^{+}} \phi_{t,i} - \sum_{i\in\gA_{t}} \phi_{i,t}\rangle + \epsilon T \| w^{+}\| \left( \big\|\sum_{i\in\gA_t^{+}} \phi_{i,t}\big\|_{2} + \big\|\sum_{i\in\gA_t} \phi_{i,t}\big\|_{2}\right)\\
        &\leq  \sum_{t=1}^{T} (Z_{t} - \epsilon)\langle w^{+}, \sum_{i\in \gA_{t}^{+}} \phi_{t,i} - \sum_{i\in\gA_{t}} \phi_{i,t}\rangle + 2\Phi\epsilon T \| w^{+}\|
    \end{aligned}
\end{equation}
thanks to Lemma~\ref{lem:features.bound}. Now, the first term can be bounded thanks to a standard Azuma-Hoeffding inequality as for every time $t\leq T$, $Z_{t}$ is independent
from the evaluations $(\phi_{i,t})_{i}$ and the set of arms $\gA_{t}$ hence considering the filtration $(\mathcal{F}_{t})_{t}$ being the history up until time $t$, we have,
\begin{align}
\forall t\leq T, \text{ } \left|\sum_{t=1}^{T} (Z_{t} - \epsilon)\langle w^{+}, \sum_{i\in \gA_{t}^{+}} \phi_{t,i} - \sum_{i\in\gA_{t}} \phi_{i,t}\rangle\right| \leq 4\Phi\|w^{+}\|\sqrt{2T\ln\left(\frac{4T}{\delta}\right)}\text{ w.p at least $1 - \delta/2$}
\end{align}
Putting everything toghether we have that for any $\delta\in (0,1)$,
\begin{align}
    R_{T,1} \leq 4\Phi\|w^{+}\|\left(\sqrt{2T\ln\left(\frac{4T}{\delta}\right)} +  \epsilon T\right)
\end{align}
with probability at least $1 - \delta/2$.

\paragraph{Bounding $R_{T,2}$.} Next, we bound the regret coming due to the estimation error in the exploitative steps, that is to say when $Z_{t} = 0$. We begin by using the greedy behavior of the algorithm
to enough to study the error $\| w_{t} - w^{+}\|$ to bound $R_{T,2}$,

\begin{equation}
    R_{T,2} \leq \sum_{t=1}^{T} (1-Z_{t})\left[ \langle w^{+} - w_{t}, \sum_{i\in \gA_{t}^{+}} \phi_{i,t}\rangle + \langle w_{t}, \sum_{i\in \gA_{t}^{+}} \phi_{i,t} -  \sum_{i\in \gA_{t}} \phi_{i,t}\rangle + \langle  w_{t} - w^{+}, \sum_{i\in \gA_{t}} \phi_{i,t}\rangle\right]
\end{equation}
But because $\gA_{t}$ is the maximizer of the estimated reward with respect to $w_{t}$ we have that
\begin{align}
    (1 - Z_{t})\left\langle w_{t}, \sum_{i\in \gA_{t}^{+}} \phi_{i,t} -  \sum_{i\in \gA_{t}} \phi_{i,t}\right\rangle \leq 0.
\end{align}
Furthermore, using Lemma~\ref{lem:features.bound} we have that with probability at least $1 - \delta/4$
\begin{align}
    \max\left\{\left\langle w^{+} - w_{t}, \sum_{i\in \gA_{t}^{+}} \phi_{i,t}\right\rangle, \left\langle  w_{t} - w^{+}, \sum_{i\in \gA_{t}} \phi_{i,t}\right\rangle \right\} \leq \Phi \| w^{+} - w_{t} \|_{2}.
\end{align}

As a result, the remaining step to obtain a bound on the regret is to build a concentration inequality for the weights $w_{t}$ w.r.t.\ $w^{+}$ and properly tune the exploration factor $\epsilon$.

We start by studying the concentration of the MLE estimator $\wh\alpha_{t}\in\R^J$ to the true parameter $\alpha\in\R^J$.
Thanks to the random choice of $\gA_t$ in the explorative steps, all the data in $\mathcal{H}_{t}$ are i.i.d.\ and we can directly apply the standard concentration inequality for MLE in GLM~\citep{li2017provably}.
\begin{lemma}\label{lem:error.alpha}
For any $\delta\in (0,1)$ with probability at least $1 - \delta/8$ for all $j\leq J$ and $t\leq T$:
\begin{equation}
    \begin{aligned}
        \left| \wh\alpha_{t,j} - \alpha_{j}\right| \leq \beta_{t,j} := \frac{
        \sqrt{\lambda}\|\alpha\| + \sigma_{j}\sqrt{\frac{1}{2}\log\left(1 + \frac{2T}{\lambda}\right) +\log\left(\frac{8JT}{\delta}\right)}}{c_{g}\sqrt{\sum_{l, Z_{l} = 1}\sum_{i\in \gA_{t}} r_{l,i}}}.
    \end{aligned}
\end{equation}
In other words,
\begin{align}
    \left\| \wh{\alpha}_{t} - \alpha_{r}\right\|_{\infty} \leq \beta_{t} :=\max_{j} \beta_{t,j} =
        \frac{\sqrt{\lambda}\|\alpha\| + \|\sigma\|_{\infty}\sqrt{\frac{1}{2}\log\left(1 + \frac{2T}{\lambda}\right) +\log\left(\frac{8JT}{\delta}\right)}}{c_{g}\sqrt{\sum_{l, Z_{l} = 1}\sum_{i\in \gA_{t}} r_{l,i}}}
\end{align}
\end{lemma}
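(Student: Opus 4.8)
The plan is to exploit the defining feature of the $\varepsilon$-greedy design: the estimator $\wh\alpha_t$ is built \emph{only} from the exploratory rounds (those with $Z_l=1$), in which the $K$ arms are drawn uniformly at random and hence independently of the realized noise $\{\epsilon_{i,l,j}\}$. This is precisely what fails for the all-data estimator analyzed in App.~\ref{app:noise.correlation}: when arms are selected greedily according to the noisy evaluations, the conditional mean of $\epsilon$ over the selected arms is nonzero. Restricting to uniformly sampled rounds restores the martingale-difference structure $\mathbb{E}[\epsilon_{i,l,j}\mid\mathcal{F}_{l-1}]=0$ required by Prop.~\ref{prop:glm} and makes the samples in $\mathcal{H}_t$ amenable to the standard GLM concentration analysis of~\citep{li2017provably}. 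I would begin by verifying this adaptedness and zero-mean claim carefully, since it is the conceptual heart of the argument.

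Next I would convert the MLE optimality condition into a deviation bound. Writing the estimating equation from Alg.~\ref{alg:eps.greedy.glm} at the solution $\wh\alpha_{t,j}$ and substituting $\phi_j = g(\alpha_j r)+\epsilon_{\cdot,j}$, the regularized residual equals the stochastic sum $\sum_{l:Z_l=1}\sum_{i\in\gA_l} r_{i,l}\,\epsilon_{i,l,j}$ minus the regularization bias $\lambda\alpha_j$. A mean value theorem applied to the strictly increasing link, together with the uniform lower bound $g'\geq c_g>0$ from Asm.~\ref{asm:general.setting}, lower bounds the left-hand side by $c_g\big(\sum_{l:Z_l=1}\sum_{i\in\gA_l} r_{i,l}^2\big)\,|\wh\alpha_{t,j}-\alpha_j|$. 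Rearranging then controls $|\wh\alpha_{t,j}-\alpha_j|$ by the ratio of the noise-plus-bias term to $c_g$ times the accumulated design $\sum_{l:Z_l=1}\sum_{i\in\gA_l} r_{i,l}^2$.

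The stochastic term is then handled by a self-normalized sub-Gaussian concentration inequality, namely the scalar ($d=1$) instance of Prop.~\ref{prop:glm} with $m_{l}=r_{i,l}$ and $\eta_l=\epsilon_{i,l,j}$. This gives $\big|\sum_{l:Z_l=1}\sum_{i} r_{i,l}\epsilon_{i,l,j}\big|\lesssim \sigma_j\,\sqrt{\textstyle\sum r^2}\,\sqrt{\tfrac12\log(1+2T/\lambda)+\log(8JT/\delta)}$, where the log-determinant term of the self-normalized bound produces the $\log(1+2T/\lambda)$ factor and the confidence budget produces $\log(8JT/\delta)$ after a union bound over the $J$ evaluators and $T$ rounds. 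Dividing by $c_g\sum r^2$ collapses one factor of the design into the denominator $c_g\sqrt{\sum r^2}$, while the regularization contributes the bias term $\sqrt{\lambda}\,\|\alpha\|$ in the numerator, thereby reproducing $\beta_{t,j}$; the $\delta/8$ level simply reserves a fraction of the overall confidence budget for this event inside the larger regret proof.

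The main obstacle I anticipate is not the algebra but faithfully instantiating the GLM concentration machinery on the exploratory subsequence. One must check that the filtration and adaptedness hypotheses of Prop.~\ref{prop:glm} hold when the summation index set $\{l:Z_l=1\}$ is itself random, and that the empirical design remains well-conditioned enough for the self-normalized bound to be informative. Matching the precise normalization in the denominator and the exact logarithmic factors also requires care in tracking the regularization $\lambda=J^{-1}$ through the determinant term, which is where I would spend most of the verification effort.
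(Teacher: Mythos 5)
Your proposal is correct and follows essentially the same route as the paper: the paper's entire proof of this lemma is the observation that the uniformly random arm selection on exploratory rounds makes the data in $\mathcal{H}_t$ i.i.d.\ (independent of the evaluation noise), after which it directly invokes the standard GLM--MLE concentration result of \citet{li2017provably}. Your writeup simply unpacks what that citation encapsulates — the estimating-equation/mean-value-theorem reduction with the $c_g$ lower bound, the self-normalized sub-Gaussian bound producing the $\log(1+2T/\lambda)$ and $\log(8JT/\delta)$ factors, and the $\sqrt{\lambda}\|\alpha\|$ regularization bias — so it matches the paper's argument in substance and adds detail the paper leaves implicit.
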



We leverage Lem.~\ref{lem:error.alpha} to concentrate $\| w^{\star} - w_{t} \|$. But first, recall that $w_{t}$ is given by:
\begin{equation*}
    \begin{aligned}
        w_{t} = \frac{\sigma^{-1}\cdot\wh{\alpha}_{t}}{\| \sigma^{-1}\cdot\wh{\alpha}_{t}\|^{2}_{2}} \text{ and } w^{+} = \frac{\sigma^{-1}\cdot\alpha}{\| \sigma^{-1}\cdot\alpha\|^{2}_{2}}
    \end{aligned}
\end{equation*}
where $\Sigma = \text{diag}(\sigma_{1}, \cdots, \sigma_{J})$. Therefore, the error between $w^{+}$ and $w_{t}$ can be written as:
\begin{align}\label{eq:error_weights}
    \| w_{t} - w^{+}\|_{2} &= \left\|\frac{\sigma^{-2}\cdot\left( \wh{\alpha}_{t} - \alpha\right)}{\|\sigma^{-1}\cdot\wh{\alpha}_{t}\|^{2}} + \sigma^{-2}\cdot\alpha\left(\frac{1}{\| \sigma^{-1}\cdot\wh{\alpha}_{t}\|^{2}} -  \frac{1}{\| \sigma^{-1}\cdot\alpha\|^{2}}\right) \right\| \\
    &\leq \left\|\sigma^{-2}\cdot\alpha\left(\frac{1}{\| \sigma^{-1}\cdot\wh{\alpha}_{t}\|^{2}} -  \frac{1}{\| \sigma^{-1}\cdot\alpha\|^{2}}\right)\right\|_{2} + \left\|\frac{\sigma^{-2}\cdot\left( \wh{\alpha}_{t} - \alpha\right)}{\|\sigma^{-1}\cdot\wh{\alpha}_{t}\|^{2}}  \right\|_{2}
\end{align}
\begin{proposition}\label{prop:bound.first.term}
For any time $t\leq T$, assuming that $\|\wh{\alpha}_{t} - \alpha\|_{\infty} \leq \beta_{t}$ and $\|\alpha\|_{\infty} \geq 8\beta_{t}$ then:
\begin{align}
        \frac{\|\sigma^{-2}\cdot\left(\wh{\alpha}_{t} - \alpha\right)\|_{2}}{\langle \wh{\alpha}_{t}, \sigma^{-2}\cdot\wh{\alpha}_{t}\rangle} \leq \frac{4\beta_{t}\sqrt{\sum_{j=1}^{J} \frac{1}{\sigma_{j}^{4}}}}{\sum_{j=1}^{J} \frac{\alpha_{j}^{2}\mathds{1}_{\{|\alpha_{j}| \geq 8\beta_{t}\}}}{\sigma^{2}_{j}}} = 4\beta_{t}\left(\frac{\| \sigma^{-1}\|_{4}}{\| \sigma^{-1}\cdot\alpha\mathds{1}_{\{|\alpha|\geq 8\beta_{t}\}}\|_{2}}\right)^{2}
\end{align}
\end{proposition}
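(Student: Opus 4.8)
The plan is to bound the numerator and the denominator of the left-hand ratio separately: the numerator gets a uniform upper bound from the $\ell_\infty$ error hypothesis, while the denominator gets a lower bound that keeps only the ``reliable'' coordinates of $\alpha$, which is precisely where the indicator $\mathds{1}_{\{|\alpha_j|\geq 8\beta_t\}}$ enters.

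First I would handle the numerator $\|\sigma^{-2}\cdot(\wh{\alpha}_{t}-\alpha)\|_2 = \big(\sum_{j=1}^J (\wh\alpha_{t,j}-\alpha_j)^2/\sigma_j^4\big)^{1/2}$. Since the assumption gives $|\wh\alpha_{t,j}-\alpha_j|\leq\beta_t$ for every coordinate, each summand is at most $\beta_t^2/\sigma_j^4$, and therefore the numerator is at most $\beta_t\big(\sum_{j=1}^J \sigma_j^{-4}\big)^{1/2}=\beta_t\|\sigma^{-1}\|_4^2$. This step is entirely routine.

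The main obstacle is lower bounding the denominator $\langle\wh\alpha_t,\sigma^{-2}\cdot\wh\alpha_t\rangle=\sum_{j=1}^J \wh\alpha_{t,j}^2/\sigma_j^2$ in terms of $\alpha$ rather than $\wh\alpha_t$. The difficulty is that coordinates with tiny $|\alpha_j|$ could have $\wh\alpha_{t,j}$ close to zero (or even sign-flipped), so they cannot safely be compared to $\alpha_j^2$; this is exactly the reason to restrict to coordinates with $|\alpha_j|\geq 8\beta_t$. On that index set the reverse triangle inequality gives $|\wh\alpha_{t,j}|\geq|\alpha_j|-\beta_t\geq\frac{7}{8}|\alpha_j|\geq\frac12|\alpha_j|$, and squaring yields $\wh\alpha_{t,j}^2\geq\frac14\alpha_j^2$. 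Discarding all other coordinates (which only decreases the sum) gives $\langle\wh\alpha_t,\sigma^{-2}\cdot\wh\alpha_t\rangle\geq\frac14\sum_{j=1}^J \alpha_j^2\mathds{1}_{\{|\alpha_j|\geq 8\beta_t\}}/\sigma_j^2$.

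Finally, dividing the numerator bound by the denominator bound produces the factor $4$ (from the $1/4$), yielding $4\beta_t\big(\sum_j\sigma_j^{-4}\big)^{1/2}/\big(\sum_j\alpha_j^2\mathds{1}_{\{|\alpha_j|\geq 8\beta_t\}}/\sigma_j^2\big)$; rewriting $\big(\sum_j\sigma_j^{-4}\big)^{1/2}=\|\sigma^{-1}\|_4^2$ and $\sum_j\alpha_j^2\mathds{1}_{\{|\alpha_j|\geq 8\beta_t\}}/\sigma_j^2=\|\sigma^{-1}\cdot\alpha\,\mathds{1}_{\{|\alpha|\geq 8\beta_t\}}\|_2^2$ recovers the stated closed form. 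The hypothesis $\|\alpha\|_\infty\geq 8\beta_t$ serves only to ensure that at least one coordinate survives the indicator, so the denominator bound is nonvacuous.
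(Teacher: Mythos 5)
Your proposal is correct and follows essentially the same strategy as the paper: bound the numerator coordinate-wise by $\beta_t\|\sigma^{-1}\|_4^2$ using the $\ell_\infty$ hypothesis, and lower-bound the denominator by discarding coordinates with $|\alpha_j| < 8\beta_t$ and comparing $\wh\alpha_{t,j}^2$ to $\alpha_j^2$ on the surviving ones. The only (immaterial) difference is the algebraic route for that comparison — you use the reverse triangle inequality to get $\wh\alpha_{t,j}^2 \geq \tfrac14 \alpha_j^2$, while the paper expands $\wh\alpha_{t,j}^2 = \alpha_j^2 + (\wh\alpha_{t,j}-\alpha_j)^2 + 2(\wh\alpha_{t,j}-\alpha_j)\alpha_j$ to get the slightly sharper $\wh\alpha_{t,j}^2 \geq \tfrac12 \alpha_j^2$ (yielding a constant $2$ rather than the stated $4$); both give the claimed inequality.
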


\begin{proof}
Using the definition of $\sigma^{-2}\cdot$ we have,
\begin{align}\label{eq:ratio}
    \frac{\|\sigma^{-2}\cdot\left(\wh{\alpha}_{t} - \alpha\right)\|_{2}}{\langle \wh{\alpha}_{t}, \sigma^{-2}\cdot\wh{\alpha}_{t}\rangle} = \frac{\sqrt{\sum_{j=1}^{J}\frac{(\wh{\alpha}_{t,j} - \alpha_{j})^{2}}{\sigma^{4}_{j}}}}{\sum_{r=1}^{J} \frac{\wh{\alpha}_{t,j}^{2}}{\sigma_{j}^{2}}}
\end{align}
However, we have that for any $j\leq J$:
\begin{align*}
    \wh{\alpha}_{t,j}^{2} = \alpha_{j}^{2} + \left(\wh{\alpha}_{t,j} - \alpha_{j} \right)^{2} + 2\left( \wh{\alpha}_{t,j} - \alpha_{j}\right)\alpha_{j} &\geq \alpha_{j}^{2} + 2\left( \wh{\alpha}_{t,j} - \alpha_{j}\right)\alpha_{j} \\
    &\geq \alpha_{j}^{2} - 2\beta_{t}|\alpha_{j}| \\
    &\geq \frac{\alpha^{2}_{j}}{2} \text{ when } |\alpha_{j}| \geq 8\beta_{t}
\end{align*}
using that $\|\wh{\alpha}_{t} - \alpha\|_{\infty} \leq \beta_{t}$. In addition, using again that $\|\wh{\alpha}_{t} - \alpha\|_{\infty} \leq \beta_{t}$,
\begin{align}
    \sum_{j=1}^{J} \frac{\left( \wh{\alpha}_{t,j} - \alpha_{j}\right)^{2}}{\sigma_{j}^{4}} \leq \sum_{j=1}^{J} \frac{\beta_{t}^{2}}{\sigma_{j}^{4}}
\end{align}
\end{proof}

Using the same reasonning as in Prop.~\ref{prop:bound.first.term}, we can show the following which bound the second term in \Eqref{eq:error_weights}.
\begin{proposition}\label{prop:bound.second.term}
    For any time $t\leq T$, assuming that $\|\wh{\alpha}_{t} - \alpha\|_{\infty} \leq \beta_{t}$ and $\|\alpha\|_{\infty} \geq 8\beta_{t}$ then:
    \begin{equation}
        \begin{aligned}
            \left\|\left( \frac{\|\sigma^{-1}\cdot\alpha\|^{2} - \|\sigma^{-1}\cdot\wh{\alpha}_{t}\|^{2}}{\left\|\sigma^{-1}\cdot\wh{\alpha}_{t}\right\|^{2}}\right)\frac{\sigma^{-2}\cdot\alpha}{\|\sigma^{-1}\cdot\alpha\|^{2}}\right\| \leq
            \frac{4\beta_{t}\left( \| \sigma^{-2}\cdot\alpha\|_{2} + \beta_{t}\|\sigma^{-2}\|\right)\|\sigma^{-2}\cdot\alpha\|}{\|\sigma^{-1}\cdot\alpha\mathds{1}_{\{| \alpha \| \geq 8\beta_{t} \}} \|^{2}\| \sigma^{-1}\cdot\alpha\|^{2}}
        \end{aligned}
    \end{equation}
\end{proposition}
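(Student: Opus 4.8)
The plan is to peel off the common scalar-times-fixed-vector structure and then control a single scalar ratio. The vector $\sigma^{-2}\cdot\alpha/\|\sigma^{-1}\cdot\alpha\|^{2}$ has norm $\|\sigma^{-2}\cdot\alpha\|_{2}/\|\sigma^{-1}\cdot\alpha\|^{2}$, which is exactly the trailing factor on the right-hand side, so the left-hand side equals
\[
\frac{\big|\,\|\sigma^{-1}\cdot\alpha\|^{2}-\|\sigma^{-1}\cdot\wh{\alpha}_{t}\|^{2}\,\big|}{\|\sigma^{-1}\cdot\wh{\alpha}_{t}\|^{2}}\cdot\frac{\|\sigma^{-2}\cdot\alpha\|_{2}}{\|\sigma^{-1}\cdot\alpha\|^{2}}.
\]
Hence it suffices to show that the leading scalar ratio is at most $4\beta_{t}\big(\|\sigma^{-2}\cdot\alpha\|_{2}+\beta_{t}\|\sigma^{-2}\|\big)/\|\sigma^{-1}\cdot\alpha\mathds{1}_{\{|\alpha|\geq 8\beta_{t}\}}\|^{2}$, which I would attack by treating its denominator and numerator separately.

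For the denominator I would reuse verbatim the pointwise estimate already established inside the proof of Prop.~\ref{prop:bound.first.term}: under $\|\wh{\alpha}_{t}-\alpha\|_{\infty}\leq\beta_{t}$ and $\|\alpha\|_{\infty}\geq 8\beta_{t}$, every coordinate with $|\alpha_{j}|\geq 8\beta_{t}$ satisfies $\wh{\alpha}_{t,j}^{2}\geq \alpha_{j}^{2}/2$. Keeping only those coordinates in $\|\sigma^{-1}\cdot\wh{\alpha}_{t}\|^{2}=\sum_{j}\wh{\alpha}_{t,j}^{2}/\sigma_{j}^{2}$ gives $\|\sigma^{-1}\cdot\wh{\alpha}_{t}\|^{2}\geq\tfrac12\|\sigma^{-1}\cdot\alpha\mathds{1}_{\{|\alpha|\geq 8\beta_{t}\}}\|^{2}$; here the hypothesis $\|\alpha\|_{\infty}\geq 8\beta_{t}$ is precisely what guarantees the indicator set is non-empty, so this lower bound is non-vacuous.

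For the numerator I would set $e:=\wh{\alpha}_{t}-\alpha$ and expand the difference of squares, obtaining
\[
\|\sigma^{-1}\cdot\alpha\|^{2}-\|\sigma^{-1}\cdot\wh{\alpha}_{t}\|^{2}=-2\langle e,\sigma^{-2}\cdot\alpha\rangle-\sum_{j}e_{j}^{2}/\sigma_{j}^{2},
\]
so the triangle inequality yields $\big|\,\|\sigma^{-1}\cdot\alpha\|^{2}-\|\sigma^{-1}\cdot\wh{\alpha}_{t}\|^{2}\,\big|\leq 2|\langle e,\sigma^{-2}\cdot\alpha\rangle|+\sum_{j}e_{j}^{2}/\sigma_{j}^{2}$. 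I would control the linear cross term by Cauchy--Schwarz, keeping the $\sigma$-weights inside the norms, to produce a contribution proportional to $\beta_{t}\|\sigma^{-2}\cdot\alpha\|$, and the quadratic term by $|e_{j}|\leq\beta_{t}$ to produce a contribution proportional to $\beta_{t}^{2}\|\sigma^{-2}\|$. Dividing the sum by the denominator bound above and reinstating the factored-out $\|\sigma^{-2}\cdot\alpha\|_{2}/\|\sigma^{-1}\cdot\alpha\|^{2}$ then gives the stated estimate up to the absolute constant.

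The main obstacle is the cross term $\langle e,\sigma^{-2}\cdot\alpha\rangle$: in the companion bound of Prop.~\ref{prop:bound.first.term} the argument is clean because the $\sigma^{-2}$ weights sit symmetrically inside a single Euclidean norm, whereas here the reward direction $\alpha$ enters asymmetrically, so one must be careful about how the $\sigma$-weights are distributed when invoking Cauchy--Schwarz in order to reproduce the stated $\ell_{2}$ factors rather than incurring a spurious dimension factor $\sqrt{J}$ (the same care applies to reconciling the $\sigma^{-1}$ versus $\sigma^{-2}$ powers in the quadratic term). Once this term is controlled the rest is routine, and the resulting estimate is exactly the first of the two contributions to the weight error $\|w_{t}-w^{+}\|_{2}$ in \Eqref{eq:error_weights}, the second being handled by Prop.~\ref{prop:bound.first.term}.
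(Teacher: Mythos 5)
Your route is the one the paper intends --- in fact the paper offers no actual proof of this proposition, only the one-line remark that it follows ``using the same reasoning as in Prop.~\ref{prop:bound.first.term}'' --- and the parts you carry out are correct: the left-hand side factors exactly as you say into a scalar ratio times $\|\sigma^{-2}\cdot\alpha\|_{2}/\|\sigma^{-1}\cdot\alpha\|^{2}$; the denominator bound $\|\sigma^{-1}\cdot\wh{\alpha}_{t}\|^{2}\geq\tfrac12\|\sigma^{-1}\cdot\alpha\mathds{1}_{\{|\alpha|\geq 8\beta_{t}\}}\|^{2}$ follows from the coordinate-wise estimate $\wh{\alpha}_{t,j}^{2}\geq\alpha_{j}^{2}/2$ lifted from Prop.~\ref{prop:bound.first.term}; and the expansion $\|\sigma^{-1}\cdot\alpha\|^{2}-\|\sigma^{-1}\cdot\wh{\alpha}_{t}\|^{2}=-2\langle e,\sigma^{-2}\cdot\alpha\rangle-\sum_{j}e_{j}^{2}/\sigma_{j}^{2}$ with $e=\wh{\alpha}_{t}-\alpha$ is correct algebra.

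However, the step you defer --- bounding $2|\langle e,\sigma^{-2}\cdot\alpha\rangle|$ by a constant times $\beta_{t}\|\sigma^{-2}\cdot\alpha\|_{2}$ --- is not a matter of distributing the $\sigma$-weights carefully: under the hypothesis actually available ($\|e\|_{\infty}\leq\beta_{t}$, which is all that Lemma~\ref{lem:error.alpha} provides) it is impossible. The sharp bound over the $\ell_{\infty}$ ball is $|\langle e,\sigma^{-2}\cdot\alpha\rangle|\leq\beta_{t}\|\sigma^{-2}\cdot\alpha\|_{1}$, and passing from the $\ell_{1}$ to the $\ell_{2}$ norm costs up to $\sqrt{J}$; Cauchy--Schwarz with the weights inside gives $\|e\|_{2}\|\sigma^{-2}\cdot\alpha\|_{2}\leq\sqrt{J}\beta_{t}\|\sigma^{-2}\cdot\alpha\|_{2}$, the same loss. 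The quadratic term has the same defect, since $\sum_{j}e_{j}^{2}/\sigma_{j}^{2}\leq\beta_{t}^{2}\sum_{j}\sigma_{j}^{-2}$ and $\sum_{j}\sigma_{j}^{-2}$ exceeds $\|\sigma^{-2}\|_{2}$ by up to $\sqrt{J}$. Concretely, take $\sigma_{j}\equiv 1$, $\alpha_{j}\equiv a\geq 8\beta_{t}$ and $\wh{\alpha}_{t,j}=a-\beta_{t}$ for all $j$: the left-hand side of the proposition is approximately $2\beta_{t}/(a^{2}\sqrt{J})$ while the right-hand side is approximately $4\beta_{t}/(a^{2}J)$, so the stated inequality fails once $J\geq 5$. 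The obstacle you flagged is therefore a genuine flaw inherited from the paper's proof-by-analogy: the proposition is true only if $\beta_{t}$ is read as a bound on the $\ell_{2}$ error $\|\wh{\alpha}_{t}-\alpha\|_{2}$ (in which case your Cauchy--Schwarz plan closes the proof verbatim, with cross term at most $2\beta_{t}\|\sigma^{-2}\cdot\alpha\|_{2}$, quadratic term at most $\beta_{t}^{2}\max_{j}\sigma_{j}^{-2}\leq\beta_{t}^{2}\|\sigma^{-2}\|$, and the constant $4$ coming from the factor $2$ in the denominator bound), or else the right-hand side must be inflated by $\sqrt{J}$. Your diagnosis of why Prop.~\ref{prop:bound.first.term} is immune --- there the error enters coordinate-wise inside a single norm, so $\ell_{\infty}$ control suffices --- is exactly the right explanation of where the paper's claimed analogy breaks.
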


Therefore, using Prop.~\ref{prop:bound.first.term} and Prop.~\ref{prop:bound.second.term} on the event that the confidence intervals in Lem.~\ref{lem:error.alpha} holds then with probability at least
$1 - \delta/4$:
\begin{align}
    \| w_{t} - w^{+}\|_{2} \leq \frac{4\beta_{t}\left( \| \sigma^{-2}\cdot\alpha\|_{2} + \beta_{t}\|\sigma^{-2}\|\right)\|\sigma^{-2}\cdot\alpha\|}{\|\sigma^{-1}\cdot\alpha\mathds{1}_{\{| \alpha | \geq 8\beta_{t} \}} \|^{2}\| \sigma^{-1}\cdot\alpha\|^{2}} + 4\beta_{t}\left(\frac{\| \sigma^{-1}\|_{4}}{\| \sigma^{-1}\cdot\alpha\mathds{1}_{\{|\alpha|\geq 8\beta_{t}\}}\|_{2}}\right)^{2}
\end{align}


In order to complete the proof of the regret upper bound, we need to control how fast the confidence intervals width $\beta_{t}$ decreases with $t$.

\begin{lemma}\label{lem:speed.beta.t}
For any $\delta\in (0,1)$ and $\lambda> 0$ we have with probability at least $1 - \delta/4$,
    \begin{equation}
        \forall t\in \left\{\frac{2C^{4}\ln\left(\frac{8T}{\delta}\right)}{\epsilon^{2}K^{2}\eta_{\nu}^{4}},\hdots, T\right\}, \text{ } \beta_{t} \leq \frac{\|\sigma\|_{\infty}\sqrt{\ln\left(1 + \frac{2T}{\lambda}\right) +\ln\left(\frac{8TJ}{\delta}\right)} + \sqrt{\lambda}\|\alpha\|}{c_{g}\sqrt{\epsilon t K\eta_{\nu}^{2} -  C^{2}\sqrt{2t\ln\left(\frac{8T}{\delta}\right)}}}
    \end{equation}
    where for all $j\leq J$, $\eta_{\nu}^{2} = \mathbb{E}_{r\sim\nu}(r^{2})$
\end{lemma}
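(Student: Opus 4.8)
The plan is to reduce the claim to a one-sided (lower-tail) concentration bound on the squared reward mass collected during the exploratory rounds, and then to exploit the fact that $\beta_t$ is a monotone function of that quantity. First I would recall the closed form of $\beta_t$ from Lemma~\ref{lem:error.alpha}. Its numerator $\sqrt{\lambda}\|\alpha\| + \|\sigma\|_\infty\sqrt{\tfrac12\log(1+2T/\lambda)+\log(8JT/\delta)}$ is deterministic and, using $\tfrac12\log(\cdot)\le\log(\cdot)$, is bounded above by the numerator appearing in the target. The only stochastic object is the denominator $c_g\sqrt{D_t}$, where
\[
D_t := \sum_{l\le t,\,Z_l=1}\ \sum_{i\in\gA_l} r_{i,l}^2 .
\]
Since $\beta_t$ decreases in $D_t$, it suffices to establish a high-probability lower bound $D_t \ge \epsilon t K\eta_\nu^2 - C^2\sqrt{2t\ln(8T/\delta)}$, holding simultaneously over $t$, and then substitute.

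For the main step I would build a martingale. Let $(\mathcal{F}_l)_l$ be the filtration generated by all evaluations, selections, and rewards through round $l$. The structural fact that the $\epsilon$-greedy decoupling buys us is that on exploratory rounds $\gA_l$ is drawn uniformly and independently of the rewards, and $Z_l\sim\mathrm{Ber}(\epsilon)$ is independent of everything in round $l$; combined with Asm.~\ref{asm:rewards} (rewards i.i.d.\ $\sim\nu$) this yields
\[
\mathbb{E}\Big[Z_l\sum_{i\in\gA_l} r_{i,l}^2 \,\Big|\, \mathcal{F}_{l-1}\Big] = \epsilon K\eta_\nu^2 .
\]
Hence $M_t := D_t - \epsilon t K\eta_\nu^2$ is a martingale whose increments lie in a bounded interval of width at most $2KC^2$, since $r_{i,l}^2\in[0,C^2]$. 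Applying a one-sided Azuma--Hoeffding inequality to $M_t$ and taking a union bound over $t\le T$ gives, with probability at least $1-\delta/4$, the lower bound $D_t \ge \epsilon t K\eta_\nu^2 - C^2\sqrt{2t\ln(8T/\delta)}$ for all $t\le T$, with the $\ln(8T/\delta)$ absorbing both the confidence level and the horizon union-bound cost.

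Finally I would note that the range restriction in the statement is exactly the positivity condition for this lower bound: solving the quadratic $\epsilon t K\eta_\nu^2 > C^2\sqrt{2t\ln(8T/\delta)}$ for $t$ yields precisely $t > 2C^4\ln(8T/\delta)/(\epsilon^2 K^2\eta_\nu^4)$, so on that range $\sqrt{D_t}$ is bounded below by $\sqrt{\epsilon t K\eta_\nu^2 - C^2\sqrt{2t\ln(8T/\delta)}}$. Substituting this lower bound, together with the numerator bound from the first step, into $\beta_t = (\text{numerator})/(c_g\sqrt{D_t})$ delivers the stated inequality.

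I expect the main obstacle to be the concentration step rather than the algebra: one must argue carefully that restricting the sum to exploratory rounds ($Z_l=1$) is what removes the reward--selection dependence analyzed in App.~\ref{app:noise.correlation}, so that the conditional mean is \emph{exactly} $\epsilon K\eta_\nu^2$ even though the overall algorithm is adaptive. The remaining delicate point is tracking the increment range and the logarithmic union factor so that the deviation term matches the stated constant; a Bernstein-type bound could be substituted here if a sharper, variance-dependent constant were preferred.
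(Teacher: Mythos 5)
Your proposal is correct and follows essentially the same route as the paper's proof: a Hoeffding-type concentration bound (with a union bound over $t\leq T$) on the exploratory squared-reward mass $\sum_{l\leq t, Z_l=1}\sum_{i\in\gA_l} r_{i,l}^2$, whose expectation is $\epsilon t K\eta_\nu^2$ precisely because the $\varepsilon$-greedy decoupling makes the exploratory selections independent of the rewards, followed by substitution of the resulting lower bound into the denominator of $\beta_t$ from Lemma~\ref{lem:error.alpha}. Your martingale/Azuma framing and the explicit derivation of the range restriction as the positivity condition are just slightly more detailed renderings of the same argument (and you share with the paper the same mild looseness on the $K$ factor in the increment range).
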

\begin{proof}
First, thanks to a standard Hoeffding inequality we have that with probability at least $1 - \delta/4$,
\begin{equation*}
    \forall t\leq T, \text{ } \left|\sum_{l\leq t, Z_{l} = 1} \sum_{i\in A_{t}}r_{l,i}^{2} - \mathbb{E}\left(\sum_{l\leq t, Z_{l} = 1} \sum_{i\in A_{t}}r_{l,i}^{2}\right) \right| \leq C^{2}\sqrt{2t\ln\left(\frac{8T}{\delta}\right)}
\end{equation*}
And $\mathbb{E}\left(\sum_{l\leq t, Z_{l} = 1} \sum_{i\in A_{t}}r_{l,i}^{2}\right) = \epsilon t K\eta_{\nu}^{2}$ where $\eta_{\nu}^{2} := \mathbb{E}_{r\sim \nu}(r^{2})$.
Therefore, using the definition of $\beta_{t}$
\begin{equation}
    \beta_{t} \leq \frac{\|\sigma\|_{\infty}\sqrt{\ln\left(1 + \frac{2T}{\lambda}\right) +\ln\left(\frac{8TJ}{\delta}\right)} + \sqrt{\lambda}\|\alpha\|}{c_{g}\sqrt{\epsilon t K\eta_{\nu}^{2} -  C^{2}\sqrt{2t\ln\left(\frac{8T}{\delta}\right)}}}
\end{equation}
\end{proof}
As a consequence, for $t\geq \tau_{0} := \frac{8\ln\left(\frac{8T}{\delta}\right)C^{4}}{\big(\epsilon K\eta_{\nu}^{2}\big)^{2}}$, $\beta_{t} \leq \mathcal{O}\left( \frac{1}{\sqrt{\epsilon t}}\right)$.
Therefore, using Lem.~\ref{lem:speed.beta.t} with probability at least $1 - \delta/4$:
\begin{equation}
    \begin{aligned}
        \sum_{t=\tau_{0} + 1}^{T} \beta_{t} &\leq \sum_{t=\tau_{0} + 1}^{T} \frac{\sqrt{2}\|\sigma\|_{\infty}\sqrt{\ln\left(1 + \frac{2T}{\lambda}\right) +\ln\left(\frac{8TJ}{\delta}\right)} + \sqrt{\lambda}\|\alpha\|}{c_{g}\sqrt{\epsilon t K\eta_{\nu}^{2}}}\\
        &\leq  \frac{3\sqrt{2T}\|\sigma\|_{\infty}}{c_{g}\sqrt{\epsilon K\eta_{\nu}^{2}}}\left(\sqrt{\ln\left(1 + \frac{2T}{\lambda}\right) +\ln\left(\frac{8TJ}{\delta}\right)} + \sqrt{\lambda}\|\alpha\|\right)
    \end{aligned}
\end{equation}
We can finally bound $R_{T,1}$ with probability at least $1 - \delta/2$ by:
\begin{equation}
    \begin{aligned}
R_{T,1} \leq \tau + \frac{24\Phi\sqrt{2T}\|\sigma\|_{\infty}}{c_{g}\sqrt{\epsilon K\eta_{\nu}^{2}}}\left(\sqrt{\ln\left(1 + \frac{2T}{\lambda}\right) +\ln\left(\frac{8TJ}{\delta}\right)} + \sqrt{\lambda}\|\alpha\|\right)\Bigg( \frac{\left( \| \sigma^{-2}\cdot\alpha\|_{2} + \|\sigma^{-2}\|\right)\|\sigma^{-2}\cdot\alpha\|}{\|\sigma^{-1}\cdot\alpha\mathds{1}_{\{| \alpha | \geq 8\beta_{t} \}} \|^{2}\| \sigma^{-1}\cdot\alpha\|^{2}}&
\\+ \left(\frac{\| \sigma^{-1}\|_{4}}{\| \sigma^{-1}\cdot\alpha\mathds{1}_{\{|\alpha|\geq 8\beta_{t}\}}\|_{2}}\right)^{2}\Bigg)&
    \end{aligned}
\end{equation}
where $\tau := \max\left\{\frac{128\|\sigma\|_{\infty}\left( \ln\left(1 + \frac{2T}{\lambda}\right) + \ln\left(\frac{8TJ}{\delta}\right)\right) + \lambda\|\alpha\|^{2}}{\min_{j}|\alpha_{r}|^{2}\epsilon K\eta_{\nu}^{2}}, \frac{8\ln\left(\frac{8T}{\delta}\right)C^{4}}{\big(\epsilon K\eta_{\nu}^{2}\big)^{2}}\right\}$

Finally, the regret is bounded with probability at least $1 - \delta$ by:
{\small\begin{equation}\label{eq:regret_bound}
    \begin{aligned}
        R_{T} \leq \frac{4\Phi}{\| \sigma^{-1}\cdot\alpha\|}\left(\sqrt{2T\ln\left(\frac{4T}{\delta}\right)} +  \epsilon T\right) + \max\left\{\frac{128\|\sigma\|_{\infty}\left( \ln\left(1 + \frac{2T}{\lambda}\right) + \ln\left(\frac{8TJ}{\delta}\right)\right)+ \lambda\|\alpha\|^{2}}{\min_{j, \alpha_{j}\neq 0}|\alpha_{j}|^{2}\epsilon K\eta_{\nu}^{2}}, \frac{8\ln\left(\frac{8T}{\delta}\right)C^{4}}{\big(\epsilon K\eta_{\nu}^{2}\big)^{2}}\right\}&\\
        + \frac{24\Phi\sqrt{2T}\|\sigma\|_{\infty}}{c_{g}\sqrt{\epsilon K\eta_{\nu}^{2}}}\left(\sqrt{\ln\left(1 + \frac{2T}{\lambda}\right) +\ln\left(\frac{8TJ}{\delta}\right)} + \sqrt{\lambda}\|\alpha\|\right)\Bigg( \frac{\left( \| \sigma^{-2}\cdot\alpha\|_{2} + \|\sigma^{-2}\|\right)\|\sigma^{-2}\cdot\alpha\|}{\|\sigma^{-1}\cdot\alpha \|^{2}\| \sigma^{-1}\cdot\alpha\|^{2}} + \left(\frac{\| \sigma^{-1}\|_{4}}{\| \sigma^{-1}\cdot\alpha\|_{2}}\right)^{2}\Bigg)&
    \end{aligned}
\end{equation}}

\textbf{Remark.} Compared to the bound reported in Thm.~\ref{thm:eps.greedy.glm}, we corrected an error in the analysis which removed the dependency on the norm $\| g\|_{\infty}$ and replaceed it
with the upper bound on the distribution of the reward $C$. In addition, here we report a bound depends on $\min_{j\leq J, \alpha_{j} \neq 0} |\alpha_{j}|^{2}$ however because of the condition $\mathds{1}_{\{|\alpha_{j}| \geq 8\beta_{t}\}}$ it is possible to present a regret upper bound
that scales inversly with $\max_{j} | \alpha_{j}|$ but at the cost of a much bigger problem-dependent quantity $S$. Fortunately, the difference between the bound report in Thm~\ref{thm:eps.greedy.glm} and the one in \Eqref{eq:regret_bound}  has no impact on the discussion around the depency on $J$. Simplifying the bound in \Eqref{eq:regret_bound}, we get that the regret is bounded with high probability by,
\begin{align}
    R_{T} \leq \wt{\mathcal{O}}\Bigg(T^{2/3}\Bigg(\frac{(1+\sqrt{J^{-1}}\|\alpha\|)\Phi S\|\sigma\|_{\infty}}{c_{g}\sqrt{K}\eta_{\nu}} + \max\left\{\frac{\|\sigma\|_{\infty}}{\min_{j, \alpha_{j}\neq 0}|\alpha_{j}|^{2}K\eta_{\nu}^{2}}, \frac{C^{4}}{\big(K\eta_{\nu}^{2}\big)^{2}}\right\}\Bigg) 
\end{align}
when choosing $\epsilon = \frac{1}{T^{1/3}}$, $\lambda = J^{-1}$ and setting $S := \frac{\left( \| \sigma^{-2}\cdot\alpha\|_{2} + \|\sigma^{-2}\|\right)\|\sigma^{-2}\cdot\alpha\|}{\|\sigma^{-1}\cdot\alpha \|^{2}\| \sigma^{-1}\cdot\alpha\|^{2}} + \left(\frac{\| \sigma^{-1}\|_{4}}{\| \sigma^{-1}\cdot\alpha\|_{2}}\right)^{2}$.

\section{LINEAR MODEL}\label{app:affine_model}

In this section, we show how the definition of the oracle changes with the linear setting and the proof of Thm.~\ref{thm:greedy.linear}, that is to the regret of Alg.~\ref{alg:greedy.affine}

\subsection{Oracle in the Linear Model (Proof of Lem.~\ref{lem:error_oracle_affine})}

Just as in \Secref{app:glm.oracle}, we consider an oracle that has access to the parameters $(\alpha_{j})_{j\leq J}$  but also knows the
noise $(\sigma_{j})_{j\leq J}$ for each evaluation functions. Given a vector of evalutations $\phi_{i,t}$ for some $t\leq T$ and $i\leq K_{t}$, an orcale is defined through
a set of weights $(w_{j})_{j\leq J}$ and predicts the reward, $r_{i,t}^{\mathfrak{O}} = \langle w, \phi_{i,t}\rangle$. Following the reasonning
in \Secref{app:glm.oracle}, the difference between the true top-$K$ arms, $i_{1}^{+}, \cdots, i_{K}^{+}$, and the top $K$ according to the oracle, $i_{1}^{\mathfrak{O}}, \cdots, i_{K}^{\mathfrak{O}}$
is bounbed by
\begin{equation}
    \begin{aligned}
        \sum_{l=1}^{K} r_{i_{l}^{+}, t} - r_{i_{l}^{\mathfrak{O}}, t} \leq 2 \max_{i_{1}, \cdots, i_{K}} \sum_{l=1}^{K} \left| \wh{r}_{i_{l},t}^{\mathfrak{O}} -  r_{i_{l},t}\right|
    \end{aligned}
\end{equation}
Again following the same reasonning as in App.~\ref{app:glm.oracle}, the difference between the estimated reward and the actual reward for any arm $i\leq K_{t}$ is bounded by
\begin{equation}
    \begin{aligned}
        \left| r_{i,t} - \wh{r}_{i,t}^{\mathfrak{O}} \right| &\leq \left|r_{i,t}\left(\sum_{j=1}^{J} w_{j}\alpha_{j} - 1 \right)\right| + \left| \sum_{j=1}^{J} w_{j}\epsilon_{i,t,j}\right|\leq C\left|\left(\sum_{j=1}^{J} w_{j}\alpha_{j} - 1 \right)\right| + \left| \sum_{j=1}^{J} w_{j}\epsilon_{i,t,j}\right|\\
    \end{aligned}
\end{equation}
The main difference compared to the proof of Lem.~\ref{lem:error_oracle} is that thanks to the linear strucutre contorlling the noise can be done direclty without usingany Lipschitz property.
Therefore, in order to remove the dependency on the reward, we focus on weights $w$ such that $\langle w, \alpha\rangle = 1$.
Hence using Chernoff inequality the error is bounded for any $\delta\in (0,1)$ by
\begin{align}
    \left| r_{i,t} - r_{i,t}^{\mathfrak{O}} \right| \leq 2K\sqrt{K\sum_{j=1}^{J} w_{j}^{2}\sigma_{j}^{2}\ln\left(\frac{K_{\text{max}}e}{\delta}\right)}
\end{align}
with probability at least $1 - \delta$. This conlcudes the proof of Lem.~\ref{lem:error_oracle}

\subsection{Regret of \textsc{ESAG} (Proof of Thm.~\ref{thm:greedy.linear})}

The first step to analyze the regret of Alg.~\ref{alg:greedy.affine} is to compute an upper bound on the deviation of the estimator $\wh{\alpha}_{t}$.

\begin{lemma}\label{lem:error_alpha_linear}
    For any $\delta\in(0,1)$ and $t\leq T$, the error between $\wh{\alpha}_{t}$ and $\mathbb{E}_{r\sim\nu}(r)\alpha$ is bounded with
    probability at least $1 - \delta/4$ by,
    \begin{equation}\label{eq:ap2.ci.alpha}
        \begin{aligned}
            \left\| \wh{\alpha}_{t} - \mathbb{E}_{r\sim\nu}(r)\alpha \right\| \leq \beta_{t}^{L} := \frac{C\sqrt{2\ln\left(\frac{16JT}{\delta}\right)}}{\sqrt{\sum_{l=1}^{t-1} K_{l}}} + \frac{4\|\sigma\|_{\infty}\sqrt{J}}{\sqrt{\sum_{l=1}^{t-1} K_{l}}} + \frac{2\|\sigma\|_{\infty}\sqrt{\ln\left(\frac{8TJ}{\delta}\right)}}{\sqrt{\sum_{l=1}^{t-1} K_{l}}}
        \end{aligned}
    \end{equation}
\end{lemma}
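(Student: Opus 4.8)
The plan is to recognize $\wh\alpha_t$ as a plain empirical mean of \emph{all} evaluation vectors gathered so far and then invoke ordinary i.i.d.\ concentration, the key point being that---unlike the GLM case---no selection bias is present. First I would unroll the update in Eq.~\ref{eq:estimate.alpha.affine}: a short induction on $t$ shows that it maintains the running average
\begin{equation*}
  \wh\alpha_t = \frac{1}{N_t}\sum_{l=1}^{t-1}\sum_{i=1}^{K_l}\phi_{i,l},\qquad N_t=\sum_{l=1}^{t-1}K_l,
\end{equation*}
i.e.\ the mean of the $N_t$ evaluation vectors observed before round $t$. The structural observation that makes the argument clean---and that lets \textsc{ESAG} get away with a purely greedy rule---is that this estimator averages the evaluations $\phi_{i,l}$ of \emph{every} arm $i\le K_l$, all of which are revealed at the start of round $l$ irrespective of the pulled set $\gA_l$. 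Hence the summands are i.i.d.\ and independent of the algorithm's past decisions, so none of the adaptivity obstructions of App.~\ref{app:noise.correlation} (cf.\ Prop.~\ref{prop:glm}) arise, and a plain concentration inequality suffices.

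Next I would center the estimator. By Eq.~\ref{eq:expected.evaluation}, $\mathbb{E}[\phi_{i,l}]=\mathbb{E}_{r\sim\nu}(r)\,\alpha$, and writing $\phi_{i,l}=\alpha\,r_{i,l}+\epsilon_{i,l}$ gives the decomposition
\begin{equation*}
  \wh\alpha_t-\mathbb{E}_{r\sim\nu}(r)\,\alpha
  = \alpha\Big(\frac{1}{N_t}\sum_{l<t}\sum_i r_{i,l}-\mathbb{E}_{r\sim\nu}(r)\Big)
  + \frac{1}{N_t}\sum_{l<t}\sum_i \epsilon_{i,l},
\end{equation*}
which I would bound termwise by the triangle inequality. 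For the reward term, the $N_t$ rewards are i.i.d.\ in $[0,C]$ (Asm.~\ref{asm:rewards}), so Hoeffding's inequality controls the scalar deviation by $\mathcal{O}\!\big(C\sqrt{\ln(1/\delta')/N_t}\big)$, which (scaled by $\|\alpha\|$) contributes the reward-induced first term of $\beta_t^L$. For the noise term, $\frac{1}{N_t}\sum_{l<t}\sum_i\epsilon_{i,l}$ is a zero-mean sub-Gaussian vector in $\R^J$ each of whose coordinates has parameter at most $\|\sigma\|_\infty/\sqrt{N_t}$ (Asm.~\ref{asm:noise}); bounding its $\ell_2$ norm with exactly the sub-Gaussian vector estimate already used in the proof of Lemma~\ref{lem:features.bound} produces an expected-norm contribution of order $\|\sigma\|_\infty\sqrt{J}/\sqrt{N_t}$ and a tail contribution of order $\|\sigma\|_\infty\sqrt{\ln(1/\delta')}/\sqrt{N_t}$, i.e.\ the remaining two terms. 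A union bound over the $T$ rounds and the $J$ coordinates, with the confidence allocated to total $\delta/4$, then delivers the stated uniform-in-$t$ bound.

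The proof is largely routine once the empirical-mean structure is spotted; the point that deserves real care is the justification that the averaged evaluations are i.i.d.\ and decision-independent---so that clean Hoeffding and sub-Gaussian estimates may replace the biased-noise analysis of App.~\ref{app:noise.correlation}---together with treating a possibly round-varying but predictable $K_l$, so that $N_t$ genuinely plays the role of the effective sample size. Everything else is bookkeeping of the constants and union bounds in the sub-Gaussian vector norm estimate.
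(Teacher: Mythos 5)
Your proposal is correct and follows essentially the same route as the paper's proof: writing $\wh\alpha_t$ as the running average of all observed evaluation vectors, splitting $\wh\alpha_t-\mathbb{E}_{r\sim\nu}(r)\,\alpha$ into a reward-average term (bounded by Hoeffding, picking up the $\|\alpha\|$ factor) and a noise-average term (bounded by the sub-Gaussian vector norm estimate of Lemma~\ref{lem:features.bound}), then taking a union bound over $t\leq T$ and the coordinates with the confidence split to total $\delta/4$. Your explicit remarks that the average uses \emph{all} $K_l$ evaluations (hence no selection bias) and that the recursive update unrolls to the empirical mean are exactly the observations the paper relies on implicitly.
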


\begin{proof}
For any $t\leq T$, $\wh{\alpha}_{t} = \frac{\sum_{l=1}^{t-1} \sum_{i=1}^{K_{l}} \phi_{i,l}}{\sum_{l=1}^{t-1} K_{l}}$ but using the strucuture of the evaluations $\phi_{i,t}$ for every $j\leq J$,
\begin{equation}\label{eq:error_coordinate_alpha}
    \wh{\alpha}_{t,j} = \frac{1}{\sum_{l=1}^{t-1} K_{l}}\left( \sum_{l=1}^{t-1}\sum_{i=1}^{K_{l}} \alpha_{j}r_{i,l} + \sum_{l=1}^{t-1}\sum_{i=1}^{K_{l}} \epsilon_{i,t,j}\right)
\end{equation}
But the first term converges toward $\alpha_{j}\mathbb{E}_{r\sim\nu}(r)$ because at every time step the reward are independtly sampled from the distribution $\nu$ (see Assumption~\ref{asm:rewards}). More precisely, with probability at least $1 - \delta/8$,
for any $t\leq T$ and $j\leq J$
\begin{equation}
    \begin{aligned}
        \left| \frac{1}{\sum_{l=1}^{t-1} K_{l}}\sum_{l=1}^{t-1}\sum_{i=1}^{K_{l}} r_{i,l} - \mathbb{E}_{r\sim\nu}(r)\right|\leq \frac{C\sqrt{2\ln\left(\frac{16JT}{\delta}\right)}}{\sqrt{\sum_{l=1}^{t-1} K_{l}}}
    \end{aligned}
\end{equation}
In addition, the second term in Eq.~(\ref{eq:error_coordinate_alpha}) can be bounded with probability at least $1 - \delta/ 8$ by
\begin{equation}
    \begin{aligned}
        \left\| \frac{1}{\sum_{l=1}^{t-1} K_{l}}\sum_{l=1}^{t-1}\sum_{i=1}^{K_{l}} \epsilon_{i,l} \right\|  &\leq \frac{4\sqrt{J\max_{j} \sigma_{j}^{2}\sum_{l=1}^{t-1} K_{l} } + 2\sqrt{\max_{j} \sigma_{j}^{2} \sum_{l=1}^{t-1} K_{l} \ln\left(\frac{8TJ}{\delta}\right)}}{\sum_{l=1}^{t-1} K_{l}}\\
        &= \frac{4\|\sigma\|_{\infty}\sqrt{J}}{\sqrt{\sum_{l=1}^{t-1} K_{l}}} + \frac{2\|\sigma\|_{\infty}\sqrt{\ln\left(\frac{8TJ}{\delta}\right)}}{\sqrt{\sum_{l=1}^{t-1} K_{l}}}
    \end{aligned}
\end{equation}
Finally, to finish the proof, we simply need to note that
\begin{align}
    \left\| \wh{\alpha}_{t} - \mathbb{E}_{r\sim\nu}(r)\alpha \right\| &\leq \left\| \frac{\alpha}{\sum_{l=1}^{t-1} K_{l}}\sum_{l=1}^{t-1}\sum_{i=1}^{K_{l}} r_{i,l} - \alpha\mathbb{E}_{r\sim\nu}(r)\right\|+  \left\| \frac{1}{\sum_{l=1}^{t-1} K_{l}}\sum_{l=1}^{t-1}\sum_{i=1}^{K_{l}} \epsilon_{i,l} \right\|\\
    &\leq \left\|\alpha\right\| \left|\frac{1}{\sum_{l=1}^{t-1} K_{l}}\sum_{l=1}^{t-1}\sum_{i=1}^{K_{l}} r_{i,l} - \mathbb{E}_{r\sim\nu}(r)\right|+  \frac{4\|\sigma\|_{\infty}\sqrt{J}}{\sqrt{\sum_{l=1}^{t-1} K_{l}}} + \frac{2\|\sigma\|_{\infty}\sqrt{\ln\left(\frac{8TJ}{\delta}\right)}}{\sqrt{\sum_{l=1}^{t-1} K_{l}}}
\end{align}
\end{proof}

Once \textsc{ESAG} has computed the weigths $\wh{\alpha}_{t}$ it then computes a set of weights thanks to this estimator and based on the shape of the
optimal weigths of the oracle. That is to say the weights $w$ used by the algorithm are:
    \begin{equation}
        \begin{aligned}
            w_{t} = \frac{\sigma^{-2}\cdot\wh{\alpha}_{t}}{\| \sigma^{-1}\cdot\wh{\alpha}_{t}\|_{2}^{2}}
        \end{aligned}
    \end{equation}
    The optimal weights for the parameter $\mathbb{E}_{r\sim\nu}(r)\alpha$ is $\frac{w^{+}}{\mathbb{E}_{r\sim\nu}(r)}$. Therefore, in the following we bound the error between the weights $w_{t}$ and the rescaled optimal weights $\frac{w^{+}}{\mathbb{E}_{r\sim\nu}(r)}$.
    Following the same reasonning as in Prop.~\ref{prop:bound.first.term} and Prop.~\ref{prop:bound.second.term}, we can show the following proposition
    \begin{proposition}\label{prop:confidence_intervals_weights_linear}
        For any $t\leq T$ let's assume $\mathbb{E}_{r\sim\nu}(r)\|\alpha\|_{\infty} \geq 8\beta_{t}^{L}$
        with $\beta_{t}^{L}$ defined in Eq.~(\ref{eq:error_coordinate_alpha}). Then for any $\delta\in (0,1)$ with probability at least $1 - \delta/4$
        {\small\begin{align}\label{eq:confidence_interval_weight_linear}
            \left\| w_{t} - \frac{w^{+}}{\mathbb{E}_{r\sim\nu}(r)}\right\|_{2} \leq \frac{4\beta_{t}^{L}\left( \mathbb{E}_{r\sim\nu}(r)\| \sigma^{-2}\cdot\alpha\|_{2} + \beta_{t}^{L}\|\sigma^{-2}\|\right)\|\sigma^{-2}\cdot\alpha\|}{\mathbb{E}_{r\sim\nu}(r)^{3}\|\sigma^{-1}\cdot\alpha\mathds{1}_{\{\mathbb{E}_{r\sim\nu}(r)|\alpha| \geq 8\beta_{t}^{L} \}} \|^{2}\| \sigma^{-1}\cdot\alpha\|^{2}}
            + \frac{4\beta_{t}^{L}}{\mathbb{E}_{r\sim\nu}(r)^{2}}\left(\frac{\| \sigma^{-1}\|_{4}}{\| \sigma^{-1}\cdot\alpha\mathds{1}_{\{\mathbb{E}_{r\sim\nu}(r)|\alpha|\geq 8\beta_{t}^{L}\}}\|_{2}}\right)^{2}
        \end{align}}
    \end{proposition}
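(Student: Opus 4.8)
The plan is to mirror the two-term analysis already carried out for the GLM case in Prop.~\ref{prop:bound.first.term} and Prop.~\ref{prop:bound.second.term}, but now with the target parameter $\mathbb{E}_{r\sim\nu}(r)\alpha$ in place of $\alpha$ and the confidence radius $\beta_t^L$ of Lemma~\ref{lem:error_alpha_linear} in place of $\beta_t$. Writing $\bar r := \mathbb{E}_{r\sim\nu}(r)$, the first observation is that the rescaled oracle weight is exactly the weight map evaluated at the shifted parameter: since $w_t = \frac{\sigma^{-2}\cdot\wh\alpha_t}{\|\sigma^{-1}\cdot\wh\alpha_t\|^2}$ and $w^+ = \frac{\sigma^{-2}\cdot\alpha}{\|\sigma^{-1}\cdot\alpha\|^2}$, homogeneity gives $\frac{w^+}{\bar r} = \frac{\sigma^{-2}\cdot(\bar r\alpha)}{\|\sigma^{-1}\cdot(\bar r\alpha)\|^2}$. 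Hence $w_t$ and $w^+/\bar r$ are the same rational map applied to $\wh\alpha_t$ and to $\bar r\alpha$, and I can reuse the exact additive/multiplicative split of Eq.~\ref{eq:error_weights}, namely $w_t - \tfrac{w^+}{\bar r} = \frac{\sigma^{-2}\cdot(\wh\alpha_t - \bar r\alpha)}{\|\sigma^{-1}\cdot\wh\alpha_t\|^2} + \sigma^{-2}\cdot(\bar r\alpha)\big(\|\sigma^{-1}\cdot\wh\alpha_t\|^{-2} - \|\sigma^{-1}\cdot(\bar r\alpha)\|^{-2}\big)$, then bound the two summands separately.

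I then condition on the event of Lemma~\ref{lem:error_alpha_linear}, which holds with probability at least $1-\delta/4$ and supplies $\|\wh\alpha_t - \bar r\alpha\| \le \beta_t^L$ (hence also the $\ell_\infty$ version). Under the hypothesis $\bar r\|\alpha\|_\infty \ge 8\beta_t^L$, the coordinate-wise argument of Prop.~\ref{prop:bound.first.term} applies verbatim to $\theta := \bar r\alpha$: on every coordinate with $\bar r|\alpha_j| \ge 8\beta_t^L$ one has $\wh\alpha_{t,j}^2 \ge \tfrac12(\bar r\alpha_j)^2$, which yields the key denominator lower bound $\|\sigma^{-1}\cdot\wh\alpha_t\|^2 \ge \tfrac12\|\sigma^{-1}\cdot(\bar r\alpha)\mathds{1}_{\{\bar r|\alpha| \ge 8\beta_t^L\}}\|^2 = \tfrac12\bar r^2\|\sigma^{-1}\cdot\alpha\mathds{1}_{\{\bar r|\alpha|\ge 8\beta_t^L\}}\|^2$. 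Combined with the numerator estimate $\|\sigma^{-2}\cdot(\wh\alpha_t-\bar r\alpha)\| \le \beta_t^L\|\sigma^{-2}\| = \beta_t^L\|\sigma^{-1}\|_4^2$, the first summand is at most $\frac{4\beta_t^L}{\bar r^2}\big(\|\sigma^{-1}\|_4/\|\sigma^{-1}\cdot\alpha\mathds{1}_{\{\bar r|\alpha|\ge 8\beta_t^L\}}\|\big)^2$, which is the second term in the claimed bound.

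For the second summand I replay Prop.~\ref{prop:bound.second.term}, again with $\theta = \bar r\alpha$. Expanding $\|\sigma^{-1}\cdot\wh\alpha_t\|^2 - \|\sigma^{-1}\cdot(\bar r\alpha)\|^2 = 2\langle\sigma^{-2}\cdot(\bar r\alpha),\, \wh\alpha_t - \bar r\alpha\rangle + \|\sigma^{-1}\cdot(\wh\alpha_t - \bar r\alpha)\|^2$ and bounding the two pieces by Cauchy--Schwarz (using $\|\sigma^{-1}\cdot(\wh\alpha_t-\bar r\alpha)\|^2 \le (\beta_t^L)^2\|\sigma^{-2}\|$) gives $|\cdots| \le 2\beta_t^L\big(\bar r\|\sigma^{-2}\cdot\alpha\| + \beta_t^L\|\sigma^{-2}\|\big)$. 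Dividing by the same denominator lower bound and multiplying by $\|\sigma^{-2}\cdot(\bar r\alpha)\|/\|\sigma^{-1}\cdot(\bar r\alpha)\|^2 = \|\sigma^{-2}\cdot\alpha\|/(\bar r\|\sigma^{-1}\cdot\alpha\|^2)$ collects the powers of $\bar r$ into $\bar r^{-3}$ and reproduces the first term of the claim. Adding the two contributions via the triangle inequality finishes the proof.

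The work is essentially mechanical: the one point requiring care is the bookkeeping of the factors of $\bar r$, since the target is now $\bar r\alpha$ rather than $\alpha$, so $\sigma^{-2}\cdot(\bar r\alpha)$ scales linearly in $\bar r$ while the normalizer $\|\sigma^{-1}\cdot(\bar r\alpha)\|^2$ scales quadratically; tracking these consistently through both summands is exactly what produces the $\bar r^{-2}$ and $\bar r^{-3}$ denominators. There is no new probabilistic ingredient beyond the concentration in Lemma~\ref{lem:error_alpha_linear}, and one must keep the indicator set identical (namely $\{\bar r|\alpha| \ge 8\beta_t^L\}$) in the hypothesis and in both resulting denominators.
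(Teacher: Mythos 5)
Your proof is correct and takes essentially the same route as the paper: the paper's own proof of Prop.~\ref{prop:confidence_intervals_weights_linear} consists precisely of invoking the decomposition of Eq.~\ref{eq:error_weights} and the arguments of Prop.~\ref{prop:bound.first.term} and Prop.~\ref{prop:bound.second.term}, applied to the shifted parameter $\mathbb{E}_{r\sim\nu}(r)\alpha$ with the radius $\beta_t^L$ of Lemma~\ref{lem:error_alpha_linear}. Your write-up just makes explicit the details the paper leaves implicit, namely the homogeneity identity $w^+/\mathbb{E}_{r\sim\nu}(r) = \sigma^{-2}\cdot(\mathbb{E}_{r\sim\nu}(r)\alpha)/\|\sigma^{-1}\cdot(\mathbb{E}_{r\sim\nu}(r)\alpha)\|^2$ and the bookkeeping of the powers of $\mathbb{E}_{r\sim\nu}(r)$ that produce the $\mathbb{E}_{r\sim\nu}(r)^{-2}$ and $\mathbb{E}_{r\sim\nu}(r)^{-3}$ factors.
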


    Thanks to Lemma~\ref{lem:error_alpha_linear} and Prop.~\ref{prop:confidence_intervals_weights_linear}, we can now analyze the regret of \textsc{ESAG}. Recall the regret is defined as follows,
\begin{align}
    R_{T}= \sum_{t=1}^{T} \sum_{i\in \gA_{t}^{+}} \langle w^{+}, \phi_{i,t}\rangle - \sum_{i\in \gA_{t}} \langle w^{+}, \phi_{i,t} \rangle
\end{align}
where $\gA_{t}^{+} := \arg\max_{i}^{K} \langle w^{+}, \phi_{i,t}\rangle$ and $\gA_{t}$ is the set of jobs selected by the learner at time $t$.

However, the set $\gA_{t}^{+}$ is only defined as the arg max of some values, it is
invariant by multiplicating the weights $w^{+}$ by some constant independent of the arms. In particular, we have that for any time $t\leq T$
\begin{align}
    \gA_{t}^{+} = \arg\max_{i}^{K} \langle w^{+}, \phi_{i,t}\rangle = \arg\max_{i}^{K}\left\langle \frac{w^{+}}{\mathbb{E}_{r\sim\nu}(r)}, \phi_{i,t}\right\rangle
\end{align}
Hence the regret can rewritten as,
{\small\begin{align*}
    R_{T} &= \sum_{t=1}^{T} \mathbb{E}_{r\sim\nu}(r) \left(\sum_{i\in \gA_{t}^{+}} \left\langle \frac{w^{+}}{\mathbb{E}_{r\sim\nu}(r)}, \phi_{i,t}\right\rangle - \sum_{i\in \gA_{t}} \left\langle \frac{w^{+}}{\mathbb{E}_{r\sim\nu}(r)}, \phi_{i,t} \right\rangle\right)\\
    &= \sum_{t=1}^{T} \mathbb{E}_{r\sim\nu}(r) \left( \sum_{i\in \gA_{t}^{+}} \left\langle \frac{w^{+}}{\mathbb{E}_{r\sim\nu}(r)} - w_{t}, \phi_{i,t}\right\rangle
    + \sum_{i\in \gA_{t}^{+}} \left\langle w_{t}, \phi_{i,t}\right\rangle  - \sum_{i\in \gA_{t}}\left\langle w_{t}, \phi_{i,t} \right\rangle + \sum_{i\in \gA_{t}} \left\langle w_{t} - \frac{w^{+}}{\mathbb{E}_{r\sim\nu}(r)}, \phi_{i,t}\right\rangle
    \right)
\end{align*}}
But for any $t\leq T$, because of the greedy arm selection process of \textsc{ESAG}:
\begin{align}
    \sum_{i\in \gA_{t}^{+}} \left\langle w_{t}, \phi_{i,t}\right\rangle  - \sum_{i\in \gA_{t}}\left\langle w_{t}, \phi_{i,t} \right\rangle\leq 0
\end{align}
But using Lem.~\ref{lem:features.bound} with probability at least $1 - \delta / 4$
\begin{align}
    \max\left\{\left\langle \frac{w^{+}}{\mathbb{E}_{r\sim\nu}(r)} - w_{t}, \sum_{i\in \gA_{t}^{+}} \phi_{i,t}\right\rangle, \left\langle  w_{t} -\frac{w^{+}}{\mathbb{E}_{r\sim\nu}(r)}, \sum_{i\in \gA_{t}} \phi_{i,t}\right\rangle \right\} \leq \Phi \|\frac{w^{+}}{\mathbb{E}_{r\sim\nu}(r)} - w_{t} \|_{2}.
\end{align}

Thanks to Lem.\ref{lem:error_alpha_linear} and that for all $t\leq T$, $K_{t} \geq K$ therefore for
$t\geq  t_{0} := 1 + 4K^{-1}\left( C^{2}\ln\left(\frac{16JT}{\delta}\right) + 2\|\sigma\|_{\infty}\left( 4J + \ln\left( \frac{8JT}{\delta}\right)\right)\right)\max\left\{\frac{64}{\mathbb{E}_{r\sim\nu}(r)^{2}\min_{j, \alpha_{j}\neq 0} \alpha_{j}^{2}},1\right\}$,
\begin{align}
    \left\|\frac{w^{+}}{\mathbb{E}_{r\sim\nu}(r)} - w_{t} \right\| \leq \frac{4\beta_{t}^{L}\left( \mathbb{E}_{r\sim\nu}(r)\| \sigma^{-2}\cdot\alpha\|_{2} + \|\sigma^{-2}\|\right)\|\sigma^{-2}\cdot\alpha\|}{\mathbb{E}_{r\sim\nu}(r)^{3}\|\sigma^{-1}\cdot\alpha\|^{2}\| \sigma^{-1}\cdot\alpha\|^{2}}
    + \frac{4\beta_{t}^{L}}{\mathbb{E}_{r\sim\nu}(r)^{2}}\left(\frac{\| \sigma^{-1}\|_{4}}{\| \sigma^{-1}\cdot\alpha\|_{2}}\right)^{2}
\end{align}
But using the definition of $\beta_{t}^{L}$:
\begin{equation}
    \begin{aligned}
        \sum_{t=t_{0}+ 1}^{T} \beta_{t}^{L} &\leq \left(C\sqrt{2\ln\left(\frac{16JT}{\delta}\right)} + 4\|\sigma\|_{\infty}\sqrt{J} + 2\|\sigma\|_{\infty}\sqrt{\ln\left(\frac{8TJ}{\delta}\right)}\right) \sum_{t =t_{0} +1}^{T} \frac{1}{\sqrt{\sum_{l=1}^{t-1} K_{l}}}\\
        &\leq \left(C\sqrt{2\ln\left(\frac{16JT}{\delta}\right)} + 4\|\sigma\|_{\infty}\sqrt{J} + 2\|\sigma\|_{\infty}\sqrt{\ln\left(\frac{8TJ}{\delta}\right)}\right) \sqrt{\frac{T\sum_{t = 1}^{T-1}\frac{1}{t}}{\bar{K}_{T}}}\\
        &\leq \left(C\sqrt{2\ln\left(\frac{16JT}{\delta}\right)} + 4\|\sigma\|_{\infty}\sqrt{J} + 2\|\sigma\|_{\infty}\sqrt{\ln\left(\frac{8TJ}{\delta}\right)}\right) \sqrt{\frac{T\ln(T)}{\bar{K}_{T}}}\\
    \end{aligned}
\end{equation}
where $\bar{K}_{T} := \frac{T}{\sum_{t=t_{0}}^{T-1}\frac{t-1}{\sum_{l=1}^{t-1} K_{l}}}$ is the average number of arms presented over the $T$ steps, $K_{T} \geq K$.
Therefore the regret is bounded with probability at least $1 - \delta$
\begin{align}
    R_{T} \leq \frac{2}{\|\sigma^{-1}\cdot \alpha\|}\left(1 + \frac{2L^{2}}{K}\max\left\{\frac{64}{\mathbb{E}_{r\sim\nu}(r)^{2}\min_{j, \alpha_{j}\neq 0} \alpha_{j}^{2}},1\right\} \right) + \frac{16LS\Phi}{\mathbb{E}_{r\sim\nu}(r)}\max\left\{1,  \frac{1}{\mathbb{E}_{r\sim\nu}(r)}\right\}\sqrt{\frac{T\ln(T)}{\bar{K}_{T}}}
\end{align}
where $S = \left( \| \alpha\cdot \sigma^{-2}\|_{2} + \|\sigma^{-2}\|_{2}\right)\frac{\|\sigma^{-2}\cdot\alpha\|_{2}}{\| \sigma^{-1}\cdot\alpha\|_{2}^{4}} + \left(\frac{\| \sigma^{-1}\|_{4}}{\| \sigma^{-1}\cdot\alpha\|_{2}}\right)^{2}$ and
$L = C\sqrt{2\ln\left(\frac{16JT}{\delta}\right)} + 8\|\sigma\|_{\infty}\sqrt{J} + 4\|\sigma\|_{\infty}\sqrt{\ln\left(\frac{8TJ}{\delta}\right)}$

\section{REGRET ANALYSIS}\label{app:regret}

In \Secref{sec:preliminaries}, the regret has been defined as the difference between the top-$K$ arms according to the oracle and the $K$ arms selected by a learner, see \Eqref{eq:regret.definition}.
However one may argue that a more interesting notion of regret is to compare the actions of the oracle to the action of the learner with respect to the true reward
of each arm. That is to say, to define the notion of \emph{absolute} regret,
\begin{equation}\label{eq:true.regret}
  R_{T}^{\text{abs}} = \sum_{t=1}^{T} \sum_{i\in\gA_{t}^{+}} r_{i, t} - \sum_{i\in\gA_{t}} r_{i, t}
\end{equation}
The two notions of regret, that is to say the \emph{relative} and \emph{absolute} regret are related,
\begin{equation}
    \begin{aligned}
        R_{T} - R_{T}^{\text{abs}} = \sum_{t=1}^{T} \sum_{i\in\gA_{t}^{+}} \langle w^{+}, g^{-1}(\phi_{i,t}) - g^{-1}(g(\alpha\cdot r_{i,t})) \rangle - \sum_{i\in\gA_{t}} \langle w^{+}, g^{-1}(\phi_{i,t}) - g^{-1}(g(\alpha\cdot r_{i,t})) \rangle
    \end{aligned}
\end{equation}
However controlling the deviation here is not possible through standard concentration inequalities. Indeed the set of arms, $\gA_{t}^{+}$ is computed by taking the argmax over the noise therefore $(\epsilon_{i,t})_{i\in\gA_{t}^{+}}$ are not centered anymore, see App.~\ref{app:noise.correlation}.
Nonetheless, one can still provide some guarantees on the regret $\tilde{R}_{T}^{\text{abs}}$.

\begin{theorem}\label{thm:regret.absolute}
    For every $t\leq T$ let $(\phi_{(i),t})_{i\leq K_{t}}$ be the ordered set of arms with respect to the oracle $\wh{\mathfrak{O}}$,
    that is to say $\langle w^{\star}, g^{-1}(\phi_{t,(1)})\rangle \geq \langle w^{\star}, g^{-1}(\phi_{t,(2)})\rangle \geq \hdots \geq \langle w^{\star}, g^{-1}(\phi_{t,(K_{t})})\rangle$
    let's now define the gap between the top-$K$ according the oracle and the rest of the arms, $\Delta_{t,K} := \langle w^{+}, g^{-1}(\phi_{(K),t}) - g^{-1}(\phi_{(K+1), t})\rangle$. For any weights
    $w\in\mathbb{R}^{J}$ such that:
    \begin{equation}
        \forall i\leq K_{t}, \qquad \left| \langle w^{+} - w, g^{-1}(\phi_{i,t})\rangle \right| \leq \frac{\Delta_{t,K}}{4}
    \end{equation}
    Then the top-$K$ arms, $\gA := \arg\max_{i}^{K} \langle w, \phi_{i,t}\rangle$ is exactly the ranking of the oracle, that is to say $\gA = \gA_{t}^{+}$.
\end{theorem}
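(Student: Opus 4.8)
The plan is to reduce the claim to a purely deterministic ranking-stability argument: once the perturbation of each per-arm score is small relative to the top-$K$ margin $\Delta_{t,K}$, no arm can cross the boundary between the oracle's top-$K$ and the rest. Write $s_i^+ = \langle w^+, g^{-1}(\phi_{i,t})\rangle$ for the oracle score of arm $i$ and $s_i = \langle w, g^{-1}(\phi_{i,t})\rangle$ for the score induced by the candidate weights $w$. By definition of the ordering $(\cdot)$, the oracle top-$K$ set is $\gA_t^+ = \{(1),\ldots,(K)\}$ and $s_{(K)}^+ - s_{(K+1)}^+ = \Delta_{t,K}$. The hypothesis is exactly $|s_i^+ - s_i| \leq \Delta_{t,K}/4$ for every $i \leq K_t$.

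First I would show that every in-set arm strictly dominates every out-of-set arm under the $w$-scores. Fix any $k \leq K$ and any $l > K$. Since the arms are sorted by $s^+$, we have $s_{(k)}^+ \geq s_{(K)}^+$ and $s_{(l)}^+ \leq s_{(K+1)}^+$, hence $s_{(k)}^+ - s_{(l)}^+ \geq \Delta_{t,K}$. Decomposing
\begin{equation*}
s_{(k)} - s_{(l)} = \big(s_{(k)} - s_{(k)}^+\big) + \big(s_{(k)}^+ - s_{(l)}^+\big) + \big(s_{(l)}^+ - s_{(l)}\big)
\end{equation*}
and applying the hypothesis to the first and third terms (each bounded below by $-\Delta_{t,K}/4$) gives
\begin{equation*}
s_{(k)} - s_{(l)} \geq \Delta_{t,K} - \tfrac{\Delta_{t,K}}{4} - \tfrac{\Delta_{t,K}}{4} = \tfrac{\Delta_{t,K}}{2} > 0 .
\end{equation*}
Thus under the $w$-scores each of the $K$ oracle-optimal arms strictly beats every remaining arm, so the $K$ largest $w$-scores are attained exactly on $\{(1),\ldots,(K)\}$; therefore $\gA = \gA_t^+$, which is the claim. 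The factor $1/4$ is chosen precisely so that the two one-sided perturbations, each $\Delta_{t,K}/4$, leave a positive residual $\Delta_{t,K}/2$; any constant strictly below $1/2$ would suffice.

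The step that needs the most care is not the inequality chain itself but the bookkeeping around the hypothesis. One must verify that the ranked quantity is $\langle w, g^{-1}(\phi_{i,t})\rangle$, consistent with the oracle's reward estimate in Eq.~\ref{eq:reward.estimate} and with the condition, rather than the raw $\langle w, \phi_{i,t}\rangle$; and one must use $\Delta_{t,K} > 0$, i.e.\ that there is a genuine margin at the $K$-th position, since if $\Delta_{t,K} = 0$ the separation degenerates and the top-$K$ set need not be unique. Finally I would note that the conclusion fixes only the top-$K$ \emph{set}, not the internal order within it, which is all that $\gA = \gA_t^+$ requires, as $\gA_t^+$ is itself a set.
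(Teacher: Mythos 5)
Your proof is correct and follows essentially the same argument as the paper's: a deterministic margin argument in which the $\Delta_{t,K}/4$ perturbation bound leaves a residual separation of $\Delta_{t,K}/2$ between every oracle top-$K$ arm and every remaining arm under the $w$-scores (the paper routes the comparison through the boundary arms $(K)$ and $(K+1)$ and concludes by set inclusion plus equal cardinality, but the computation is identical). Your side remarks --- that the ranked quantity must be $\langle w, g^{-1}(\phi_{i,t})\rangle$ rather than the raw $\langle w, \phi_{i,t}\rangle$, and that strict separation implicitly requires $\Delta_{t,K}>0$ --- correctly flag details that the paper's statement and proof leave implicit.
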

\begin{proof}
 For an arm $i\in \gA_{t}^{+}$,
 \begin{align*}
    \langle w_{t}, g^{-1}(\phi_{i,t})\rangle \geq \langle w^{\star}, g^{-1}(\phi_{i,t})\rangle - \frac{\Delta_{t, K}}{4} &\geq \langle w^{\star}, g^{-1}(\phi_{(K),t})\rangle - \frac{\Delta_{t,K}}{4}\\
    &\geq \langle w^{\star}, g^{-1}(\phi_{(K+1),t}))\rangle + \frac{3\Delta_{t,K}}{4}\\
    &\geq \langle w_{t}, g^{-1}(\phi_{(K+1),t}))\rangle + \frac{\Delta_{t,K}}{2} > \langle w_{t}, g^{-1}(\phi_{(K+1),t}))\rangle
\end{align*}
Therefore $\gA_{t}^{+}\subset \gA$. On the other hand for any $i\not\in \gA_{t}^{+}$, we have that:
\begin{align*}
    \langle w_{t}, g^{-1}(\phi_{i,t})\rangle \leq \langle w^{\star}, g^{-1}(\phi_{(K+1),t}))\rangle + \frac{\Delta_{t,K}}{4} &\leq \langle w^{\star}, g^{-1}(\phi_{(K),t})\rangle - \frac{3\Delta_{t,K}}{4}\\
    &\leq \langle w_{t}, g^{-1}(\phi_{(K),t})\rangle - \frac{\Delta_{t,K}}{2} < \langle w_{t}, g^{-1}(\phi_{(K),t})\rangle
\end{align*}
Therefore, the set $\gA_{t}^{+} \subset \gA$ but because both are of size $K$, we have that $\gA = \gA_{t}^{+}$.
\end{proof}

The main implication of Thm.~\ref{thm:regret.absolute} is that we can now provide a bound on the absolute regret of \textsc{GLM-$\varepsilon$-greedy}.
Indeed, thanks to the bound of Lem.~\ref{lem:error.alpha}, for $t\geq \max\left\{\left(\frac{2C^{2}\sqrt{2\ln\left(\frac{8T}{\delta}\right)}}{\varepsilon K \eta_{\nu}^{2}}\right)^{2}, \frac{32\left(\sqrt{\lambda}\|\alpha\| + \|\sigma\|_{\infty}\left( \sqrt{\ln\left(1 + \frac{2T}{\lambda}\right) + \ln\left(\frac{8JT}{\delta}\right)}\right) \right)^{2}}{\varepsilon K \eta_{\nu}^{2} \min_{t\leq T} \Delta_{t,K}^{2}}\right\}$, the error between the estimated weights $w_{t}$ and the optimal oracle weights $w^{+}$ is bounded by $\Delta_{t,K}/4$,
therefore the absolute regret of Alg.~\ref{alg:eps.greedy.glm} is bounded by
\begin{align}
    R_{T}^{\text{abs}} \leq C\max\left\{\left(\frac{2C^{2}\sqrt{2\ln\left(\frac{8T}{\delta}\right)}}{\varepsilon K \eta_{\nu}^{2}}\right)^{2}, \frac{32\left(\sqrt{\lambda}\|\alpha\| + \|\sigma\|_{\infty}\left( \sqrt{\ln\left(1 + \frac{2T}{\lambda}\right) + \ln\left(\frac{8JT}{\delta}\right)}\right) \right)^{2}}{\varepsilon K \eta_{\nu}^{2} \min_{t\leq T} \Delta_{t,K}^{2}}\right\} &\\
    + \frac{4\Phi}{\left\| \alpha\cdot\sigma^{-1}\right\|}\left(\sqrt{2T\ln\left(\frac{4T}{\delta}\right)} +  \varepsilon T\right)&
\end{align}

Choosing $\varepsilon = \min\left\{ T^{-1/3}, \min_{t\leq T} \Delta_{t,K}^{2}, \frac{1}{\sqrt{T}\min_{t\leq T} \Delta_{t,K}}\right\}$ yields an absolute regret of $R_{T}^{\text{abs}} = \wt{\mathcal{O}}\left(T^{2/3}, \frac{\sqrt{T}}{\min_{t\leq T} \Delta_{t,K}}, \frac{1}{\min_{t\leq T} \Delta_{t,K}^{4}} \right)$.
Therefore, when $\min_{t\leq T} \Delta_{t,K} \geq T^{-1/6}$ the absolute regret and relative regret of Alg.~\ref{alg:eps.greedy.glm} scales similarly with a relative and absolute regret of order $\wt{\mathcal{O}}(T^{2/3})$.

\section{ADDITIONAL EXPERIMENTS}\label{app:experiments}

In this section, we present in details the algorithms used in Sec.~\ref{sec:experiments}, the experimental protocol and additionnal experimental results.

        \begin{algorithm}[t]
            \small
            \DontPrintSemicolon
            \caption{\textsc{GLM-$\varepsilon$-greedy-ALL SAMPLES} algorithm}
            \label{alg:eps.greedy.glm.all}
            \KwIn{Noise parameters $\{\sigma_j\}_{j\leq J}$, confidence level $\delta$}
            \textbf{Parameters:} exploration level $\varepsilon$; number of arms to pull $K$; regularization $\lambda$\;
            Set $\mathcal{H}_{0} = \emptyset$, $\wh{\alpha} = 0$ and $w_{0} = 0$\;
            \For{$t = 1, \dots, T$}{
                Sample $Z_{t} \sim \text{Ber}\left( \varepsilon\right)$\;
                Observe evaluations for each arm $(\phi_{i,t})_{i\leq K_{t}}$\;
                \If{$Z_{t} = 1$}
                {
                    Pull arms in $\gA_t$ obtained by sampling $K$ arms uniformly in $\{1, \ldots, K_{t} \}$\;
                    Observe rewards $r_{i,t}$ for all $i\in \gA_{t}$\;
                }
                \Else{
                    Select $\gA_{t} = \arg\max_{i}^{K}\langle w_{t}, g^{-1}(\phi_{i,t})\rangle$\;
                }
                Add sample to dataset $\mathcal{H}_{t} = \mathcal{H}_{t-1} \cup \big(\cup_{i\in \gA_{t}}\{ (\phi_{i,t}, r_{i,t})\}\big)$\;
                Update estimators $\widehat{\alpha}_{j,t}$ by solving
                \begin{equation}
                        \sum_{\phi,r \in \mathcal{H}_{t}} r(g(\wh{\alpha}_{t,j} \cdot r) - \phi_{j}) - \lambda \wh{\alpha}_{t,j} = 0
                \end{equation}
                Update weights $w_{t,j} = \wh{\alpha}_{t,j}/(\sigma_{j}^{2})\|\wh{\alpha}_t\cdot\sigma^{-1}\|^{2}$
            }
        \end{algorithm}

            \begin{algorithm}[t]
                \small
                \DontPrintSemicolon
                \caption{\textsc{GLM-EvalBasedUCB} algorithm}
                \label{alg:eval.based.ucb}
                \KwIn{Noise parameters $\{\sigma_j\}_{j\leq J}$, confidence level $\delta$}
                Set $\mathcal{H}_{0} = \emptyset$, $\wh{\alpha} = 0$ and $w_{0} = 0$\;
                \For{$t = 1, \dots, T$}{
                    Observe evaluations for each arm $(\phi_{i,t})_{i\leq K_{t}}$\;
                    Compute $\wh{\alpha}_{t} = \frac{\sum_{l=1}^{t-1} \sum_{i\in \gA_{l}}g^{-1}(\phi_{i,l})}{\sum_{l=1}^{t-1} \sum_{i\in \gA_{l}}r_{i,l}}$ and
                    \begin{align}\label{eq:beta_eval}
                        \beta_{t}^{E} = \frac{\frac{2\sqrt{2J\ln(2/\delta)}}{3} + \sqrt{(t-1)K\sum_{j=1}^{J}\sigma_{j}^{2}\ln(2/\delta)}}{\sum_{l=1}^{t-1}\sum_{i\in \gA_{t}} r_{i,l}}
                    \end{align}
                    \If{$\|\wh{\alpha}_t\|_2 \leq \beta_t^{E}$}{
                        Compute weights $w_{t} = \frac{\sigma^{-2}\hat{\alpha}_{t}}{\| \sigma^{-1}\hat{\alpha}_{t}\|^{2}}$\;
                    }
                    \Else{
                        Compute $\lambda_t >0$ solving $\sum_{j=1, \sigma_{j} >0}^{J} \left(\frac{\wh{\alpha}_{t,j}}{\lambda_{t}\sigma_{j}^{2} + (\beta_{t}^{E})^{2}}\right)^{2} = \frac{1}{\beta_{t}^{2}}$\;
                        Compute $\wt{\alpha}_{t} = \left(I_{J} - \beta_{t}^{2}\left(\lambda_{t}I_{J} + (\beta_{t}^{E})^{2}\sigma^{-2} \right)^{-1}\sigma^{-2} \right)\wh{\alpha}_{t}$ and  $w_{t} = \frac{\sigma^{-2}\wt{\alpha}_{t}}{\| \sigma^{-1}\wt{\alpha}_{t}\|_{2}}$\;
                    }
                    Select $\gA_{t} = \arg\max_{i}^{K}\langle w_{t}, g^{-1}(\phi_{i,t})\rangle$ and observe rewards $r_{i,t}$ for all $i\in \gA_{t}$\;
                }
            \end{algorithm}

        \begin{algorithm}[t]
            \small
            \DontPrintSemicolon
            \caption{\textsc{Rand} algorithm}
            \label{alg:rand}
            \textbf{Parameters:} number of arms to pull $K$\;
            \For{$t = 1, \dots, T$}{
                Randomly select $j\leq J$\;
                Observe evaluations for each arm $(\phi_{i,t})_{i\leq K_{t}}$\;
                Select $\gA_{t} = \arg\max_{i}^K g^{-1}(\phi_{i,t,j})$
            }
        \end{algorithm}

\subsection{Baseline Algorithms}

    In Sec.~\ref{sec:experiments}, we compare our algorithms, \textsc{GLM-$\varepsilon$-greedy} (Alg.~\ref{alg:eps.greedy.glm}) and \textsc{ESAG} (Alg.~\ref{alg:greedy.affine}),
    to different baselines.

    \paragraph{\textsc{GLM-$\varepsilon$-greedy-ALL} (Alg.~\ref{alg:eps.greedy.glm.all}).}

        This algorithm is the s Alg.~\ref{alg:eps.greedy.glm} and illustrate the problem with the noise correlation of App.~\ref{app:noise.correlation}. This algorithm thus updates its own
        MLE estimator at every time step instead of step with a random exploration.

        \paragraph{\textsc{GLM-EvalBasedUCB} (Alg.~\ref{alg:eval.based.ucb}).}

            This algorithm does not rely on an $\varepsilon$-greedy type of exploration but on an optimistic approach. Similar to other algorithms, it first computes an MLE estimator of $\alpha$ using all samples collected over time. Then it build a ``confidence'' set around the estimated $\wh{\alpha}$ and it computes weights $w_t$ as those resulting from a worst-case choice of a parameter $\wt\alpha$ in terms of error. Notice that the confidence sets are not theoretically justified because they are designed by ignoring the correlation between decisions and evaluations. As such, this algorithm has not theoretical guarantee and it should be considered as a heuristic. We provide further details in Sect.~\ref{ssec:derivation.evalubased}.

        \paragraph{\textsc{RAND} (Alg.~\ref{alg:rand}).}

        This baseline algorithm is simply selecting one random evaluators every step and computing a ranking of the arms based on this random evaluators.

        \paragraph{\textsc{GLM-Greedy}.} This algorithm is a variant of \textsc{GLM-$\varepsilon$-Greedy-ALL} with $\varepsilon = 0$.

        \subsection{Derivation of \textsc{GLM-EvalBasedUCB}}\label{ssec:derivation.evalubased}
            
            Here, we breifly present how we derive the \textsc{GLM-EvalBasedUCB} algorithm. This algorithm takes a \emph{optimistic approach} w.r.t.\ the set of plausible $\alpha$ values and compute a set of weights $w_{t}$ as the set of oracle weights but for a different value of $\alpha$. The first step is to compute an estimator $\wt{\alpha}_{t}$ such that:
        \begin{align}
            \tilde{\alpha}_{t} = \arg\max_{z \in\mathbb{R}^{J} : \|\hat{\alpha}_{t} - z\| \leq \beta^{E}_{t}} \min_{w\in \mathbb{R}^{J}, \langle w, z\rangle = 1} \sum_{j=1}^{J} w_{j}^{2}\sigma_{j}^{2}
        \end{align}
        where $\beta_{t}^{E}$ is the width of the confidence intervals defined in Eq.~(\ref{eq:ap2.ci.alpha})
        Solving the minimization problem in the previous equation implies that $\wt{\alpha}_{t}$ is solution to the following equivalent optimization problem,
        \begin{align}\label{eq:equivalent_problem}
            \min_{\| z - \hat{\alpha}_{t}\|_{2} \leq \beta_t^{E}} \| \sigma^{-1} \cdot z\|_{2}^{2} = \min_{\| u \| \leq 1} \|\sigma^{-1}\cdot(\hat{\alpha}_{t} + \beta_{t}^{E}u) \|^{2}_2
        \end{align}
        with $\wh{\alpha}_{t} = \frac{\sum_{l=1}^{t-1} \sum_{i\in \gA_{l}}g^{-1}(\phi_{i,l})}{\sum_{l=1}^{t-1} \sum_{i\in \gA_{l}}r_{i,l}}$ the average features observed until time $t$. Eq.~(\ref{eq:equivalent_problem}) is a convex problem and using KKT conditions, we distinguish two different situations:
        \begin{enumerate}
            \item If $\| \hat{\alpha}_{t}\| \leq \beta_{t}^{E}$ then $\min_{\| z - \hat{\alpha}_{t}\|_{2} \leq \beta^{E}_{t}} \| \sigma^{-1} \cdot z\|_{2}^{2} = 0$ thus the problem is ill-defined and the weights are also not defined (A necessary condition for this condition to be fulfilled
            is that $\| \alpha \| \leq \beta_{t}^{E}( 1 + \sqrt{1 + (\beta_{t}^{E})^{2}})$)
            \item If $\| \hat{\alpha}_{t}\| > \beta_{t}^{E}$ then the solution is such that $u = - \beta_{t}^{E}\left(\lambda_{t}I_{J} + (\beta_{t}^{E})^{2}\text{Diag}(\sigma^{-2}) \right)^{-1}\sigma^{-2}\cdot\hat{\alpha}_{t}$ where $\lambda_{t} > 0$ is solution to\footnote{A solution $\lambda_{t}$ always exist when there is at least one evaluator $j$ such that $\sigma_{j}>0$ because the function $f : \lambda \mapsto \sum_{j=1, \sigma_{j}>0} \left(\frac{\hat{\alpha}_{t,j}}{\lambda \sigma_{j}^{2} + (\beta_{t}^{E})^{2}}\right)^{2}$ is strictly non-increasing over $\mathbb{R}^{+}$ and for $\lambda = 0$, we have $f(0) = \| \hat{\alpha}_{t}\|^{2}/(\beta_{t}^{E})^{4}> (\beta_{t}^{E})^{-2}$ and $\lim_{\lambda\rightarrow +\infty} f(\lambda) = 0$ so by continuity there exists a unique solution $\lambda_{t}>0$.}:
            \begin{align}
                \sum_{j=1, \sigma_{j} >0}^{J} \left(\frac{\hat{\alpha}_{t,j}}{\lambda_{t}\sigma_{j}^{2} + (\beta_{t}^{E})^{2}}\right)^{2} = \frac{1}{(\beta_{t}^{E})^{2}}
            \end{align}
            Overall, $\wt{\alpha}_{t}$ is defined as:
            \begin{equation}
                \wt{\alpha}_{t} = \left(I_{J} - (\beta_{t}^{E})^{2}\left(\lambda_{t}I_{J} + (\beta_{t}^{E})^{2}\text{Diag}(\sigma^{-2}) \right)^{-1}\sigma^{-2}\right)\hat{\alpha}_{t},
            \end{equation} 
            that is to say the weights used by the algorithm is $w_{t} = \frac{\Sigma^{-2}\tilde{\alpha}_{t}}{\| \Sigma^{-1}\tilde{\alpha}_{t}\|_{2}^{2}}$
        \end{enumerate}
        \subsection{Additional Synthetic Experiments}

            We report here the regret and estimation bias for the in the logisitic and linear setting for the three different regimes of $\frac{\alpha_{0}}{\sigma_{0}}\in\{0.1, 1, 10\}$ as reported in \Secref{sec:experiments}.
            For both the linear and logistic case we choose $K = 10$ and $K_{t} = 60$. For \textsc{GLM-$\varepsilon$-Greedy} and \textsc{GLM-$\varepsilon$-greedy-ALL SAMPLES}, we use different values of $\varepsilon\in \{0.1 T^{-1/3}, 0.1 T^{-1/2}\}$.
            The rewards are sampled from a centered unit variance normal distribution truncated between $[0,20]$.

            \paragraph{Logistic and Linear Experiments.}
             As highlighted in \Secref{sec:experiments}, for a small ratio $\frac{\alpha_{0}}{\sigma_{0}}$, the algorithms  \textsc{GLM-$\varepsilon$-greedy-ALL SAMPLES} or \textsc{GLM-EvalBasedUCB} that use all samples
             suffer from a linear regret compared to our algorithm Alg.~\ref{alg:eps.greedy.glm} for $\varepsilon = \mathcal{O}(T^{-1/3})$. In addition, we observe that
             as the ratio $\frac{\alpha_{0}}{\sigma_{0}}$ increases the correlation effect highlighted in App.~\ref{app:noise.correlation} becomes less and less relevant and \textsc{GLM-EvalBasedUCB} performs particularly well.
            \begin{figure}
                \centering
                \includegraphics[width=0.8\textwidth]{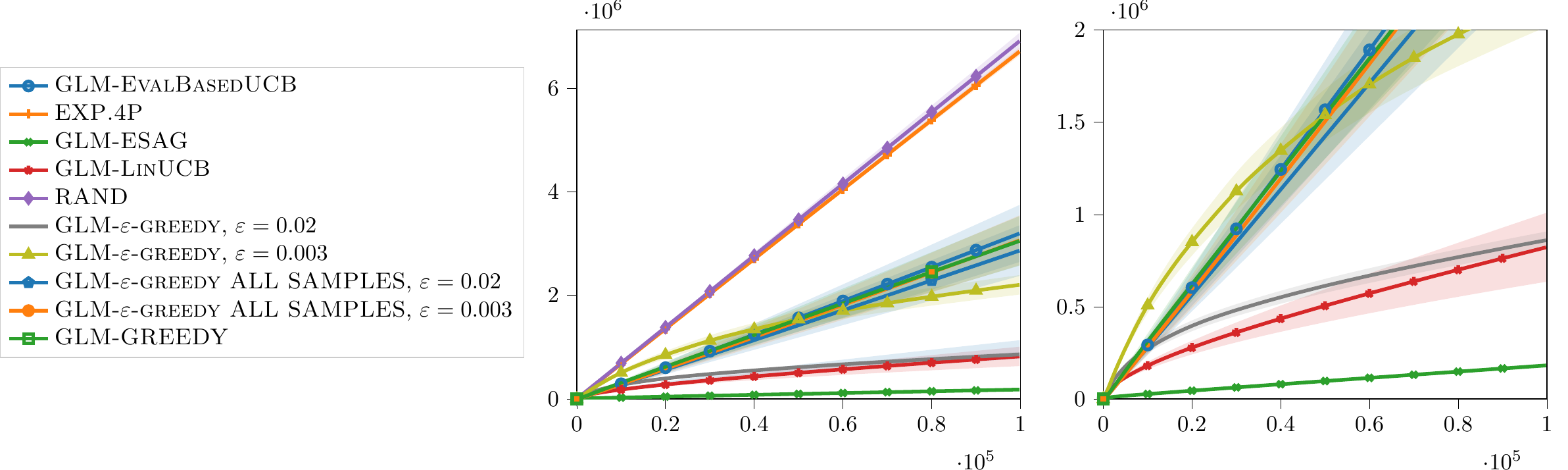}
                \caption{$\frac{\alpha_{0}}{\sigma_{0}} = 0.1$}
                \label{fig:logistic_regret_ratio_0_1}
                \centering
                \includegraphics[width=0.8\textwidth]{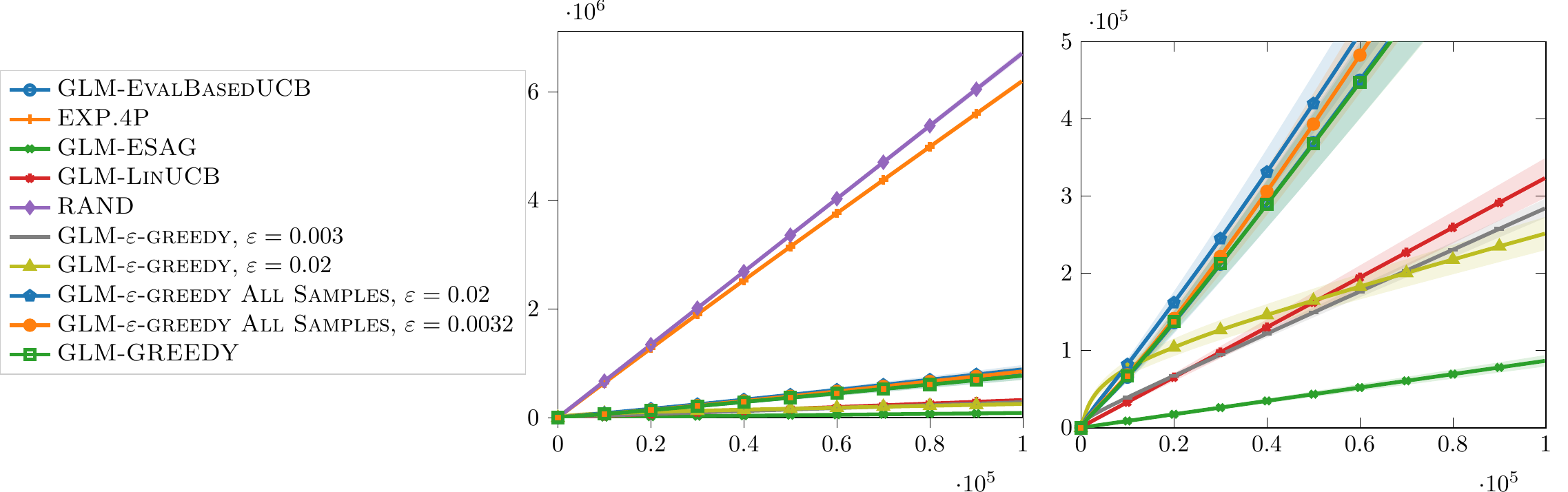}
                \caption{$\frac{\alpha_{0}}{\sigma_{0}} = 1$}
                \label{fig:logistic_regret_ratio_1}

                    \centering
                    \includegraphics[width=0.8\textwidth]{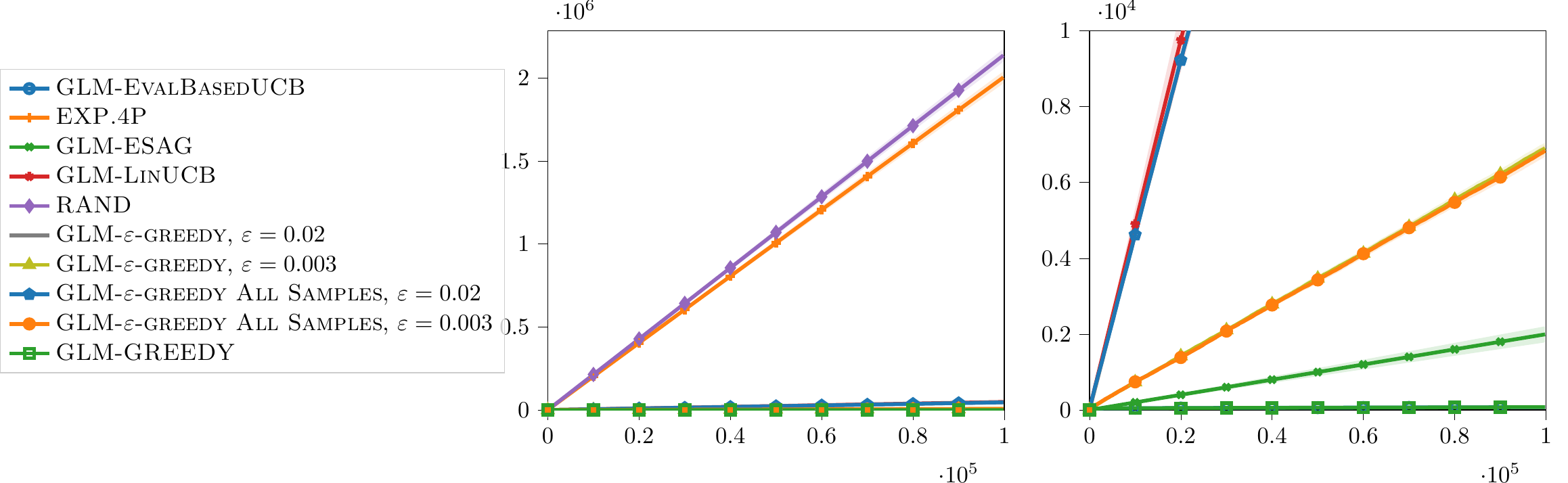}
                    \caption{$\frac{\alpha_{0}}{\sigma_{0}} = 10$}
                    \label{fig:logistic_regret_ratio_10}
                   \caption{Regret (\emph{Left}) and zoomed in regret (\emph{Right}, removing \textsc{RAND} and \textsc{EXP4.P} algorithms) wrt to the oracle $\mathfrak{O}$ in \Secref{sec:general.case} for different values of $\frac{\alpha_{0}}{\sigma_{0}}$}
                   \label{fig:logistic_regret}
           \end{figure}

           \begin{figure}
            \centering
            \includegraphics[width=0.7\textwidth]{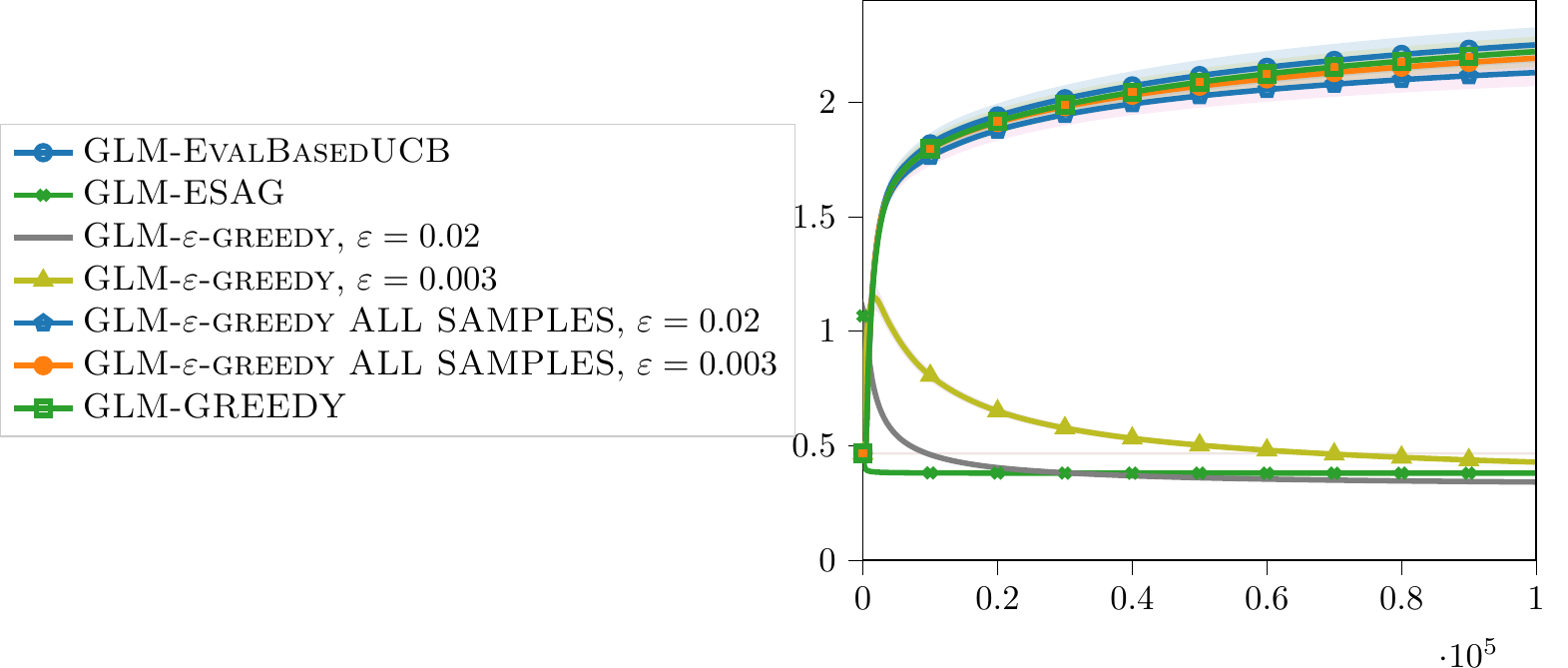}
            \caption{$\frac{\alpha_{0}}{\sigma_{0}} = 0.1$}
            \label{fig:logistic_regret_ratio_0_1}
            \centering
            \includegraphics[width=0.7\textwidth]{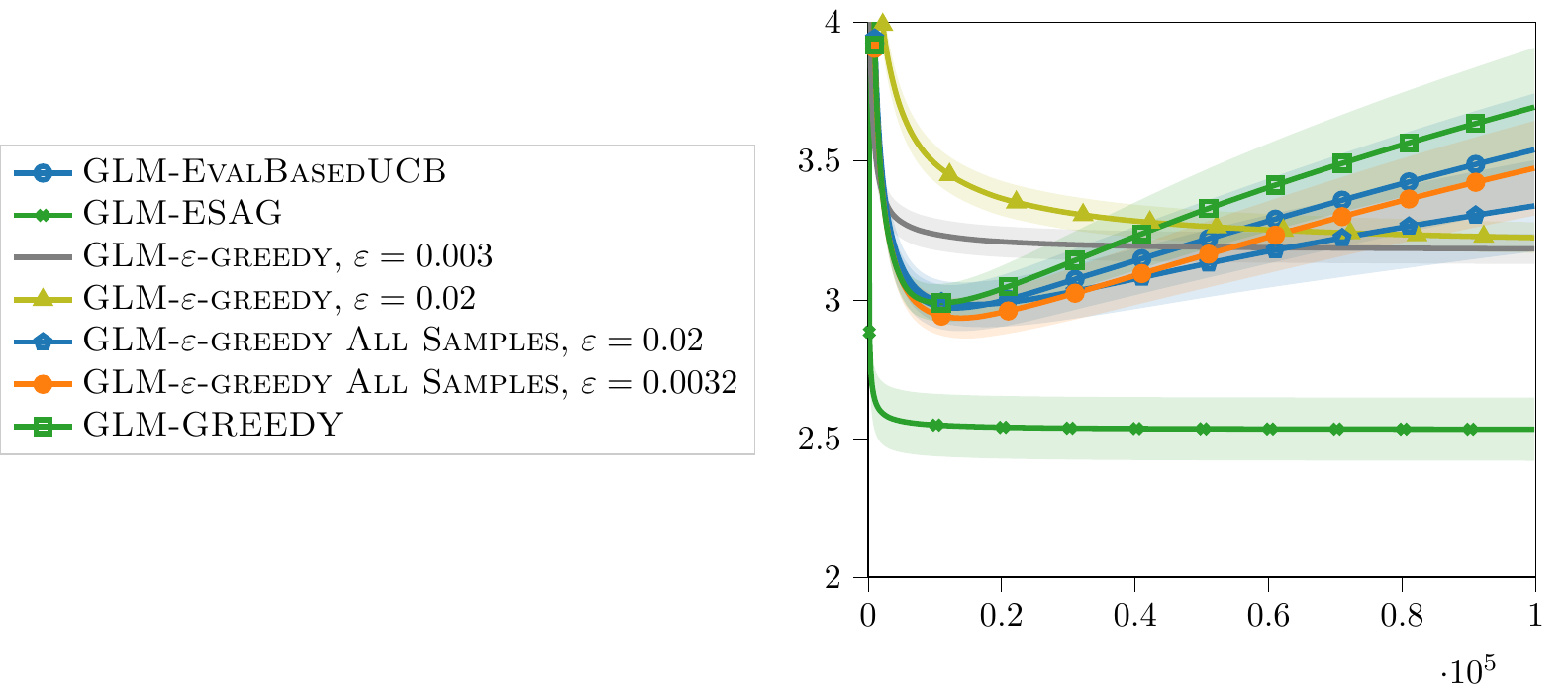}
            \caption{$\frac{\alpha_{0}}{\sigma_{0}} = 1$}
            \label{fig:logistic_regret_ratio_1}

                \centering
                \includegraphics[width=0.7\textwidth]{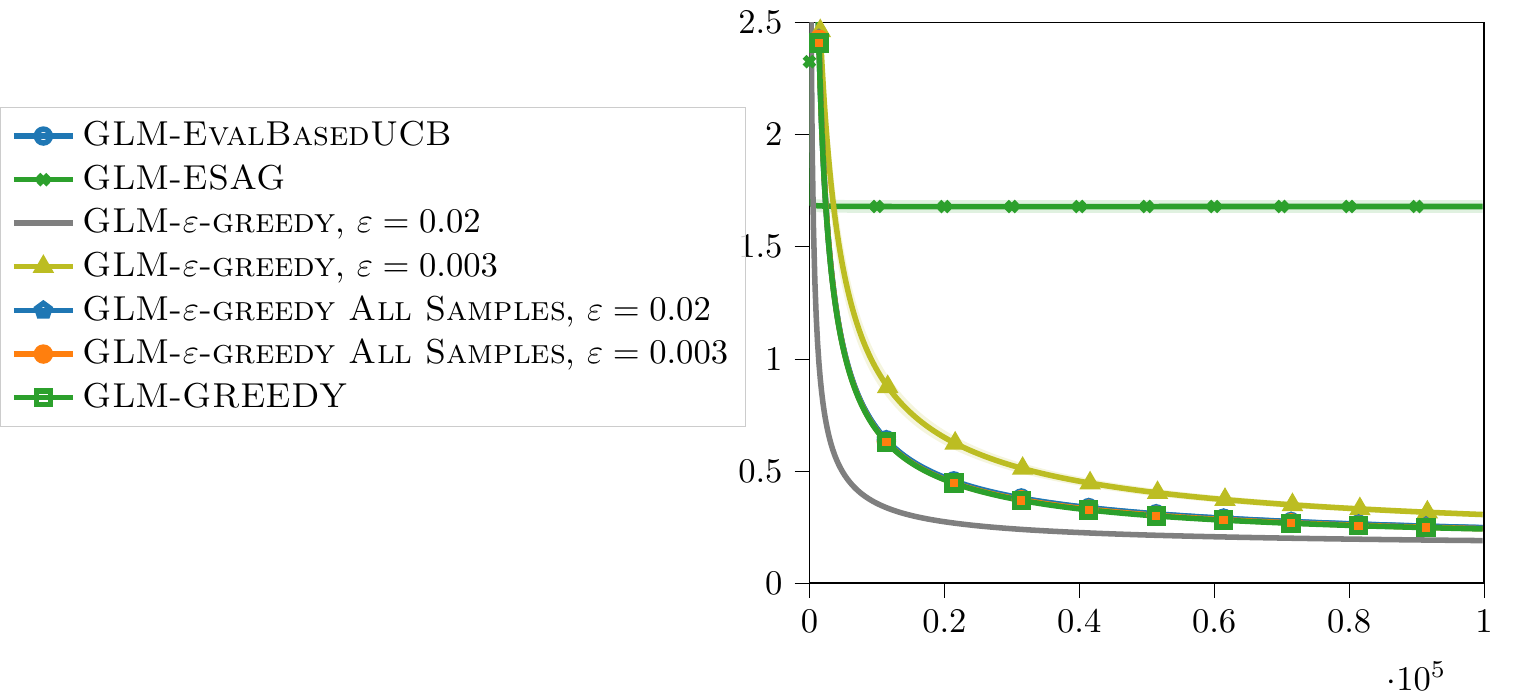}
                \caption{$\frac{\alpha_{0}}{\sigma_{0}} = 10$}
                \label{fig:logistic_regret_ratio_10}
               \caption{Estimation error for the $\alpha$ parameters for different values of $\frac{\alpha_{0}}{\sigma_{0}}$}
               \label{fig:logistic_regret}
       \end{figure}

            \begin{figure}
                \centering
                \begin{subfigure}[b]{0.3\textwidth}
                    \centering
                    \includegraphics[width=\textwidth]{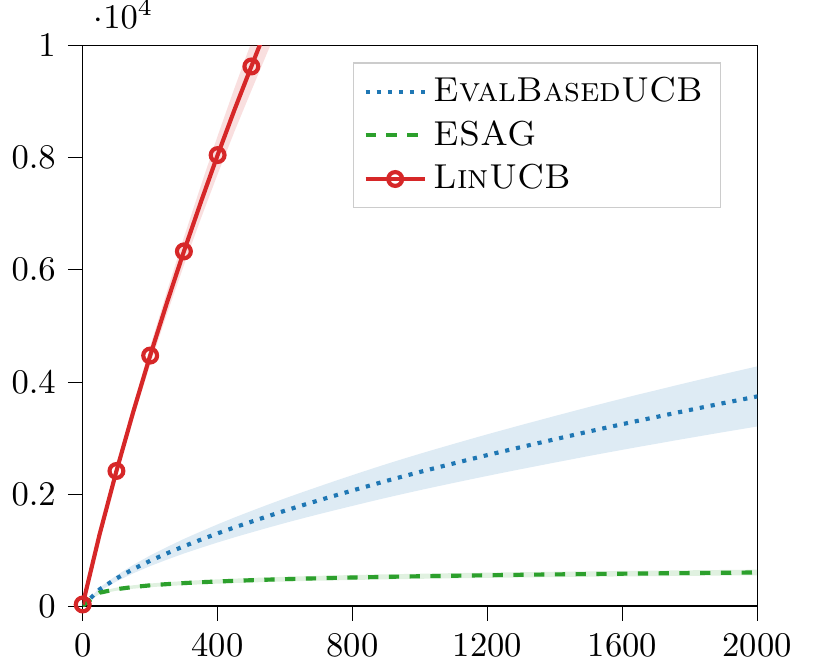}
                    \caption{$\frac{\alpha_{0}}{\sigma_{0}} = 0.1$}
                    \label{fig:linear_regret_ratio_0_1}
                \end{subfigure}
                \hfill
                \begin{subfigure}[b]{0.3\textwidth}
                   \centering
                    \includegraphics[width=\textwidth]{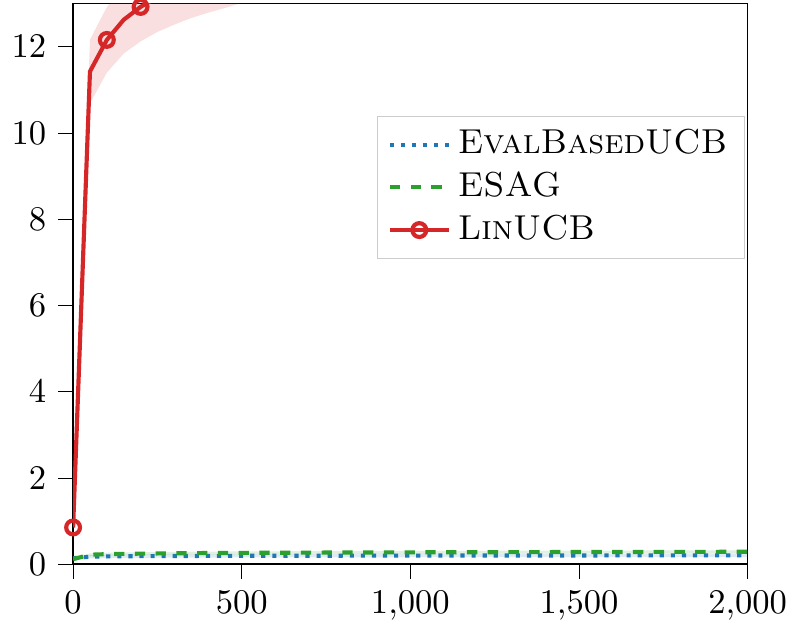}
                    \caption{$\frac{\alpha_{0}}{\sigma_{0}} = 1$}
                    \label{fig:linear_regret_ratio_1}
                \end{subfigure}
                \hfill
                \begin{subfigure}[b]{0.3\textwidth}
                    \centering
                     \includegraphics[width=\textwidth]{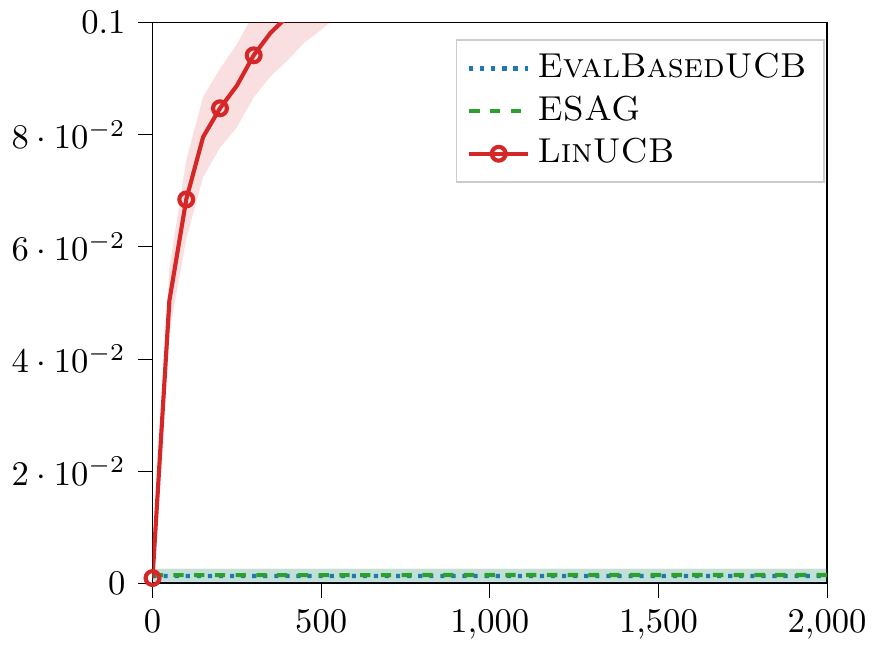}
                    \caption{$\frac{\alpha_{0}}{\sigma_{0}} = 10$}
                    \label{fig:linear_regret_ratio_10}
                \end{subfigure}
                   \caption{Regret w.r.t.\ to the oracle $\mathfrak{O}$ in \Secref{sec:affine} for different values of $\frac{\alpha_{0}}{\sigma_{0}}$}
                   \label{fig:linear_regret}
           \end{figure}

            \begin{figure}[!h]
                \centering
                \includegraphics[width=\linewidth]{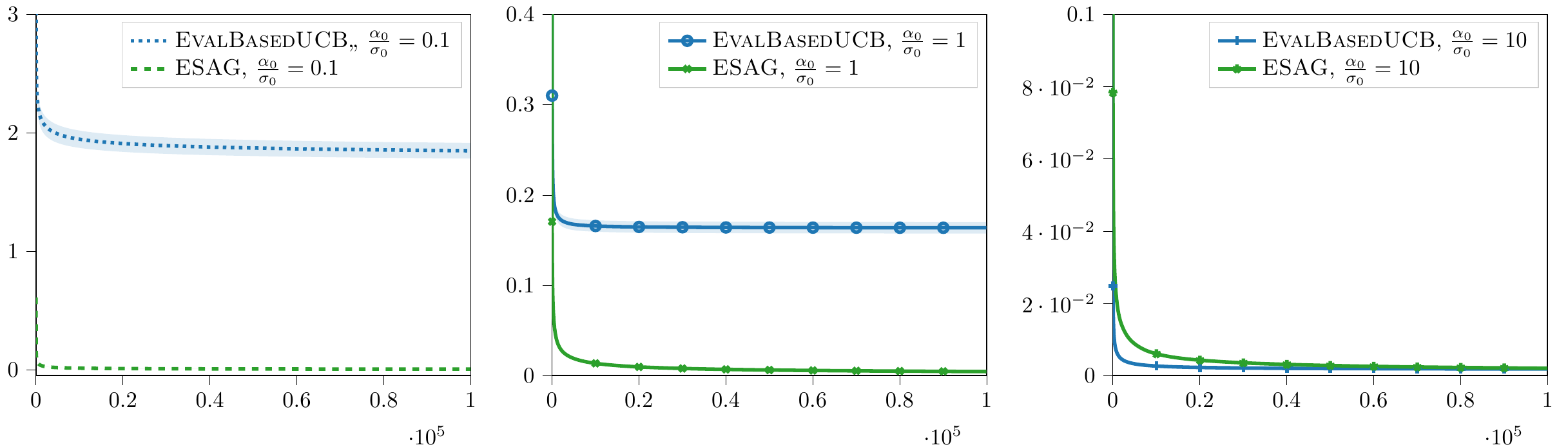}
                \caption{Estimation Bias in the linear case. For \textsc{ESAG}, we compute the error $\| \wt{\alpha}_{t} - \mathbb{E}_{r\sim\nu}(r)\alpha \|$}
            \end{figure}



        \subsection{Second Content Review Prioritization Dataset.}

            In order to validate our observations of \Secref{sec:experiments} on the content review proritization dataset, we consider a second small dataset ($D2$) of content similar to the one used in \Secref{sec:experiments}. Table~$2$ sums up the cumulative reward of each algorithms
            at $T = 2000$ steps selecting $K = 10$ out of $K_{t} = 200$ arms. Similarly to Table~$1$ linear algorithms performs best compared to the logistic algorithms with the same three algorithms getting the best performance.

            \begin{table}\label{table:results_second_d2}
                \small
                    \caption{Cumulative badness on $D2$ for $T = 2000$ steps\vspace{-0.1in}}
                    \label{fig:cumul.badness}
                    \begin{center}
                        \begin{tabular}{ |c|c| }
                        \hline
                        Alg. & Badness 
                        \\
                        \hline
                        \hline
                        \textsc{Rand} & $60025.8$ 
                        \\
                        \textsc{GLM-$\varepsilon$-greedy} & $105965.8$ 
                        \\
                        \textsc{GLM-$\varepsilon$-greedy-all} & $105499.1$ 
                        \\
                        \textsc{GLM-EvalBasedUCB} & $105545.4$
                        \\
                        \textsc{GLM-LinUCB} & $60347.4$ 
                        \\
                        \textsc{GLM-ESAG} & $106237.5$
                        \\
                        \textsc{GLM-Greedy} & $105264.3$  
                        \\
                        \textsc{Exp4.P} & $88979.4$ 
                        \\
                        \textsc{EvalBasedUCB} & $\mathbf{138230.1}$ 
                        \\
                        \textsc{LinUCB} & $\mathbf{144009.4}$ 
                        \\
                        \textsc{ESAG} &  $\mathbf{141172.1}$ 
                        \\
                        \textsc{Greedy} & $106195.4$ 
                        \\
                        \hline
                        \end{tabular}
                        \vspace{-0.2in}
                    \end{center}
                \end{table}

\subsection{Jester Experiments.}
We consider the Jester dataset~\citep{GoldbergRGP01}, which consists of joke ratings in a continuous range from $-10$ to $10$ for a total of $100$ jokes and $73421$ users. We consider the same set of users and jokes as in~\citep{RiquelmeTS18}. For a subset of $40$ jokes and $19181$ users rating all these $40$ jokes, we build evaluators as follows.
We fit $7$ \emph{contextual} models to predict the ratings from features --obtained as concatenation of user and joke features-- extracted via a low-rank factorization of the full matrix (of dimension $36$). Ratings are normalized and transformed through the logarithmic function. We trained the following models:\footnote{We used the implementations provided by scikit-learn~\citep{scikitlearn}.}
\begin{itemize}
\item rf5-s18: a random forest with 5 trees trained over 5 randomly selected features;
\item dt-s50: a decision tree with max depth $10$ trained over 50 randomly selected features;
\item linear: a linear model with intercept;
\item adaboost: an implementation of AdaBoost.R2 with 20 trees;
\item mlp: a neural network with two hidden layers ($[512,128]$) and ReLu activation;
\item adaboost20-s60: AdaBoost.R2 with 20 trees trained over 60 randomly selected features.
\end{itemize}
We use the predictions of each model as inputs for our algorithms.
As rewards, we use the real ratings and we add to them zero-mean Gaussian noise (standard deviation is $0.5$).
The resulting problem is thus misspecified since the evaluators are not perfect, as shown in Tab.~\ref{tab:jester.eval.stats}.

We estimate the parameters of the algorithms by computing statistics about the relationship between true ratings and predicted ones. For $\varepsilon$-greedy, we use the value $T^{1/3}$.
From Tab.~\ref{tab:jester.results}, we can see that linear algorithms outperform the logistic algorithms. \textsc{EvalBasedUCB} and \text{ESAG} behave similarly and perform better than \textsc{LinUCB}.

\begin{figure}[t]
\centering
\includegraphics[width=.32\textwidth]{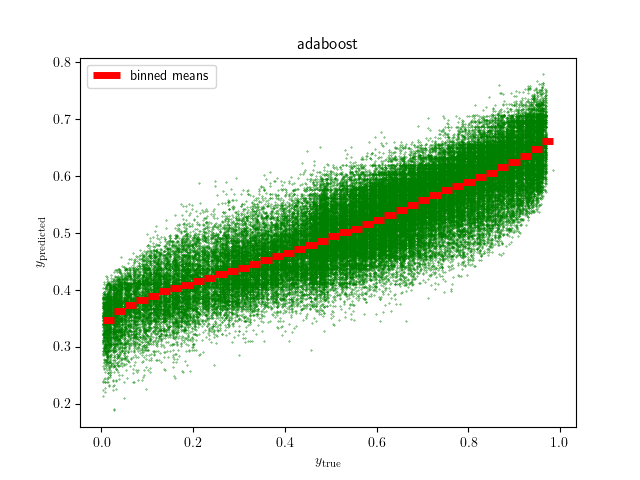}
\includegraphics[width=.32\textwidth]{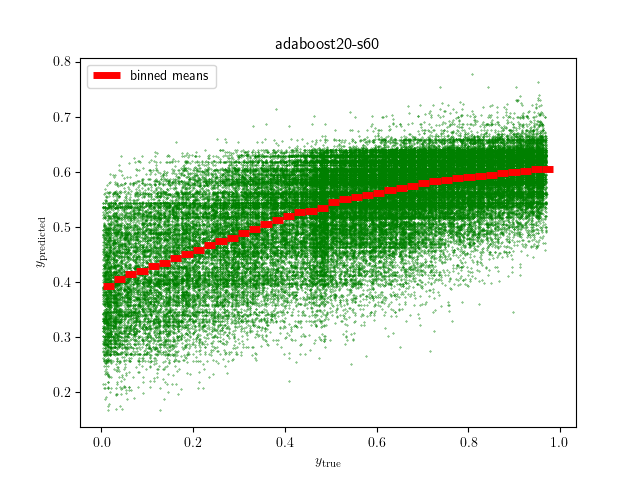}
\includegraphics[width=.32\textwidth]{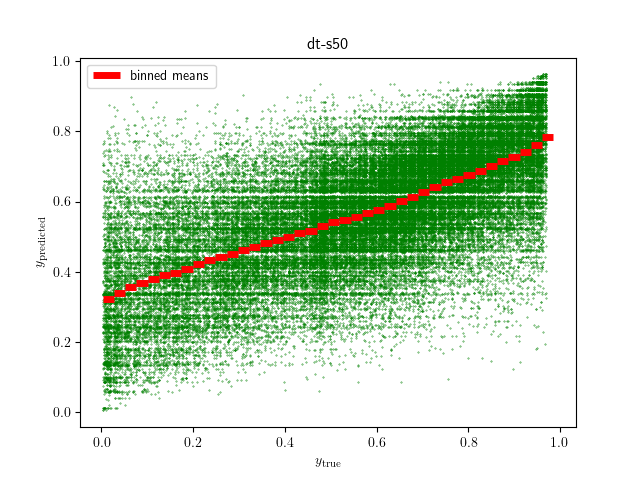}
\includegraphics[width=.32\textwidth]{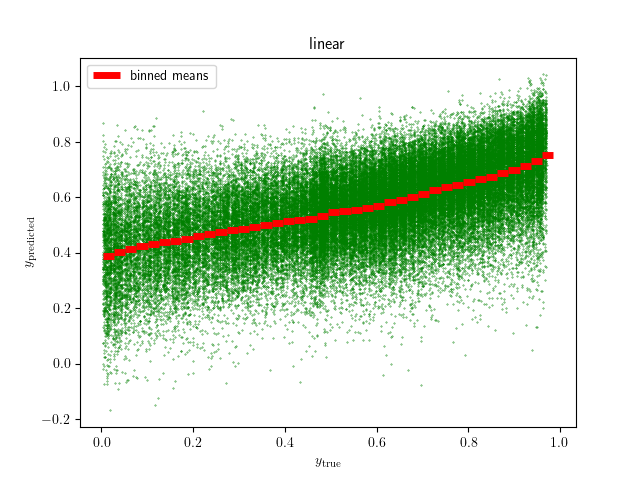}
\includegraphics[width=.32\textwidth]{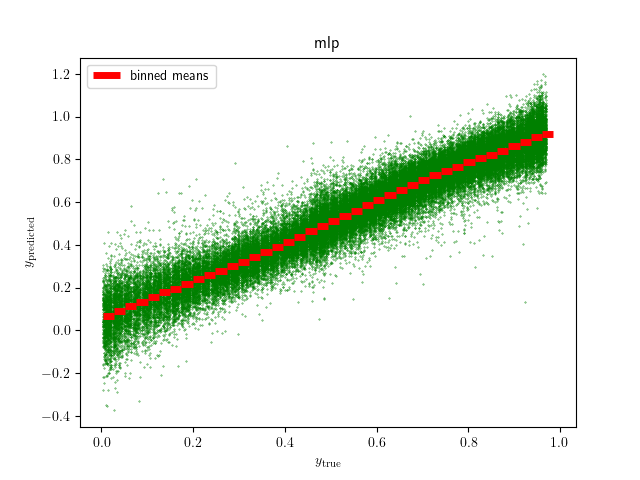}
\includegraphics[width=.32\textwidth]{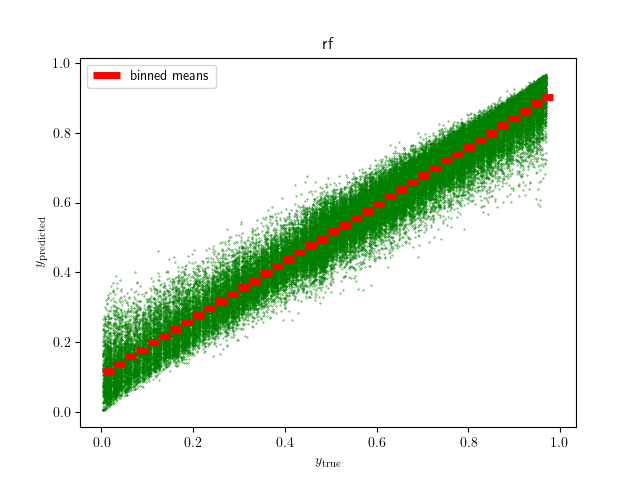}
\includegraphics[width=.32\textwidth]{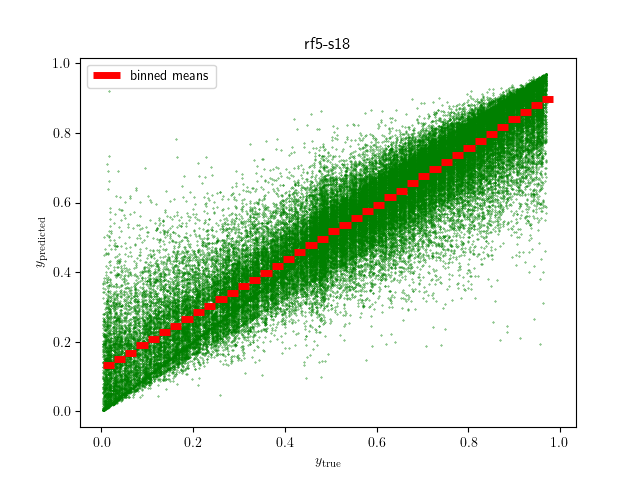}
\caption{We report the predictions of the evaluators as a function of the true ratings (for clarity we reported only $10$\% of the samples). We also report the average value over a discretization of the ratings into $40$ bins.}
\label{fig:jester.evaluators}
\end{figure}

\begin{table}[t]
    \centering
\scriptsize
    \begin{minipage}{.45\textwidth}
    \centering
    \caption{Statistics about the evaluators. We report the R2 score of the evaluators w.r.t.\ the true ratings. The column $\wh{\sigma}$ is the maximum estimate standard deviation.}
\begin{tabular}{lcc}
\toprule
          algo &    R2 &  $\wh{\sigma}$\\
\midrule
       rf5-s18 & 0.864 &          0.115 \\
        dt-s50 & 0.448 &          0.174 \\
        linear & 0.348 &          0.167 \\
      adaboost & 0.440 &          0.048 \\
           mlp & 0.917 &          0.110 \\
            rf & 0.936 &          0.064 \\
adaboost20-s60 & 0.309 &          0.086 \\
\bottomrule
\end{tabular}
    \label{tab:jester.eval.stats}

    \end{minipage}\hfill
    \begin{minipage}{.45\textwidth}
\caption{Average reward after $T=6000$ steps in the Jester experiment averaged over $50$ runs.}
\centering
\scriptsize
\begin{tabular}{lr}
\toprule
                Algorithm &    \makecell{Average Reward\\$\frac{1}{T}\sum_{i=1}^T r_i$}  \\
\midrule
  \textsc{EvalBasedUCB} & 0.87\\
  \textsc{ESAG} & 0.86\\
    \textsc{LinUCB} & 0.85\\
       \textsc{GLM-EvalBasedUCB} & 0.69\\
       \textsc{GLM-Greedy} & 0.69\\
       \textsc{GLM-ESAG}esaglogistic & 0.69\\
       \textsc{GLM-$\varepsilon$-greedy} & 0.69\\
       \textsc{RAND} & 0.57\\
       \textsc{GLM-LinUCB} & 0.51\\
\bottomrule
\end{tabular}
\label{tab:jester.results}
    \end{minipage}
\end{table}

\end{document}